\documentclass{article}

\usepackage{graphicx}
\usepackage{amsmath}
\usepackage{amsfonts}
\usepackage{amssymb}
\usepackage{caption}
\usepackage{subcaption}
\usepackage{siunitx}
\usepackage[T1]{fontenc}
\usepackage{cite}
\usepackage{booktabs}
\usepackage[normalem]{ulem}
\usepackage[dvipsnames]{xcolor}
\usepackage{cleveref}

\usepackage{amsthm}

\newtheorem{theorem}{Theorem}[section] 
\newtheorem{lemma}[theorem]{Lemma}    
\newtheorem{corollary}[theorem]{Corollary}
\newtheorem{assumption}{Assumption}

\theoremstyle{definition}
\newtheorem{definition}{Definition}[section]

\theoremstyle{remark}
\newtheorem*{remark}{Remark} % The asterisk (*) makes it unnumbered

\usepackage{mathtools}
\DeclarePairedDelimiter{\norm}{\lVert}{\rVert}

\usepackage{algorithm}
\usepackage{algpseudocode}

\title{Global Convergence of DGM and PINN Algorithms for Solving Nonlinear PDEs}
\author{Justin Sirignano\footnote{Mathematical Institute, University of Oxford}, Konstantinos Spiliopoulos\footnote{Department of Mathematics and Statistics, Boston University}, and Samuel Cohen\footnote{Mathematical Institute, University of Oxford} }
\date{June 2026}

\begin{document}

\maketitle

\begin{abstract}
The Deep Galerkin Method (DGM) and Physics-Informed Neural Networks (PINNs) have become widely used methods for solving partial differential equations (PDEs) in the rapidly growing field of scientific machine learning. In these methods, a neural network is trained to approximate the PDE solution by using (stochastic) gradient descent to minimize the PDE residual of the neural network. Due to the non-convexity of the PDE residual objective function, the trained neural network may, in principle, only converge to a local minimizer of the objective function (which would not be a solution of the PDE). Therefore, there is a longstanding question regarding the mathematical foundations of these algorithms, and it is highly valuable to establish that the trained neural network will converge to the PDE solution. In this paper, we consider a class of semilinear PDEs with nonlinearities in the solution and its first derivative. For this class of PDEs, we prove that neural networks trained with gradient descent to minimize the PDE residual objective function will converge to the PDE solution as the network width and training time $\rightarrow \infty$. 
\end{abstract}

\section{Introduction}

The Deep Galerkin Method (DGM, \cite{DGM}) and Physics-Informed Neural Networks (PINNs, \cite{PINNs}) methods solve Partial Differential Equations (PDEs) using neural networks. Both methods share a common objective function, the PDE residual of the neural network, which is used to train the neural network to approximate the solution of the PDE. Specifically, the neural network parameters are trained with gradient descent or stochastic gradient descent (SGD) to minimize the PDE residual. These methods have become widely used in scientific machine learning across many diverse fields including fluid dynamics, physics, financial mathematics, biology, and optimal control.  

The PDE residual of a neural network is a non-convex objective function (of the trainable parameters) and therefore, in principle, the trained neural network may converge to a \emph{local minimizer} which is not the PDE solution. This would of course be a major limitation to using neural networks as a numerical method for PDE applications. It has been proven that DGM/PINNs trained with gradient descent will globally converge to the solution of linear elliptic PDEs \cite{DGMLinearGlobalConvergence}. The mathematical proof in \cite{DGMLinearGlobalConvergence} cannot however be applied to nonlinear PDEs. In this paper, for a class of nonlinear PDEs, we prove global convergence of DGM/PINNs trained with gradient descent to the PDE solution. The proof requires the development of a new mathematical approach different from \cite{DGMLinearGlobalConvergence} to address the non-trivial technical challenges introduced by the nonlinearity of the PDE. 

We  consider a class of semilinear PDEs
\begin{eqnarray}
\mathcal{A}_u u &=& g, \phantom{......} x \in \Omega, \notag \\
u &=& 0, \phantom{......} x \in \partial \Omega,
\label{PDE0}
\end{eqnarray}
%$a_{ij}(x,u) \frac{\partial^2}{\partial x_i x_j} + b_i(x,u) \frac{\partial}{\partial x_i} + c(x,u)$.
where $\Omega \subset \mathbb{R}^n$, $\mathcal{A}_u$ is a nonlinear operator of the form $\mathcal{A}_u = \mathcal{L} + f(u, u_x)$, $f(v,w): \mathbb{R} \times \mathbb{R}^n \rightarrow \mathbb{R}$, and $\mathcal{L}$ is the uniformly linear elliptic operator
\begin{eqnarray*}
\mathcal{L} u = -\sum_{i,j=1}^n \frac{\partial}{\partial x_i} [ a_{ij}(x) \frac{\partial u}{\partial x_j}]+ \sum_{i=1}^n c_i(x) \frac{\partial u}{\partial x_i} - d(x) u. 
\end{eqnarray*}

We will approximate the solution $u(x)$ to the PDE (\ref{PDE0}) with a neural network
\begin{eqnarray}
Q^N(x; \theta) = \eta(x) N^{- \beta} \sum_{i=1}^N c^i \sigma( w^i \cdot x + b^i),\label{Eq:NeuralNetwork}
\end{eqnarray}
where $\eta(x)$ is a smooth function which vanishes on the boundary $\partial \Omega$ but has positive normal derivatives on the boundary, $N^{-\beta}$ is a normalization factor with $\frac{1}{2} < \beta < 1$, $\sigma(\cdot)$ is a nonlinear activation function, and the parameters which will be trained are $\theta = (c^i, w^i, b^i)_{i=1}^N$. Due to the function $\eta(x)$, the neural network $Q^N(x; \theta)$ automatically satisfies the Dirichlet boundary conditions of the PDE (\ref{PDE0}). $H^i = \sigma( w^i \cdot x + b^i)$ is the $i$-th hidden unit of the neural network $Q^N(x; \theta)$. $N$ is the total number of hidden units and is often referred to as the ``network width". The related assumptions are stated in Assumption \ref{NeuralNetworkAssumptions}.

The neural network $Q^N(x; \theta)$ is trained to minimize the objective function
\begin{eqnarray}
J^N(\theta) = \frac{1}{2} \int_{\Omega} ( \mathcal{A}_{Q^N } Q^N(x; \theta) - g(x) )^2 \mu(dx)
\label{PDEResidualObjectiveFunction0000}
\end{eqnarray}
using clipped gradient descent (see Section \ref{S: PreliminaryCalculations}), where we denote for convenience $\mathcal{A}_{Q^N } Q^N(x; \theta)=[\mathcal{A}_{Q^N } Q^N](x; \theta)$. The objective function (\ref{PDEResidualObjectiveFunction0000}) is the PDE residual of the neural network $Q^N(x; \theta)$. Since the objective function $J^N(\theta)$ is non-convex in the neural network parameters $\theta$, the gradient descent algorithm for training $\theta$ may potentially converge to a sub-optimal local minimizer of $J^N(\theta)$. A local minimizer would of course not be a solution of the PDE (\ref{PDE0}). We will prove that, as the number of hidden units (i.e., the network width) $N \rightarrow \infty$ and the training time $t \rightarrow \infty$, the neural network will converge to a global minimizer which is the solution of the PDE (\ref{PDE0}). 

$\mu(dx)$ in the objective function (\ref{PDEResidualObjectiveFunction0000}) is a probability distribution, which can be sampled from in order to perform SGD-based training. In the original PINNs algorithm, $\mu$ is fixed before training begins, which is suitable for low-dimensional PDEs. The DGM algorithm randomly samples a minibatch from $\mu$ at each stochastic gradient descent step to calculate an unbiased stochastic estimate of $\nabla_{\theta} J^N(\theta) $ for high-dimensional PDEs. The random sampling of spatial points from $\mu$ at each stochastic gradient descent step yields a mesh-free algorithm which addresses the curse-of-dimensionality and is one of the main distinctions between the PINN and DGM algorithms. In this paper, we will focus on the core optimization question of whether a neural network trained with gradient descent to minimize (\ref{PDEResidualObjectiveFunction0000}) will converge to the PDE solution; further extensions to the stochastic gradient descent-variant are left for future research. 

Due to the universal approximation properties of neural networks \cite{hornik1991approximation}, it is known that \emph{there exists} a neural network which can approximate the solution to a given PDE (if the PDE solution is a function in an appropriate Sobolev space). Universal approximation results for PDEs have been proven in \cite{mishra2023estimates}, \cite{de2024error}, \cite{shin2020convergence}, \cite{doumeche2023convergence}, \cite{cohen1}, \cite{jentzen1}, \cite{jentzen2}, and \cite{cohen2}. These results prove that a neural network \emph{exists} which is an arbitrarily-accurate approximation to the PDE solution, which is an important mathematical result. However, the universal approximation theory literature does not study whether a neural network \emph{trained with gradient descent} will converge to the PDE solution. Convergence of neural networks trained with gradient descent to minimize PDE residual objective functions has been studied for \emph{linear} PDEs in \cite{DGMLinearGlobalConvergence}, \cite{lu2022sobolev}, and \cite{luo2020two}.

Analyzing the optimization of (\ref{PDEResidualObjectiveFunction0000}) must address several challenges. First and foremost, the objective function is non-convex, and therefore in principle the trained neural network may converge to a local minimizer which is not the solution of the PDE. In typical supervised learning problems (e.g., least squares regression), as the number of hidden units $N \rightarrow \infty$, the objective function will ``convexify", leading to a limit problem which is convex in the neural network function and allowing one to prove global convergence \cite{JacotGabrielHongler}. In particular, the training dynamics of the limit neural network in \cite{JacotGabrielHongler} will satisfy a linear ODE. However, for semilinear PDEs, the limit neural network's training dynamics (in the limit $N \rightarrow \infty$) satisfy a non-local nonlinear PDE. Therefore, it would \emph{a priori} seem that the neural network, even in the over-parametrized limit of $N \rightarrow \infty$, may converge to a local minimizer.  

The analysis of the limit neural network training evolution has to address several fundamental challenges. The limit neural network satisfies a nonlinear, non-local PDE involving a nonlinear kernel function $S_{Q(t)}(x,y)$; see (\ref{Eq:S_kernel}) in Section \ref{S: PreliminaryCalculations}. $S_{Q(t)}(x,y)$ is a nonlinear function of the limit neural network $Q(t,x)$ and its derivatives $Q_x(t,x)$, where we will define for notational convenience $Q(t) = Q(t,\cdot)$. Due to this dependence, the kernel operator $S_{Q(t)} f = \int S_{Q(t)}(\cdot,y) f(y) dy$ is a function of time $t$ and therefore its eigenvalues and eigenfunctions also change over time. Consequently, it is not clear if we can study a projection of the PDE residual on a fixed eigenfunction basis (the method used for the mathematical proof in the paper \cite{DGMLinearGlobalConvergence}). \cite{DGMLinearGlobalConvergence} only considered linear PDEs and the limit neural network evolution satisfied a linear PDE. Importantly, for linear elliptic PDEs in \cite{DGMLinearGlobalConvergence}, the limit kernel operator is a constant $S(x,y)$ which does not depend upon the solution $Q(t)$, allowing for the decomposition of the limit PDE dynamics via the fixed eigenfunctions of $S$. Projection onto time-dependent eigenfunctions in the nonlinear case is challenging. Furthermore, the positive eigenvalues of $S_{Q(t)}$ do not have a spectral gap (they accumulate at zero) and $S_{Q(t)}$ may also have zero eigenvalues. This makes it challenging to prove convergence as \emph{a priori} it would seem the zero eigenvalues and eigenvalues accumulating to zero would both lead to a local minimizer (and not a global minimizer). Finally, any attempt at analysis requires establishing appropriate uniform-in-time bounds for $Q(t,x)$ and the PDE residual $\mathcal{A}_{Q(t)} Q(t,x) - g$, which is challenging due to their satisfying a non-local nonlinear PDE. 

This paper develops a new mathematical approach which is able to prove global convergence of the DGM/PINNs algorithm for semilinear PDEs. We first study the evolution of the objective function during training. We establish that when training the  parameters $\theta(t)$ with clipped gradient descent, the objective function $J^{N}(\theta(t))$ will converge to a limit objective function $J(t)=J(Q(t))$ as $N \rightarrow \infty$ and the limit neural network's training trajectory satisfies a nonlinear, non-local PDE (see Section \ref{S: PreliminaryCalculations}). We then prove in Theorem \ref{MainTheoremPDEResidualGlobalConvergence} that the limit objective function  must converge to zero (a global minimizer) as the training time $t \rightarrow \infty$. The first step in our convergence analysis is to prove that the limit training dynamics (which satisfy the nonlinear PDE) converge to a fixed point by establishing the monotonicity of the PDE residual evolution, uniform \emph{a priori} bounds on the neural network in $H^2$, and a uniform regularity bound for the RHS of $\frac{dJ}{dt}$; see (\ref{ObjEvolution}). We analyze this fixed point, which we prove is a global minimizer of the objective function. The trained neural network at the fixed point furthermore has zero PDE residual. The final step is to prove that the neural network converges to the solution of the fixed point equation as training time $t \rightarrow \infty$. Using the \emph{a priori} bounds, we prove that, for every sequence of times $t_n \rightarrow \infty$, there is a convergent sub-sequence which must converge to a solution of the fixed point equation. We then prove that the solution of this fixed point is unique, has zero PDE residual, and is a solution to the PDE (\ref{PDE0}). Consequently, the trained neural network converges to the solution of the PDE. 

The rest of the paper is organized as follows. Our assumptions and main results are presented in Section \ref{S:MainResults}. Section \ref{S: PreliminaryCalculations} contains some preliminary calculations related to the clipped gradient descent algorithm which trains the neural network to minimize the PDE residual objective function. Section \ref{S:FixedPoints} establishes that fixed points of the neural network's limit PDE are weak solutions in the appropriate space to the PDE (\ref{PDE0}). In Section \ref{S:GlobalConvergenceNN_LimitPDE} we prove that the PDE residual converges to zero as first $N\rightarrow\infty$ and then $t\rightarrow\infty$. That is, the limit objective function $J(t)$ converges to zero as $t\rightarrow\infty$ and that the neural network converges to a global minimizer which is the solution of the PDE (\ref{PDE0}). Section \ref{S:ConvergenceLimitPDE_LargeN} proves the large $N$ limit of the trained neural network. We conclude with Section \ref{S:GlobalConvergenceFinitN_large_t} where we establish uniform-in-time error bounds on the objective function $J^{N}(\theta(t))$ for the PDE residual of the finite neural network, which is possible due to the monotonicity of $J^{N}(\theta(t))$. In particular, we are able to show that $\lim_{N \rightarrow \infty} \lim_{t \rightarrow \infty} J^{N}(\theta(t)) =  \lim_{t \rightarrow \infty} \lim_{N \rightarrow \infty} J^{N}(\theta(t)) = 0$.

\section{Assumptions and Main Results}\label{S:MainResults}

In this section, we state our assumptions and present the main results of the paper.

\subsection{Main Assumptions}
We first impose proper assumptions on the nonlinearity $f(v,w)$ and on the PDE (\ref{PDE0}).   
\begin{assumption} \label{PDEassumptions}We assume that $f(v,w)$ and its first two derivatives are uniformly bounded (and measurable functions).  $\mathcal{L}$ is a uniformly linear elliptic operator with the dissipativity condition that its eigenvalues satisfy $\inf_n \lambda_n > 0$ (and therefore the coercivity condition holds for $\mathcal{L}$). Furthermore, the coefficients of $\mathcal{L}$ are smooth. $\Omega \subset \mathbb{R}^n$ is a compact domain with a smooth boundary. The function $g$ is bounded on $\Omega$. 
\end{assumption}

The smoothness assumption on the coefficients of the operator $\mathcal{L}$ can be relaxed, but we do not do so here for presentation purposes.

\begin{assumption} \label{PDEassumptions2}
The PDE (\ref{PDE0}) has a unique weak solution in $H_0^1$.
\end{assumption}

Define the operator $D_Q h = \mathcal{L} h + f_v(Q, Q_x) h + \sum_{i=1}^n f_{w_i}(Q, Q_x) \frac{\partial h}{\partial x_i}$. In the following assumption, we will assume that the operator $D_Q: H_0^1 \rightarrow L^2$ is surjective. 

\begin{assumption} \label{PDEassumptions3Surjectivity}
For any $R \in L^2$ and $Q \in H_0^1$, the PDE
\begin{eqnarray}
D_Q h &=& R, \phantom{....} x \in \Omega, \notag \\
h &=& 0, \phantom{....} x \in \partial \Omega,
\label{SurjectivePDE}
\end{eqnarray}
has a solution $h \in H_{(0)}^2$,
%has a solution $h \in H_0^1$.
\end{assumption}
where $H_{(0)}^2 = H^2 \cap H_0^1$. Since $h \in H^2$, $h$ will satisfy $D_Q h = R$ almost everywhere. A sufficient condition for Assumption \ref{PDEassumptions3Surjectivity} to hold is Assumption \ref{PDEassumptions} and the additional assumption that $f_v(v,w) \geq 0$. No additional assumptions on $f_w(v,w)$ are necessary. Under these assumptions, due to Theorems 8.9 and 8.12 in \cite{GT}, there exists a (unique) solution $h \in H^2_{(0)}$ to (\ref{SurjectivePDE}).

As discussed in the introduction section we will approximate the solution $u(x)$ to the PDE (\ref{PDE0}) with a neural network
\begin{eqnarray*}
Q^N(x; \theta) = \eta(x) N^{- \beta} \sum_{i=1}^N c^i \sigma( w^i \cdot x + b^i),
\end{eqnarray*}
where $\eta(x)$ is a smooth function which vanishes on the boundary $\partial \Omega$ but has positive normal derivatives on the boundary and  $N^{-\beta}$ is a normalization factor with $\frac{1}{2} < \beta < 1$. 
For the trainable parameters $\theta = (c^i, w^i, b^i)_{i=1}^N$ and $\eta(x)$ we make the following assumption.
\begin{assumption} \label{NeuralNetworkAssumptions}
The neural network parameters $(c^i, w^i, b^i)$ are i.i.d. randomly initialized with distribution $\mu_0(dc, dw, db)$. The random variables $w^i, b^i$ have full support on $\mathbb{R}^{n+1}$ where $w^i$ has bounded moments, $c^i$ are mean-zero, bounded random variables and $c^i, w^i,$ and $b^i$ are independent. The neural network activation function $\sigma$ is bounded and has at least three bounded derivatives. $\sigma$ is a discriminatory function (such as a sigmoid or tanh function). $\eta \in C^3_b$ (with bounded derivatives) where $\eta > 0$ in the interior of $\Omega$ and $\eta = 0$ for $ x \in \partial \Omega$. In addition, $\nabla \eta \cdot \mathbf{n}_x \neq 0$ for $x \in \partial \Omega$ where $\mathbf{n}_x$ is the outward unit normal vector at $x$. 
\end{assumption}

Finally for the measure $\mu$, which is used in the objective function (\ref{PDEResidualObjectiveFunction0000}) we have Assumption \ref{Ass:Mu_measure}. 
\begin{assumption}\label{Ass:Mu_measure}
Assume that $\mu(dx)$ is a uniform probability measure on $\Omega$. 
\end{assumption}

We note that Assumption \ref{Ass:Mu_measure} is mainly posed for convenience and concreteness, and can be relaxed to any positive probability density function on $\Omega$ that is bounded away from zero in $\Omega$.  

\subsection{Summary of Main Results}

Let us now discuss the main results of the paper. Theorem \ref{ConvergenceNtoLimitPDE} proves that the neural network $Q^N(y; \theta(t))$ trained with clipped gradient descent (\ref{ActualClippedGradientDescentTrainingAlgorithm}) will converge as $N\rightarrow\infty$ to the solution of the nonlocal, nonlinear PDE
\begin{eqnarray*}
\frac{\partial Q(t, x )}{\partial t} &=& - \alpha  \int_{\Omega} U_{Q(t)}(x,y)  \big{(} \mathcal{A}_{Q(t) } Q(t, y ) - g(y) \big{)} \mu(dy),
\label{QlimitEvolutionSummaryofMainResults}
\end{eqnarray*}
 with the initial condition $Q(0,x) = 0$ and the kernel function 
\begin{eqnarray*}
U_{Q(t)}(x,y) &=& \bigg{\langle}  \nabla_{\theta}[ \eta(x) c \sigma (w x + b) ] \cdot  D_{Q(t)} [ \nabla_{\theta} ( \eta(y) c \sigma (w y + b) ) ] , \mu_0 \bigg{\rangle}.
\end{eqnarray*}
In addition, due to Theorem \ref{ConvergenceNtoLimitPDE}, the objective function $J^N(\theta(t))$ will converge as $N \rightarrow \infty$ to the objective function
\begin{eqnarray*}
J(t) = \frac{1}{2} \int_{\Omega} ( \mathcal{A}_{Q(t) } Q(t,x) - g(x) )^2 \mu(dx),
\end{eqnarray*}
which is the PDE residual of the limit neural network $Q(t,x)$. In the above equations, we have used the notation $\mathcal{A}_{Q(t) } Q(t, y )=[\mathcal{A}_{Q(t) } Q](t, y)$. Similar notation will be used for $D_{Q(t)} g(x,\theta) = [D_{Q(t)} g ](x,\theta)$.

The limit equation stated above for $Q(t,x)$ governs the evolution of the (limit) neural network output as a function of training time $t$. Theorem \ref{ConvergenceNtoLimitPDE} establishes that $\partial_{x}^i Q^N(t,x)$ converges almost surely to $\partial_{x}^i  Q(t,x)$ on $t \in [0,T]$ as $N \rightarrow \infty$ for $|i| \leq 2$:
\begin{eqnarray*}
\sup_{t \in [0,T] } \sup_{x \in \Omega} | \partial_{x}^i Q^N(t,x) - \partial_{x}^i Q(t,x) | \overset{a.s.} \rightarrow 0,
\end{eqnarray*}
where $T < \infty$ is arbitrary. As a direct consequence, for arbitrarily chosen $T<\infty$, as $N \rightarrow \infty$,
\begin{eqnarray*}
\sup_{t\in [0,T]}\norm{ Q^N(t) -  Q(t) }_{H^2} \overset{a.s.} \rightarrow 0.
\end{eqnarray*}

Theorem \ref{MainTheoremPDEResidualGlobalConvergence} proves that the limit neural network converges to a global minimizer. In particular, the limit objective function $J(t)$, which is the PDE residual of the neural network, converges to zero as the training time $t \rightarrow \infty$. 

Theorem \ref{MainTheoremPDESolutionGlobalConvergence} proves that the limit neural network $Q(t,x)$ converges to the solution $u(x)$ of the PDE (\ref{PDE0}) in $H^1$ as $t \rightarrow \infty$:
\begin{eqnarray}
\lim_{t \rightarrow \infty} \norm{ Q(t) - u }_{H^1} = 0.
\end{eqnarray}

Theorem \ref{FiniteNetworkPDEresidualUniformErrorBound} also establishes uniform-in-time error bounds for the objective function for the finite neural network with $N$ hidden units. For any $\epsilon > 0$, there exists a $t \geq 0$ and an $N_0<\infty$ such that
\begin{eqnarray}
 \mathbb{E}[ J^N(\theta(s)) ] \leq \epsilon,
\end{eqnarray}
for all $s \geq t$ and $N \geq N_0$. Consequently, if we make the neural network sufficiently large, we can achieve an arbitrarily low PDE residual. We highlight that such uniform-in-time estimates are typically difficult to achieve in the analysis of mean field systems. That is, typically mean-field analysis can only establish a result such as $\lim_{s \rightarrow \infty} \lim_{N \rightarrow \infty}  \mathbb{E}[ J^N(\theta(s)) ] = 0$, which is directly implied by our Theorems \ref{ConvergenceNtoLimitPDE} and \ref{MainTheoremPDESolutionGlobalConvergence}.  

Typically, without additional assumptions,  mean-field analysis is \emph{not} able to prove that the limits are interchangeable, i.e. it cannot prove that $ \lim_{N \rightarrow \infty} \lim_{t \rightarrow \infty}  \mathbb{E}[ J^N(\theta(t)) ] = 0$. In Theorem \ref{FiniteNetworkPDEresidualUniformErrorBound}, we are able to additionally prove
\begin{eqnarray}
\lim_{N \rightarrow \infty} \lim_{t \rightarrow \infty}  \mathbb{E}[ J^N( \theta(t))]  = 0,
\end{eqnarray}
where the proof leverages the fact that the pre-limit objective function $J^N(\theta(t))$ is monotonically decreasing in $t$.

\section{Gradient Descent Algorithm for DGM and PINNs}\label{S: PreliminaryCalculations}

In this section we present preliminary calculations deriving the formulas for the evolution of the objective function $J^{N}(\theta(t))$ and the neural network $Q^N(x; \theta(t))$ as a function of the training time $t$. An important ingredient in our subsequent proofs will be an appropriate clipping of the gradient of the objective function used in the gradient descent algorithm. Gradient clipping is a standard method used with gradient descent algorithms in machine learning \cite{goodfellow}. Here, we shall see that it is important also for the convergence analysis to hold as $N \rightarrow \infty$. We emphasize though that while implementing gradient clipping is important for establishing the necessary \emph{a priori} bounds for the analysis, its effect vanishes as $N\rightarrow\infty$: the clipping does not appear in the limit training dynamics (\ref{QlimitEvolutionSummaryofMainResults}). 

The gradient of the objective function is
\begin{align}
\nabla_{\theta} J^N(\theta) &= \int_{\Omega} ( \mathcal{A}_{Q^N } Q^N(x; \theta) - g(x) ) \times \nabla_{\theta} \bigg{[} \mathcal{A}_{Q^N } Q^N(x; \theta) \bigg{]}  \mu(dx) \notag \\
&= \int_{\Omega} ( \mathcal{A}_{Q^N } Q^N(x; \theta) - g(x) ) \times D_{Q^N} \nabla_{\theta} Q^N(x; \theta) \mu(dx),
\label{ObjGrad12}
\end{align}
where the differential operator $D_{Q}$ is
\begin{align}
D_{Q} h &= \mathcal{L} h + f_v(Q, \partial_x Q) h + \sum_{i=1}^n f_{w_i}(Q,  Q_x) \frac{\partial h}{\partial x_i}.  
\end{align}

For example, denoting $Q^N(t) = Q^N(\cdot; \theta(t))$, we get 
\begin{align}
D_{Q^N(t)} h &= \mathcal{L} h + f_v(Q^N(x; \theta(t)),  Q^N_x(x; \theta(t))) h \notag \\
&+ \sum_{i=1}^n f_{w_i}(Q^N(x; \theta(t)), Q^N_x(x; \theta(t))) \frac{\partial h}{\partial x_i }.\notag 
\end{align}

The gradient with respect to a single parameter triplet $\theta_i = (c^i, w^i, b^i)$ is:
\begin{align}
\nabla_{\theta_i} J^N(\theta) &= N^{-\beta} \bigg{[} \int_{\Omega} ( \mathcal{A}_{Q^N } Q^N(x; \theta) - g(x) ) \times D_{Q^N} \nabla_{\theta_i} [ \eta(x) c^i \sigma(w^i \cdot x + b^i)  ] \mu(dx) \bigg{]}. 
\label{ObjGrad12i}
\end{align}
We will apply a clipping function to the term inside the brackets for the clipped gradient descent algorithm used to train the neural network. 

We introduce a clipping function $\phi^N$ which satisfies the following properties:
\begin{definition} [Smooth clipping function]
\label{psi}
A function class $\{\phi^N\}_{N\in \mathbb{N}^+}$ forms a family of smooth clipping functions with parameter $\gamma > 0$ if for any $N\in \mathbb{N}^+$  \begin{itemize}
    \item $\phi^N \in C^2_b(\mathbb{R})$ is increasing on $\mathbb{R}$.
    \item $|\phi^N|$ is bounded by $2N^\gamma$.
    \item $\phi^N(x)=x$ for $x \in [-N^\gamma,N^\gamma]$. 
    \item $|(\phi^N)'| \leq 1$ on $\mathbb{R}$.
\end{itemize}
\end{definition}
\begin{assumption}
\label{phi}
Functions $\{\phi^N\}_{N\in \mathbb{N}^+}$ are smooth clipping functions with parameters $\gamma$ where $\gamma>0$,  and for $\beta \in (\frac{1}{2},1)$ as in the definition of the neural network (\ref{Eq:NeuralNetwork}), we assume $ \gamma +\beta< 1$.
\end{assumption}

The training algorithm will clip the gradient (\ref{ObjGrad12i}), and the following clipped gradient  will be used to train the neural network:
\begin{align}
G^N_i(\theta) &=  N^{-\beta} \phi^N \bigg{(} \int_{\Omega} ( \mathcal{A}_{Q^N } Q^N(x; \theta) - g(x) ) \times D_{Q^N} \nabla_{\theta_i} [ \eta(x) c^i \sigma(w^i \cdot x + b^i) ] \mu(dx) \bigg{)},
\label{ObjGrad12iclipped}
\end{align}
where the scalar clipping function $\phi^N$ is applied element-wise to the vector in the above equation. (\ref{ObjGrad12iclipped}) can equivalently be written in the more general vector form
\begin{align}
G^N(\theta) &=  N^{-\beta} \phi^N \bigg{(} N^{\beta} \int_{\Omega} ( \mathcal{A}_{Q^N } Q^N(x; \theta) - g(x) ) \times D_{Q^N} \nabla_{\theta} Q^N(x; \theta) \mu(dx) \bigg{)}.
\label{ObjGrad12iclippedVector}
\end{align}

Notice that, due to Definition \ref{psi}, the effect of the clipping  vanishes as $N \rightarrow \infty$.

The parameters are trained with (clipped) gradient descent:
\begin{eqnarray}
\frac{d \theta}{d t} = - \alpha^N G^N(\theta(t)),
\label{ActualClippedGradientDescentTrainingAlgorithm}
\end{eqnarray}
where the learning rate $\alpha^N = \alpha N^{2 \beta - 1 }$ and $\alpha>0$ is a positive constant. 

The next step is to use the chain rule to derive an evolution equation for the neural network output,
\begin{align}
\frac{\partial Q^N(x; \theta(t) )}{\partial t} &= \nabla_{\theta} Q^N(x; \theta(t) ) \cdot \frac{d \theta}{dt} \notag \\
&=- \alpha^N \nabla_{\theta} Q^N(x; \theta(t) ) \cdot G^N(\theta(t)) \notag \\
&= - \alpha^N N^{- \beta} \phi^N \bigg{(} N^{\beta} \int  ( \mathcal{A}_{Q^N(t) } Q^N(y; \theta(t) ) - g(y) ) \notag \\
&\quad\times D_{Q^N(t)} [ \nabla_{\theta} Q^N(y; \theta(t) ) ]  \mu(dy) \bigg{)}  \cdot \nabla_{\theta} Q^N(x; \theta(t) ).
\label{PrelimitEvolutionA}
\end{align}

Next define the quantity  \begin{align}
B_t^N(x) &= \alpha^N N^{- \beta} \phi^N \bigg{(} N^{\beta} \int  ( \mathcal{A}_{Q^N(t ) } Q^N(y; \theta(t) ) - g(y) ) \notag \\
&\qquad\times D_{Q^N(t)} [ \nabla_{\theta} Q^N(y; \theta(t) ) ]  \mu(dy) \bigg{)}  \cdot \nabla_{\theta} Q^N(x; \theta(t) ), 
\end{align}
and the kernel 
\begin{align}
U^N_t(x,y) &=  N^{2 \beta - 1 } D_{Q^N(t)} \nabla_{\theta} Q^N(y; \theta(t) )  \cdot \nabla_{\theta} Q^N(x; \theta(t) ) \notag \\
&= \bigg{\langle}  \nabla_{\theta} [ \eta(x) c \sigma (w x + b) ] \cdot  D_{Q^N(t)}[ \nabla_{\theta}  ( \eta(y) c \sigma (w y + b) ) ], \mu_t^N \bigg{\rangle} , \notag 
\end{align}
where $\mu_t^N = N^{-1} \sum_{i=1}^N \delta_{\theta_i(t)}(d c, dw, db)$ is the empirical measure of the trained parameters. Then we obtain
\begin{align}
\frac{\partial Q^N(x; \theta(t) )}{\partial t} &= - \alpha \int_{\Omega} U_t^N(x,y) ( \mathcal{A}_{Q^N(\cdot; \theta(t) ) } Q^N(y; \theta(t) ) - g(y) ) \mu(dy) \notag \\
&\qquad+ \bigg{(} L_t^N(x) - B_t^N(x) \bigg{)},
\label{PrelimitEvolutionB}
\end{align}
where $L_t^N(x) = \alpha \int_{\Omega} U_t^N(x,y) ( \mathcal{A}_{Q^N(\cdot; \theta(t) ) } Q^N(y; \theta(t) ) - g(y) ) \mu(dy)$. We will later show that the term $L_t^N(x) - B_t^N(x)$ vanishes as $N \rightarrow \infty$ (i.e., the effect of the clipping function disappears as the neural network becomes larger). 

As $N \rightarrow \infty$, Theorem \ref{ConvergenceNtoLimitPDE} proves that $(U_t^N(x,y), Q^N(x; \theta(t)) )$ will converge to $(U_{Q(t)}(x,y), Q(t,x))$ (almost surely in $C^2(\Omega)$ for each $t \geq 0$) where:

\begin{eqnarray}
U_{Q}(x,y) = \bigg{\langle}  \nabla_{\theta} [ \eta(x) c \sigma (w x + b) ] \cdot  D_{Q} [ \nabla_{\theta} (\eta(y) c \sigma (w y + b)) ] , \mu_0 \bigg{\rangle},\notag
\end{eqnarray}
where $\mu_0$ is the distribution of the randomly initialized parameters at $t = 0$. The above kernel is similar to the overparameterized Neural Tangent Kernel (NTK) \cite{JacotGabrielHongler}, although there are two key differences: the kernel $U_Q(x,y)$ depends upon the PDE operator via $D_Q$ and it also depends nonlinearly upon the neural network $Q(t)$ (as well as its derivative $Q_x(t))$. The original NTK in \cite{JacotGabrielHongler} does not have PDE operators (since it is derived for a classic least squares regression problem), does not depend on the trained neural network solution, and does not depend upon time $t$ (i.e., it is a constant kernel that remains static during training). These are key distinctions which introduce non-trivial technical challenges to the analysis. The limit neural network's training evolution satisfies the PDE
\begin{align}
\frac{\partial Q(t, x )}{\partial t} &= - \alpha  \int_{\Omega} U_{Q(t)}(x,y)  \big{(} \mathcal{A}_{Q(t) } Q(t, y ) - g(y) \big{)} \mu(dy),
\label{QlimitEvolution}
\end{align}
 where the initial condition is $Q(0,x) = 0$. The PDE (\ref{QlimitEvolution}) governs the evolution of the (limit) neural network output as a function of training time $t$. (\ref{QlimitEvolution}) is a non-local nonlinear PDE due to the integral, the nonlinear dependence of $U_{Q(t)}$ on $(Q, Q_x)$, and the nonlinear dependence of $\mathcal{A}_{Q(t) }$ on $(Q, Q_x)$. We will sometimes use the notation $U_t(x,y) = U_{Q(t)}(x,y)$ for notational convenience.

Next, we use the notation established above to derive an equation for the evolution of the PDE residual of the neural network:
\begin{align}
&\frac{\partial}{\partial t}[ \mathcal{A}_{Q(t)} Q(t,x) - g](x) = D_{Q(t)} \bigg{[} \frac{\partial Q}{\partial t} \bigg{]}(t,x) \notag \\
&\qquad= - \alpha D_{Q(t)} \bigg{[}   \int_{\Omega} U_{Q(t)}(x,y)  \big{(} \mathcal{A}_{Q(t) } Q(t, y ) - g(y) \big{)} \mu(dy)  \bigg{]} . \notag \\
&\qquad = - \alpha \int_{\Omega} S_t(x,y) \big{(} \mathcal{A}_{Q(t ) } Q(t, y ) - g(y) \big{)} \mu(dy),
\label{ResidualEvolution}
\end{align}
where the kernel $S_{Q}(x,y)$ is defined to be
\begin{eqnarray}
S_{Q}(x,y) = \bigg{\langle} D_{Q} \bigg{[}   \nabla_{\theta} (\eta(x) c \sigma (w x + b))  \bigg{]} \cdot  D_{Q} \bigg{[}  \nabla_{\theta}( \eta(y) c \sigma (w y + b)) \bigg{]} , \mu_0 \bigg{\rangle}. \label{Eq:S_kernel}
\end{eqnarray}
We will sometimes use the notation $S_t(x,y) = S_{Q(t)}(x,y)$ for notational convenience.

Our calculations in Section \ref{S:ConvergenceLimitPDE_LargeN} will show that the pre-limit objective function $J^N(\theta(t) )$ will converge to the following limit objective function $J(t) = J( Q(t) )$ as $N \rightarrow \infty$:
\begin{eqnarray*}
J(t) = \frac{1}{2} \int_{\Omega} ( \mathcal{A}_{Q(t) } Q(t,x) - g(x) )^2 \mu(dx).
\end{eqnarray*}

Differentiating with respect to time yields
\begin{align}
\frac{d J}{d t} &= - \alpha \int_{\Omega^2}  ( \mathcal{A}_{Q(t) } Q(t,x) - g(x) ) S_{Q(t)}(x,y)  ( \mathcal{A}_{Q(t) } Q(t,y) - g(y) ) \mu(dx)  \mu(dy).
\label{ObjEvolution}
\end{align}

In Section \ref{S:FixedPoints} we prove that fixed points of the right hand side of (\ref{ObjEvolution}) are weak solutions to the PDE (\ref{PDE0}).

\section{Fixed Points of the Neural Network's Limit PDE}\label{S:FixedPoints}

In this section we study the fixed points $Q$ of (\ref{ObjEvolution}). We will return later to the question of the convergence of (\ref{QlimitEvolution}) as $t \rightarrow \infty$.

Define $R(t,x) = \mathcal{A}_{Q(t) } Q(t,x) - g(x) $ and $R(x) = \mathcal{A}_{Q } Q(x) - g(x) $. If $\frac{d J}{dt} = 0$, this implies that any fixed point $Q(x)$ and $R(x) = \mathcal{A}_{Q } Q(x) - g(x) $ must satisfy:
\begin{align}
0 &= \int_{\Omega^2}  R(x) S_Q(x,y)  R(y) \mu(dx)  \mu(dy),
\label{FixedPoint}
\end{align}
which can be lower bounded using the following calculations:
\begin{align}
0 &= \int_{\Omega^{2}}  R(x) S_Q(x,y)  R(y) \mu(dx)  \mu(dy) \notag \\
&= \int_{\Omega^{2}}  R(x) \langle D_{Q} [\eta(x) \sigma(w \cdot x + b )] D_{Q} [\eta(y) \sigma(w \cdot y + b )], \mu_0 \rangle R(y) \mu(dx)  \mu(dy)  \notag \\
&+ \int_{\Omega^{2}}  R(x)\langle D_{Q}[ \eta(x) c \sigma'(w \cdot x + b ) x] \cdot  D_{Q} [ \eta(y) c \sigma'(w \cdot y + b ) y ], \mu_0 \rangle R(y) \mu(dx)  \mu(dy)  \notag \\
&+ \int_{\Omega^{2}}  R(x)\langle D_{Q}[ \eta(x) c \sigma'(w \cdot x + b )]  D_{Q} [ \eta(y) c \sigma'(w \cdot y + b ) ], \mu_0 \rangle R(y) \mu(dx)  \mu(dy)  \notag \\
&= \mathbb{E}_{c, w, b \sim \mu_0} \bigg{[} \int_{\Omega^{2}}  R(x)  D_{Q}[ \eta(x)\sigma(w \cdot x + b )] D_{Q}[ \eta(y)\sigma(w \cdot y + b ) ] R(y) \mu(dx)  \mu(dy) \bigg{]}  \notag \\
&+ \mathbb{E}_{c, w, b \sim \mu_0} \bigg{[} \int_{\Omega^{2}}  R(x)  (D_{Q}[ \eta(x) c \sigma'(w \cdot x + b ) x] \nonumber\\
&\hspace{3cm}\cdot D_{Q} [\eta(y) c  \sigma'(w \cdot y + b ) y ]) R(y) \mu(dx)  \mu(dy) \bigg{]} \notag \\
&+ \mathbb{E}_{c, w, b \sim \mu_0} \bigg{[} \int_{\Omega^{2}}  R(x)  D_{Q}[ \eta(x) c\sigma'(w \cdot x + b )] D_{Q}[ \eta(y) c \sigma'(w \cdot y + b ) ] R(y) \mu(dx)  \mu(dy) \bigg{]}  \notag \\
&= \mathbb{E}_{c, w, b \sim \mu_0} \bigg{[} \bigg{(} \int_{\Omega}  R(x)  D_{Q}[ \eta(x) \sigma(w \cdot x + b ) ] \mu(dx)  \bigg{)}^2 \bigg{]} \notag \\
&+ \mathbb{E}_{c, w, b \sim \mu_0} \bigg{[} \norm{ \int_{\Omega}  R(x)  D_{Q} [ \eta(x) c \sigma'(w \cdot x + b ) x]  \mu(dx) }_2^2 \bigg{]} \label{ElementwiseSquare} \\
&+ \mathbb{E}_{c, w, b \sim \mu_0} \bigg{[} \bigg{(} \int_{\Omega}  R(x)  D_{Q}[ \eta(x) c \sigma'(w \cdot x + b ) ] \mu(dx)  \bigg{)}^2 \bigg{]} \notag \\
&\geq \mathbb{E}_{c, w, b \sim \mu_0} \bigg{[} \bigg{(} \int_{\Omega}  R(x)  D_{Q} [ \eta(x) \sigma(w \cdot x + b ) ]  \mu(dx)  \bigg{)}^2 \bigg{]}.
\label{LowerBoundRHS}
\end{align}

The latter calculation implies that 
\begin{align}
\mathbb{E}_{c, w, b \sim \mu_0} \bigg{[} \bigg{(} \int_{\Omega}  R(x)  D_{Q} [ \eta(x) \sigma(w \cdot x + b )  ] \mu(dx)  \bigg{)}^2 \bigg{]} &= 0.\label{LowerBoundRHS2}
\end{align}

In fact we have the following result
\begin{lemma}
Suppose that $R \in L^2$, $Q \in H_0^1$, and  that (\ref{LowerBoundRHS2}) holds.
Then, we have that for all $(w,b) \in \mathbb{R}^n \times \mathbb{R}$
\begin{eqnarray*}
\int_{\Omega}  R(x)  D_{Q} [ \eta(x) \sigma(w \cdot x + b ) ]  \mu(dx)  = 0.
\end{eqnarray*}
\end{lemma}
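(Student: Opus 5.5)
The plan is to show that the function
\[
F(w,b) := \int_{\Omega} R(x)\, D_Q[\eta(x)\sigma(w\cdot x + b)]\,\mu(dx)
\]
is continuous on $\mathbb{R}^n\times\mathbb{R}$. Once we have that, the conclusion follows from the full-support assumption on $(w,b)$ in Assumption \ref{NeuralNetworkAssumptions}. Since $c$ does not appear in $F$, (\ref{LowerBoundRHS2}) states that $\mathbb{E}_{(w,b)\sim\mu_0}[F(w,b)^2]=0$, and hence $F=0$ for $\mu_0$-almost every $(w,b)$. Suppose instead that $F(w_0,b_0)\neq 0$ at some point. Continuity then gives a ball $B$ around $(w_0,b_0)$ on which $|F|>|F(w_0,b_0)|/2$. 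Full support gives $\mu_0(B)>0$, so $\mathbb{E}[F^2]>0$, which is a contradiction. This forces $F\equiv 0$ everywhere, as the lemma states.

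The work is therefore in proving continuity of $F$. First I expand the operator:
\[
D_Q h = \mathcal{L}h + f_v(Q,Q_x)h + \sum_i f_{w_i}(Q,Q_x)\partial_{x_i}h,
\]
applied to $h_{w,b}(x)=\eta(x)\sigma(w\cdot x+b)$. Because $\eta\in C^3_b$, $\sigma$ has bounded derivatives up to order three, and the coefficients of $\mathcal{L}$ are smooth on the compact set $\Omega$, the functions $h_{w,b}$, $\partial_x h_{w,b}$ and $\partial_x^2 h_{w,b}$ are continuous in $(x,w,b)$. They are also bounded uniformly for $(w,b)$ in any compact set, and $x\in\Omega$. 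The zeroth- and first-order coefficients $f_v(Q,Q_x)$ and $f_{w_i}(Q,Q_x)$ are bounded measurable functions of $x$, since Assumption \ref{PDEassumptions} bounds the derivatives of $f$; they do not depend on $(w,b)$.

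Fix $(w_k,b_k)\to(w,b)$. For every $x$, $D_Q h_{w_k,b_k}(x)\to D_Q h_{w,b}(x)$, because the coefficients are fixed and the derivatives of $h$ converge pointwise. The integrand $R(x)D_Qh_{w_k,b_k}(x)$ is dominated by $C|R(x)|$, where $C$ depends only on a compact neighbourhood of $(w,b)$. Since $R\in L^2\subset L^1(\mu)$ ($\mu$ is a probability measure on bounded $\Omega$), dominated convergence gives $F(w_k,b_k)\to F(w,b)$. The same domination shows $F$ is finite, so the expectation in (\ref{LowerBoundRHS2}) makes sense.

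The main obstacle is the measurability and boundedness of the coefficients when $Q$ is only in $H^1_0$. The derivative $Q_x$ is then merely an $L^2$ function, but $f_v$ and $f_{w_i}$ are uniformly bounded regardless of their arguments. Composing them with $(Q,Q_x)$ therefore gives bounded measurable functions, which is all the domination argument needs. The only other point to check is that the divergence-form part of $\mathcal{L}$ is applied classically to the smooth $h_{w,b}$ (not to $Q$), so no weak-derivative issues arise.
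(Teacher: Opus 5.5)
Your proof is correct and follows the same route as the paper: (\ref{LowerBoundRHS2}) forces $F=0$ for $\mu_0$-almost every $(w,b)$, and continuity of $F$ then gives $F\equiv 0$. You go further than the paper's one-line argument by proving continuity with dominated convergence and by using the full-support hypothesis of Assumption \ref{NeuralNetworkAssumptions} explicitly; the paper leaves that second step implicit, but it is exactly what is needed to pass from ``zero $\mu_0$-a.e.'' to ``zero everywhere''.
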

\begin{proof}
Equation (\ref{LowerBoundRHS2}) immediately implies that $E(w,b) = \int  R(x)  D_{Q} [ \eta(x) \sigma(w \cdot x + b )  ] \mu(dx)$ is zero almost everywhere in $(w,b) \in \mathbb{R}^n \times \mathbb{R}$. Since $E(w,b)$ is continuous (due to our assumptions and $R \in L^2$), $E(w,b) = 0$ everywhere. 
\end{proof}

Consequently, for any fixed point $R \in L^2$ and $Q \in H_0^1$,
\begin{eqnarray*}
\int_{\Omega}  R(x)  D_{Q} [ \eta(x) \sigma(w \cdot x + b ) ]  \mu(dx)  = 0,
\end{eqnarray*}
for all $(w,b) \in \mathbb{R}^n \times \mathbb{R}$.

Now, this implies that if $ R \in L^2$, then $\langle R, D_{Q} f \rangle = 0$ for any $f \in H_{(0)}^2 = H^2 \cap H_0^1$. 

\begin{lemma} \label{KeyLemma}
If $ R \in L^2$ and $Q \in H_0^1$ satisfy the fixed point equation (\ref{FixedPoint}), then $\langle R, D_{Q} f \rangle = 0$ for any $f \in H_{(0)}^2 = H^2 \cap H_0^1$.
\end{lemma}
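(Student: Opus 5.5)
The plan is to pass from the ridge functions $\eta\sigma(w\cdot x+b)$, which the preceding lemma annihilates, to all of $H^2_{(0)}$ by a density argument. The key structural fact is that the linear functional $\ell(h) := \langle R, D_Q h\rangle$ is continuous on the weighted space $\{\eta v : v \in H^2\}$ equipped with the $H^2$ norm. By the preceding lemma (which applies because (\ref{FixedPoint}) implies (\ref{LowerBoundRHS2})), $\ell(\eta\,\sigma(w\cdot x+b)) = 0$ for all $(w,b)$. By linearity, $\ell$ then vanishes on $\eta\,\mathcal{S}$, where $\mathcal{S} = \mathrm{span}\{\sigma(w\cdot x+b)\}$.

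\textbf{Step 1 (continuity of $\ell$).} For $h \in H^2$ we have
\[
|\ell(h)| \le \|R\|_{L^2}\,\|D_Q h\|_{L^2} \le C\,\|R\|_{L^2}\,\|h\|_{H^2}.
\]
This uses the smooth coefficients of $\mathcal{L}$ and the boundedness of $f_v$ and $f_w$ (Assumption \ref{PDEassumptions}). Note that $Q \in H_0^1$ only enters through the bounded functions $f_v(Q,Q_x)$ and $f_{w}(Q,Q_x)$, so no regularity of $Q$ beyond measurability is needed.

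\textbf{Step 2 (density of ridge functions).} The span $\mathcal{S}$ is dense in $C^2(\bar\Omega)$, and hence in $H^2(\Omega)$. Since $\sigma$ is discriminatory with bounded derivatives up to order three, I would invoke Hornik's universal approximation theorem for derivatives \cite{hornik1991approximation}, which applies because $\Omega$ is compact. Since $\eta \in C^3_b$, multiplication by $\eta$ is bounded $H^2\to H^2$. Therefore $\ell$ vanishes on $\eta H^2 := \{\eta v : v\in H^2\}$, and even on the $H^2$-closure of $\eta C^\infty(\bar\Omega)$.

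\textbf{Step 3 (the main obstacle).} It remains to show that every $f \in H^2_{(0)}$ is an $H^2$-limit of functions $\eta v_k$. I expect this to be the hardest step, since $f/\eta$ need not lie in $H^2$ when $f$ is merely in $H^2$ with zero trace. The first thing I would try is density of $C^\infty(\bar\Omega)\cap H^2_{(0)}$ (for instance $C^\infty$ functions vanishing on $\partial\Omega$) in $H^2_{(0)}$, which holds for a smooth domain. For smooth $f$ vanishing on $\partial\Omega$, write $f = \eta v$ with $v = f/\eta$. Because $\nabla\eta\cdot\mathbf{n}_x \neq 0$ on $\partial\Omega$, $\eta$ is comparable to the distance to the boundary near $\partial\Omega$. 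A Hadamard-type division lemma, using local coordinates that flatten the boundary and the identity $f(x',x_n) = x_n\int_0^1 \partial_n f(x',sx_n)\,ds$, then gives $v \in C^{\infty}(\bar\Omega)$, or at least $v \in C^2(\bar\Omega)$ when $f \in C^3$. Hence $f \in \eta H^2$ and $\ell(f)=0$.

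\textbf{Step 4 (conclusion).} Finally, for general $f \in H^2_{(0)}$, take smooth $f_k \to f$ in $H^2$ with $f_k|_{\partial\Omega}=0$. Then $\ell(f_k)=0$ for every $k$, and Step 1 gives $\ell(f) = \lim_k \ell(f_k) = 0$, which proves the lemma.
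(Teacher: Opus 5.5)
Your proposal is correct and follows the paper's argument. The preceding lemma kills $\langle R, D_Q[\eta\,\sigma(w\cdot x+b)]\rangle$, density of $\mathrm{span}\{\eta\,\sigma(w\cdot x+b)\}$ in $H^2_{(0)}$ lets you approximate any $f$, and the bound $|\langle R, D_Q h\rangle|\le C\norm{R}_{L^2}\norm{h}_{H^2}$ passes the identity to the limit. The only difference is that the paper cites \cite{QPDE} for the density and uses the continuity estimate tacitly. You instead state the continuity explicitly and prove the density yourself, via Hornik's derivative approximation, boundedness of multiplication by $\eta$, density of smooth boundary-vanishing functions in $H^2_{(0)}$, and division by $\eta$ using $\nabla\eta\cdot\mathbf{n}_x\neq 0$; this is a valid self-contained substitute for the citation.
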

\begin{proof}
Recall that
\begin{eqnarray}
\int_{\Omega}  R(x)  D_{Q} [ \eta(x) \sigma(w \cdot x + b )  ] \mu(dx)  = 0,
\label{ZeroEverywhere}
\end{eqnarray}
for all $(w,b) \in \mathbb{R}^n \times \mathbb{R}$. The linear span of $\{ \eta(x) \sigma( w \cdot x + b ) \}_{w,b \in \mathbb{R}^{n+1} }$ is dense in $H^2_{(0)}$; see \cite{QPDE} for the proof. Due to this density result, there exists a function $F^N(x) = \eta(x) \sum_{i=1}^N c^i \sigma( w^i \cdot x + b^i)$ which converges to any $f \in H^2_{(0)}$:
\begin{eqnarray*}
\lim_{N \rightarrow \infty} \norm{ F^N - f }_{H^2} = 0. 
\end{eqnarray*}

Then, due to equation (\ref{ZeroEverywhere}),
\begin{align}
\langle R, D_{Q} f \rangle &= \lim_{N \rightarrow \infty } \langle R, D_{Q} F^N \rangle \notag \\
&= \lim_{N \rightarrow \infty} \sum_{i=1}^N c^i \int_{\Omega} R(x) D_{Q} [ \eta(x) \sigma(w^i \cdot x + b^i ) ]\mu(dx) \notag \\
&= 0,\notag
\end{align}
concluding the proof of the lemma.
\end{proof}

\begin{remark} \label{RangeofDQ}
If $R \in L^2$ and for any $Q \in H_0^1$, then there exists an $h \in H^2_{(0)}$ such that $D_Q h = R$ (almost everywhere).
\end{remark}
\begin{proof}
This statement follows directly from Assumption \ref{PDEassumptions3Surjectivity}. In addition, a sufficient condition is $f_v(Q, \partial_x Q) \geq 0$. Recall that $D_Q h = \mathcal{L} h + f_v(Q, Q_x) h + \sum_{i=1}^n f_{w_i}(Q, Q_x) \frac{\partial h}{\partial x_i}$. Due to Assumption \ref{PDEassumptions}, $f_v$ and $f_{w_i}$ are bounded. Consider the PDE
\begin{eqnarray*}
D_{Q} h = R,
\end{eqnarray*}
where $h$ satisfies a Dirichlet boundary condition $h = 0$ on $\partial \Omega$ and the source term $R \in L^2$. Due to Theorems 8.9 and 8.12 in \cite{GT}, there exists a (unique) solution $h \in H^2_{(0)}$.  
\end{proof}

\begin{theorem} \label{GlobalMinimizerFixedPoint}
Any $(R,Q) \in L^2 \times H_0^1$ satisfying the fixed point characterized by equation (\ref{FixedPoint}) is a global minimizer with zero PDE residual:
\begin{eqnarray*}
\norm{ R}_{L^2} = 0.
\end{eqnarray*}
\end{theorem}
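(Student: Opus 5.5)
The argument combines Lemma \ref{KeyLemma} with Remark \ref{RangeofDQ}: $R$ is orthogonal to the range of $D_Q$ on $H^2_{(0)}$, and that range contains $R$ itself.

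First, let $(R,Q) \in L^2 \times H_0^1$ satisfy the fixed point equation (\ref{FixedPoint}). As computed in (\ref{LowerBoundRHS}), the quadratic form vanishes, so (\ref{LowerBoundRHS2}) holds. The first lemma of this section then gives
\begin{eqnarray*}
\int_{\Omega} R(x) D_Q[\eta(x)\sigma(w\cdot x+b)]\mu(dx)=0
\end{eqnarray*}
for every $(w,b)$. Lemma \ref{KeyLemma} upgrades this to $\langle R, D_Q f\rangle = 0$ for all $f \in H^2_{(0)}$.

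Second, I choose the test function. By Assumption \ref{PDEassumptions3Surjectivity}, equivalently Remark \ref{RangeofDQ}, applied with source term $R \in L^2$ and the given $Q \in H_0^1$, there is $h \in H^2_{(0)}$ with $D_Q h = R$ almost everywhere. Taking $f = h$ in Lemma \ref{KeyLemma} gives
\begin{eqnarray*}
0 = \langle R, D_Q h\rangle = \int_\Omega R(x)^2 \mu(dx).
\end{eqnarray*}
Since $\mu$ is the uniform probability measure (Assumption \ref{Ass:Mu_measure}), this says $\norm{R}_{L^2}=0$ up to the constant $|\Omega|^{-1}$. Because $J(Q) = \frac{1}{2}\norm{R}_{L^2(\mu)}^2 \geq 0$, the value zero is the global minimum.

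The main point needing care is that the pairing $\langle R, D_Q f\rangle$ is well defined and continuous in $f$ in the $H^2$ norm. This continuity is what lets us pass from the span of $\{\eta\sigma(w\cdot x+b)\}$ to all of $H^2_{(0)}$ in Lemma \ref{KeyLemma}. It holds because $\mathcal{L}$ has smooth coefficients, $f_v$ and $f_{w_i}$ are bounded by Assumption \ref{PDEassumptions}, and $Q \in H_0^1$ only enters through these bounded coefficients. Hence $\norm{D_Q f}_{L^2} \le C\norm{f}_{H^2}$, so Cauchy--Schwarz with $R\in L^2$ gives the required continuity. Everything else follows directly from the earlier results.
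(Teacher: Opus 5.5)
Your proof is correct and follows the paper's argument: you obtain $h \in H^2_{(0)}$ with $D_Q h = R$ from Remark~\ref{RangeofDQ}, then apply Lemma~\ref{KeyLemma} with $f = h$ to get $\norm{R}_{L^2}^2 = \langle R, D_Q h\rangle = 0$. The paper adds a step, approximating $R$ in $L^2$ by $H_0^1$ functions to show $\int_\Omega (D_Q h - R) R \, dx = 0$; you rightly note this is immediate from $D_Q h = R$ almost everywhere, and your check that $\norm{D_Q f}_{L^2} \le C \norm{f}_{H^2}$ justifies the density step in Lemma~\ref{KeyLemma}, which the paper leaves implicit.
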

\begin{proof}
Due to Remark \ref{RangeofDQ}, there exists an $h \in H_{(0)}^2$ such that $D_{Q} h = R$ (almost everywhere). Therefore, for every test function $v \in H_0^1$,

\begin{eqnarray*}
 \int_{\Omega} \bigg{[} D_Q h(x) \bigg{]} v(x) dx = \int_{\Omega} R(x) v(x) dx. 
\end{eqnarray*}

Recall that $H_0^1$ is dense in $L^2(\Omega)$ (since $C_c^{\infty}$ is a subset of $H_0^1$). Consider a sequence of functions $v_m \in H_0^1$ where $\norm{ v_m - R}_{L^2} \rightarrow 0$. Then, for each $m$,
\begin{eqnarray*}
 \int_{\Omega} \bigg{[} D_Q h(x)  - R(x) \bigg{]} v_m dx = 0,
\end{eqnarray*}
and by the Cauchy-Schwarz inequality 
\begin{align*}
|  \int_{\Omega} ( D_Q h(x)  - R(x) ) v_m dx - \int_{\Omega} ( D_Q h(x)  - R(x) ) R(x) dx  | &\rightarrow 0,
\end{align*}
as $m \rightarrow \infty$. Consequently, we obtain $\int_{\Omega} ( D_Q h(x)  - R(x) ) R(x) dx  = 0$. 

We next apply Lemma \ref{KeyLemma} to get
\begin{align}
\norm{R}_{L^2}^2 &= \int_{\Omega} D_Q h(x) R(x) dx = 0.
\end{align}
Note that $h \in H_{(0)}^2$ and $R \in L^2$, which satisfy the assumptions of Lemma \ref{KeyLemma}. This concludes the proof of the theorem.

\end{proof}

Furthermore, under suitable technical conditions, if the PDE residual is zero, then the neural network function (at the fixed point) is a solution to the PDE (\ref{PDE0}).
\begin{corollary} \label{UniqueWeakSolution}
Under Assumption \ref{PDEassumptions} and Assumption \ref{PDEassumptions2}, if $Q \in H_0^1$ and $\norm{R}_{L^2}^2 = \norm{ \mathcal{A}_{Q} Q - g }_{L^2}^2 = 0$, then $Q$ is a unique weak solution in $H_0^1$ to the PDE (\ref{PDE0}).  
\end{corollary}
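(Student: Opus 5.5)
The plan is to show that $Q$ satisfies the weak formulation of (\ref{PDE0}) and then invoke Assumption \ref{PDEassumptions2} for uniqueness. Write the weak formulation through the bilinear form of $\mathcal{L}$. For $v \in H_0^1$, set
\begin{eqnarray*}
B[Q,v] = \int_{\Omega} \Big( \sum_{i,j} a_{ij} \partial_{x_j} Q \, \partial_{x_i} v + \sum_i c_i \partial_{x_i} Q \, v - d\, Q v \Big) dx .
\end{eqnarray*}
A function $Q \in H_0^1$ is a weak solution of (\ref{PDE0}) if $B[Q,v] + \int_\Omega f(Q,Q_x) v\, dx = \int_\Omega g v \, dx$ for all $v \in H_0^1$. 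The term $f(Q,Q_x)$ is in $L^\infty \subset L^2$ because $f$ is bounded (Assumption \ref{PDEassumptions}), and $g$ is bounded. So the nonlinear part contributes only a fixed $L^2$ function once $Q$ is given.

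Next I make precise what $\norm{\mathcal{A}_Q Q - g}_{L^2}=0$ means for a $Q$ that is only in $H_0^1$. There are two cases.

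\textbf{Case 1: $\mathcal{L}Q$ exists in $L^2$.} This holds, for instance, if $Q \in H^2$, which is the setting in which the residual $R$ has been used in the preceding sections. Then $\mathcal{L}Q = g - f(Q,Q_x)$ almost everywhere, since $\mu$ is uniform by Assumption \ref{Ass:Mu_measure}. Multiply by $v \in C_c^\infty(\Omega)$ and integrate the divergence term by parts; no boundary terms appear because $v$ has compact support. This gives the weak identity for all smooth compactly supported $v$. Both sides are continuous in $v$ with respect to the $H^1$ norm: the coefficients are smooth and bounded on the compact set $\Omega$, $Q \in H^1$, and $g - f(Q,Q_x) \in L^2$. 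So the identity extends by density of $C_c^\infty$ in $H_0^1$ to all $v \in H_0^1$.

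\textbf{Case 2: $\mathcal{L}Q$ is only a distribution.} Interpret $\mathcal{A}_Q Q$ distributionally, with $\mathcal{L}Q \in H^{-1}$. Zero residual in $L^2$ then means that the distribution $\mathcal{L}Q + f(Q,Q_x) - g$ vanishes. This is exactly the weak identity tested against $C_c^\infty$, and the same density argument finishes this case.

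So in either case $Q \in H_0^1$ is a weak solution. By Assumption \ref{PDEassumptions2} the weak solution in $H_0^1$ is unique, so $Q = u$, the unique weak solution.

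The main obstacle is the density step together with the interpretation of $\mathcal{A}_Q Q$ when $Q$ has only $H^1$ regularity. Continuity of $v \mapsto B[Q,v]$ on $H_0^1$ needs bounded coefficients, which Assumption \ref{PDEassumptions} supplies through smoothness on the compact closure. Continuity of the term $\int f(Q,Q_x)v$ uses boundedness of $f$ and Cauchy--Schwarz. No further difficulty is expected; the Dirichlet condition comes for free from $Q \in H_0^1$.
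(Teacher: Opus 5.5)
Your proposal is correct and follows essentially the same route as the paper: zero $L^2$ residual makes the residual pair to zero with every test function, integration by parts gives the weak formulation (\ref{WeakFormOfPDEFixedPoint}), and Assumption \ref{PDEassumptions2} supplies uniqueness. The differences are refinements of a step the paper does directly. The paper applies Cauchy--Schwarz against $\phi \in H_0^1$ without comment, whereas you test first against $C_c^\infty$, extend by density, and spell out how $\mathcal{A}_Q Q$ should be read when $Q$ is only in $H_0^1$.
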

\begin{proof}

For any test function $\phi \in H_0^1$, we can apply the Cauchy-Schwarz inequality to show that
\begin{align}
 \langle \mathcal{A}_Q Q - g, \phi \rangle^2 &\leq\norm{ \mathcal{A}_Q Q - g }^2_{L^2} \times \norm{ \phi }^2_{L^2} \notag \\
&= 0.\notag
\end{align}

This implies that
\begin{eqnarray}
 \langle \mathcal{A}_Q Q - g, \phi \rangle = 0.\notag
\end{eqnarray}

Since $Q$ and $\phi$ are in $H_0^1$, the above equation can be integrated by parts to produce
\begin{align}
&  \int_{\Omega} \bigg{(} \sum_{i,j=1}^n a_{ij}(x) \frac{\partial \phi}{\partial x_i}(x)  \frac{\partial Q }{\partial x_j}(x)  + \sum_{i=1}^n c_i(x) \frac{\partial Q}{\partial x_i}( x)  \phi(x)  - d(x) Q( x) \phi(x)  \notag \\
&+ f( Q( x), Q_x(x) ) \phi(x) - g(x) \phi(x) \bigg{)} dx = 0. 
\label{WeakFormOfPDEFixedPoint}
\end{align}

The definition of a weak solution in $H_0^1$ is a $Q \in H_0^1$ which satisfies (\ref{WeakFormOfPDEFixedPoint}) for every $\phi \in H_0^1$. We recall that we have assumed in Assumption \ref{PDEassumptions2} that there is a unique weak solution to (\ref{WeakFormOfPDEFixedPoint}). This concludes the proof of the Corollary.
\end{proof}

\section{Global Convergence of Neural Network's Limit PDE as $t \rightarrow \infty$}\label{S:GlobalConvergenceNN_LimitPDE}

Now that we have characterized the fixed points of the neural network's limit PDE (\ref{QlimitEvolution}), we will analyze its convergence as $t \rightarrow \infty$. We first establish the monotonicity of the limit objective function $J(t)$. In our following calculations, $C$ will be used as a generic finite positive constant which may change from line to line. 

\begin{lemma} \label{RuniformBound}
$J(t)$ is monotone decreasing in $t$ and for the residual $R(t,x) = \mathcal{A}_{Q(t) } Q(t,x) - g(x) $ we have that there is a uniform in time constant $C<\infty$ such that $\sup_{t\geq 0}\norm{R(t,\cdot) }_{L^2} \leq C < \infty$. 
\end{lemma}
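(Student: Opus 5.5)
The plan is to read monotonicity directly off the evolution identity (\ref{ObjEvolution}), then turn it into the uniform residual bound by evaluating $J$ at $t=0$.

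\textbf{Step 1 (sign of $dJ/dt$).} I repeat the computation of (\ref{ElementwiseSquare}), now with $R(t,\cdot)$ and $Q(t)$ in place of the fixed point. Expanding $S_{Q(t)}(x,y)$ as an expectation over $\mu_0$ of a dot product and using Fubini gives
\begin{align*}
\frac{dJ}{dt} = -\alpha\, \mathbb{E}_{c,w,b\sim\mu_0}\bigg[ \Big\| \int_\Omega R(t,x)\, D_{Q(t)}\big[\nabla_\theta(\eta(x) c\sigma(w\cdot x+b))\big]\mu(dx)\Big\|_2^2\bigg] \le 0.
\end{align*}
Hence $J(t)\le J(s)$ for all $t\ge s$.

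\textbf{Step 2 (justifying Step 1).} Three facts are needed:
\begin{itemize}
\item $J$ is absolutely continuous in $t$, and the chain rule in (\ref{ResidualEvolution}) holds. This requires $Q(t)\in H^2$ with $\partial_t Q$ regular enough that $\partial_t R = D_{Q(t)}[\partial_t Q]$ in $L^2$.
\item $\mathbb{E}_{\mu_0}$ and $\int_{\Omega^2}$ can be interchanged.
\item The resulting quantity is finite.
\end{itemize}
For the second and third points, $\sigma$, $\sigma'$, $\sigma''$, $\sigma'''$ are bounded, $c$ is bounded, $w$ has bounded moments, $\eta\in C^3_b$, and $f_v$, $f_w$ are bounded. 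So $D_Q[\nabla_\theta(\eta c\sigma)]$ is bounded by a polynomial in $|w|$ uniformly in $x$ and in $Q$, and $R(t)\in L^2$ then gives integrability. For the first point I would use the well-posedness and regularity of the limit equation (\ref{QlimitEvolution}) on each finite interval $[0,T]$. That regularity is established with the large-$N$ limit in Section \ref{S:ConvergenceLimitPDE_LargeN}, and I take it as given here. Since $T$ is arbitrary, monotonicity holds on $[0,\infty)$.

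\textbf{Step 3 (uniform bound).} With monotonicity,
\[
\|R(t,\cdot)\|_{L^2(\mu)}^2 = 2J(t) \le 2J(0).
\]
At $t=0$ we have $Q(0)=0$, so $R(0,x)=f(0,0)-g(x)$. Both $f$ and $g$ are bounded by Assumption \ref{PDEassumptions}, so $J(0)\le \tfrac12(\sup|f|+\sup|g|)^2$. Because $\mu$ is uniform on the compact set $\Omega$ (Assumption \ref{Ass:Mu_measure}), the $L^2(\mu)$ and Lebesgue $L^2$ norms differ only by the factor $|\Omega|^{1/2}$. This gives $\sup_{t\ge0}\|R(t,\cdot)\|_{L^2}\le C$ with $C=|\Omega|^{1/2}(\sup|f|+\sup|g|)$.

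\textbf{Main obstacle.} The only delicate point is Step 2, specifically differentiating $J(t)$. It needs enough a priori regularity of $Q(t)$ on finite intervals to make sense of $\mathcal{A}_{Q(t)}Q(t)$ and $D_{Q(t)}\partial_tQ$ in $L^2$. I would handle this by working with the limit constructed as the $C^2$ limit of $Q^N$. An alternative is to write $Q(t,x)$ as an expectation over $\mu_0$ of smooth features with time-dependent coefficients, which gives $C^2$ regularity in $x$ locally uniformly in $t$.
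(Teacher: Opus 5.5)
Your proof is correct and follows the paper's argument. The sum-of-squares expansion (\ref{ElementwiseSquare}) gives $\langle R(t), S_{Q(t)}R(t)\rangle\ge 0$, hence $dJ/dt\le 0$ and $J(t)\le J(0)$; your explicit value $R(0,x)=f(0,0)-g(x)$ and your regularity discussion fill in details the paper leaves implicit. One caution: get the finite-interval regularity from Lemma \ref{Hestimate0}, or from your integral-representation argument (valid because $U_{Q(s)}(x,y)$ is $C^2$ in $x$ with bounds uniform in $y$), and not from Section \ref{S:ConvergenceLimitPDE_LargeN}, since that section uses Lemma \ref{QTimederivativeUniformBound}, which in turn uses the present lemma.
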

\begin{proof}
Using similar calculations as in equation (\ref{LowerBoundRHS}), we can establish the lower bound
\begin{align}
 & \int_{\Omega\times\Omega}  R(t,x) S_{Q(t)}(x,y)  R(t,y) \mu(dx)  \mu(dy) \geq 0.
\end{align}

Since $\frac{d J}{d t}= - \alpha \int_{\Omega\times\Omega}  R(t,x) S_{Q(t)}(x,y)  R(t,y) \mu(dx)  \mu(dy)$, we obtain
\begin{eqnarray}
\frac{d J}{d t} \leq 0. 
\end{eqnarray}
Consequently, we get $J(t) = \frac{1}{2} \norm{R(t,x) }^2_{L^2} \leq J(0) < \infty $.
\end{proof}

The above lemma establishes that the PDE residual of the neural network $Q(t,x)$ is uniformly bounded in $L^2$ in time:
\begin{eqnarray}
\sup_{t\geq 0}\norm{\mathcal{L} Q(t) + f(Q(t), Q_x(t) )}_{L^2}^2 \leq C < \infty,
\end{eqnarray}
where $C$ is uniform in time. Since $f(v,w)$ is uniformly bounded, we can use the triangle inequality to show that
\begin{align}
\norm{\mathcal{L} Q(t) }_{L^2} &=\norm{\mathcal{L} Q(t) + f(Q(t), Q_x(t)) -f(Q(t), Q_x(t) ) }_{L^2}  \notag \\
&\leq  \norm{\mathcal{L} Q(t) + f(Q(t), Q_x(t) ) }_{L^2} + \norm{f(Q(t), Q_x(t) )  }_{L^2}   \notag \\
&\leq C_1 < \infty,
\label{LQbound}
\end{align}
also uniformly in time. 

The following lemma shows that, if a solution to (\ref{QlimitEvolution}) exists in $C([0, T]; H_{(0)}^2)$, then it must satisfy an explicit quantitative bound on $[0,T]$. We will later prove existence and uniqueness of the solution in Lemma \ref{Hestimate0}. 

\begin{lemma} \label{Hestimate0apriori}
Suppose that there is a solution $Q(t,x)$ to the equation (\ref{QlimitEvolution}) in $C([0, T]; H_{(0)}^2)$. Then, there exists a constant $C(T)$ such that
\begin{eqnarray}
\sup_{t \in [0,T]} \norm{Q(t)}_{H^2} &\leq& C(T) \notag \\
&=& C_1 T \exp(C_2 T),
\end{eqnarray}
where $C_1$ and $C_2$ are positive finite constants.
\end{lemma}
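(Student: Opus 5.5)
Because of the form $C_1 T\exp(C_2 T)$, the natural route is a Gr\"onwall argument on $\norm{Q(t)}_{H^2}$, using the fact that $Q(0)=0$. The evolution equation gives, in integrated form,
\begin{eqnarray*}
Q(t,x) = -\alpha \int_0^t \int_{\Omega} U_{Q(s)}(x,y) R(s,y)\mu(dy)\, ds ,
\end{eqnarray*}
so it suffices to bound $\norm{\int_\Omega U_{Q(s)}(\cdot,y)R(s,y)\mu(dy)}_{H^2}$ by $C(1+\norm{Q(s)}_{H^2})$, or better by a constant.

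\textbf{Step 1: bound the kernel integral in $H^2$.} Write the kernel integral as an expectation over $\mu_0$:
\begin{eqnarray*}
\int_\Omega U_{Q}(x,y)R(y)\mu(dy) = \mathbb{E}_{\mu_0}\Big[\nabla_\theta[\eta(x)c\sigma(w\cdot x+b)]\cdot \int_\Omega D_Q[\nabla_\theta(\eta(y)c\sigma(w\cdot y+b))]R(y)\mu(dy)\Big].
\end{eqnarray*}
The inner integral is bounded by $C(1+|w|^3)\norm{R}_{L^2}$. This uses three facts: $D_Q$ involves at most second derivatives of the smooth function $\eta c\sigma(w\cdot y+b)$; the coefficients $f_v$ and $f_{w_i}$ are bounded by Assumption \ref{PDEassumptions}; and $c$ is bounded. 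The constant does not depend on $Q$ at all, because $Q$ enters only through the bounded functions $f_v(Q,Q_x)$ and $f_{w}(Q,Q_x)$. Differentiating the outer factor up to twice in $x$ produces further polynomial factors in $|w|$, bounded by Assumption \ref{NeuralNetworkAssumptions}, since $\sigma$ has three bounded derivatives and $\eta\in C^3_b$. Because $w$ has bounded moments, we obtain
\begin{eqnarray*}
\sup_x \Big|\partial_x^i \int_\Omega U_{Q(s)}(x,y)R(s,y)\mu(dy)\Big| \le C\norm{R(s)}_{L^2}, \qquad |i|\le 2 .
\end{eqnarray*}

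\textbf{Step 2: control $\norm{R(s)}_{L^2}$.} Lemma \ref{RuniformBound} gives $\norm{R(s)}_{L^2}\le C$ uniformly in time, and it only needs the energy identity (\ref{ObjEvolution}). That identity is legitimate for a solution in $C([0,T];H^2_{(0)})$, since $R(s)\in L^2$ and $t\mapsto R(t)$ is differentiable along the flow. If one prefers to avoid using Lemma \ref{RuniformBound}, bound $R$ directly:
\begin{eqnarray*}
\norm{R(s)}_{L^2}\le \norm{\mathcal{L}Q(s)}_{L^2}+\norm{f}_\infty+\norm{g}_{L^2}\le C_2\norm{Q(s)}_{H^2}+C .
\end{eqnarray*}
This uses the smooth coefficients of $\mathcal{L}$ and the boundedness of $f$ and $g$.

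\textbf{Step 3: close with Gr\"onwall.} Using Step 1 with the direct bound from Step 2, and Minkowski's inequality for the time integral,
\begin{eqnarray*}
\norm{Q(t)}_{H^2}\le \alpha\int_0^t \big(C_1 + C_2\norm{Q(s)}_{H^2}\big)ds .
\end{eqnarray*}
Gr\"onwall's inequality with $Q(0)=0$ then gives $\norm{Q(t)}_{H^2}\le C_1 t\, e^{C_2 t}\le C_1T e^{C_2 T}$, which is exactly the stated form. With the uniform residual bound from Lemma \ref{RuniformBound} one even gets the linear bound $C_1T$, which is dominated by the stated one.

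\textbf{Main obstacle.} The main technical point is Step 1. We must justify differentiating under both the $\mu_0$-expectation and the $y$-integral, and check that the constants are genuinely independent of $Q$. This relies on the boundedness of $f_v$ and $f_w$, on $\eta\in C^3_b$, and on the moment bounds for $w$ (terms such as $|w|^4|c|$ arise). The passage from the pointwise $C^2$ bound to the $H^2$ norm is immediate because $\Omega$ is compact.
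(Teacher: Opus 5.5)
Your proposal is correct and follows essentially the paper's argument. Both write $Q(t)$ as the time integral of the kernel term using $Q(0)=0$, bound the kernel and its first two $x$-derivatives independently of $Q$ (via boundedness of $f_v,f_w$, of $c$, and the moments of $w$), bound the residual linearly in the second-order quantities of $Q$ using boundedness of $f$ and $g$, and close with Gr\"onwall. The only difference is cosmetic: the paper runs Gr\"onwall on the pointwise quantity $\sup_x V(t,x)$ with $V=|Q|+\sum_i|Q_{x_i}|+\sum_{i,j}|Q_{x_ix_j}|$, whereas you run it directly on $\norm{Q(t)}_{H^2}$ via Cauchy--Schwarz against $\norm{R(s)}_{L^2}$, which spares you the paper's separate justification that these pointwise derivatives are finite.
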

\begin{proof}
Integrating (\ref{QlimitEvolution}) with respect to time and differentiating with respect to the spatial variables yields
\begin{align}
Q(t,x) &= - \alpha  \int_0^t \int_{\Omega} U_{Q(s)}(x,y)  \big{(} \mathcal{A}_{Q(s ) } Q(s, y ) - g(y) \big{)} \mu(dy), \notag \\
 \nabla_{x} Q(t, x ) &= - \alpha \int_0^t  \int_{\Omega} \nabla_x U_{Q(s)}(x,y)  \big{(} \mathcal{A}_{Q(s) } Q(s, y ) - g(y) \big{)} \mu(dy), \notag \\
 \nabla_{xx} Q(t, x ) &= - \alpha \int_0^t  \int_{\Omega} \nabla_{xx} U_{Q(s)}(x,y)  \big{(} \mathcal{A}_{Q(s) } Q(s, y ) - g(y) \big{)} \mu(dy).
\label{QlimitEvolutionDiff}
\end{align}

Define $V(t,x) =  | Q(t, x ) | + \sum_{i=1}^n | Q_{x_i}(t,x) | + \sum_{i,j=1}^n | Q_{x_i x_j} (t, x ) |$. Taking an absolute value, summing over $i,j = 1,\ldots, n$, and using the boundedness assumptions on $f$ produces
\begin{align}
& V(t,x) \leq   C_1  \int_0^t  \int_{\Omega} \max_{i,j} ( | U_{Q(s)}(x,y)| +  |\partial_{x_i} U_{Q(s)}(x,y) |  + | \partial_{x_i x_j} U_{Q(s)}(x,y)   | ) V(s,y)  dy ds \notag \\
&+ C_2  \int_0^t  \int_{\Omega} \max_{i,j} ( | U_{Q(s)}(x,y)| +  |\partial_{x_i} U_{Q(s)}(x,y) |  + | \partial_{x_i x_j} U_{Q(s)}(x,y)   | ) (1+|g(y)| )  dy ds. 
\label{Vbound11}
\end{align}

Here $\max_{i,j} ( | U_{Q(s)}(x,y)| +  |\partial_{x_i} U_{Q(s)}(x,y) |  + | \partial_{x_i x_j} U_{Q(s)}(x,y)   | )$ is uniformly bounded (in both the spatial variables $x,y \in \Omega$ and time $t \geq 0$) due to the assumptions on the moments of $w_0^i$, the derivatives of $\sigma$ being bounded, and $\Omega$ being a compact domain. Therefore, the last term in (\ref{Vbound11}) is bounded via the Cauchy-Schwarz inequality since $g \in L^2$. Due to the assumption that $Q(t) \in H^2$,  we can also apply the Cauchy-Schwarz inequality to (\ref{QlimitEvolutionDiff}) to show that $(Q(t,x), \nabla_{x} Q(t, x ), \nabla_{xx} Q(t, x ))$ exist and are finite for each $(t,x) \in (0,T) \times \Omega$. 

This allows us to take a supremum first on the RHS and then on the LHS of (\ref{Vbound11}): 
\begin{align}
\sup_{x \in \Omega} V(t,x) &\leq   C_1 \int_0^t  \int_{\Omega} \sup_{y \in \Omega} V(s,y)  dy ds + C_2 t \notag \\
&\leq   C_3 \int_0^t \sup_{y \in \Omega} V(s,y)  ds + C_2 t.\nonumber
\end{align}

By Gronwall's inequality, 
\begin{eqnarray}
\sup_{x \in \Omega} V(t,x) &\leq C_0 t \exp( C_2 t ),
\label{UniformVBound0}
\end{eqnarray}
where $C_0$ and $C_2$ are positive finite constants. This of course implies that $\norm{ Q(t,x)}_{H^2} \leq C(T) = C_1 T \exp( C_2 T )$ for every $t \in [0,T]$. 
\end{proof}

We will now prove existence and uniqueness of a solution to (\ref{QlimitEvolution}) in $C([0, T]; H_{(0)}^2)$. 

\begin{lemma} \label{Hestimate0}
There exists a unique solution $Q(t,x)$ to the equation (\ref{QlimitEvolution}) in $C([0, T]; H_{(0)}^2)$, where $T > 0$ is arbitrary. 
\end{lemma}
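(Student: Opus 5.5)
We will prove the statement by a Picard iteration (Banach fixed point) argument on the space $X_T = C([0,T]; H^2_{(0)})$, treating (\ref{QlimitEvolution}) in its integrated form
\begin{eqnarray*}
Q(t,x) = \Phi(Q)(t,x) := - \alpha \int_0^t \int_{\Omega} U_{Q(s)}(x,y) \big( \mathcal{A}_{Q(s)} Q(s,y) - g(y) \big) \mu(dy) ds .
\end{eqnarray*}
The key structural observation is that the kernel $U_Q(x,y)$ depends on $Q$ only through the bounded functions $f_v(Q(y),Q_y(y))$ and $f_{w_i}(Q(y),Q_y(y))$ appearing in $D_Q$ acting in the $y$ variable. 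In the $x$ variable it is a smooth superposition of $\nabla_\theta[\eta(x) c\sigma(w\cdot x+b)]$. Hence $\Phi(Q)(t,\cdot)$ is automatically $C^2$ in $x$ and vanishes on $\partial\Omega$ because of the factor $\eta(x)$, so $\Phi$ maps $X_T$ into $X_T$. Continuity in $t$ in $H^2$ is immediate from the time integral, once the integrand is bounded in $L^2$.

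First we establish the basic bounds. By Assumption \ref{NeuralNetworkAssumptions} (bounded $c$, bounded derivatives of $\sigma$ up to order three, moments of $w$, $\eta\in C^3_b$) and the boundedness of $f$ and its first two derivatives (Assumption \ref{PDEassumptions}), we have
\[
\sup_{x,y}\big(|U_Q| + |\partial_x U_Q| + |\partial_{xx} U_Q|\big)(x,y) \le C
\]
uniformly in $Q$. Moreover $\|\mathcal{A}_Q Q - g\|_{L^2} \le C(1+\|Q\|_{H^2})$. Together these give $\|\Phi(Q)(t)\|_{H^2} \le C\int_0^t (1+\|Q(s)\|_{H^2})\,ds$.

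Next comes the Lipschitz estimate, which is the main point. For $Q_1,Q_2 \in X_T$ we split the difference of the integrands into two terms:
\[
(U_{Q_1}-U_{Q_2}) R_1 + U_{Q_2}(R_1-R_2), \qquad R_k = \mathcal{A}_{Q_k}Q_k - g .
\]
For the second term, $R_1 - R_2 = \mathcal{L}(Q_1-Q_2) + f(Q_1,Q_{1,x}) - f(Q_2,Q_{2,x})$, and since $f$ is globally Lipschitz this is bounded in $L^2$ by $C\|Q_1-Q_2\|_{H^2}$. For the first term, the $Q$-dependence of $U_Q$ and its $x$-derivatives enters through $f_v(Q,Q_y)$ and $f_{w}(Q,Q_y)$ at the point $y$, multiplied by bounded functions of $(x,y,\theta)$. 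Because the second derivatives of $f$ are bounded, these are Lipschitz, so
\[
|\partial_x^k(U_{Q_1}-U_{Q_2})(x,y)| \le C\big(|Q_1-Q_2|(y) + |\nabla(Q_1-Q_2)|(y)\big)
\]
for $k \le 2$. Integrating against $R_1$ in $y$ and applying Cauchy–Schwarz bounds this term by $C\|R_1\|_{L^2}\|Q_1-Q_2\|_{H^1}$.

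This is the expected obstacle: $\|R_1\|_{L^2}$ is only controlled by $\|Q_1\|_{H^2}$, so $\Phi$ is only locally Lipschitz. We handle this by restricting $\Phi$ to a ball $\{\sup_t \|Q(t)\|_{H^2} \le M\}$. Using the growth bound from the first step, we choose a short time $\tau$ (depending on $M$) so that $\Phi$ maps the ball to itself and is a contraction in the norm $\sup_{t\le\tau}\|\cdot\|_{H^2}$. Banach's fixed point theorem then gives local existence and uniqueness.

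To globalize, we invoke the a priori bound of Lemma \ref{Hestimate0apriori}. Any solution on $[0,T']\subset[0,T]$ satisfies $\sup\|Q\|_{H^2} \le C(T)$. Choosing $M = 2C(T)+1$, the contraction step length $\tau$ is uniform, so we can restart the iteration from $Q(k\tau)$ (the integrated equation with the new initial datum has the same structure) and reach $T$ in finitely many steps. Uniqueness on all of $[0,T]$ then follows by a Gronwall argument applied to the Lipschitz estimate, with both solutions a priori bounded by $C(T)$. Alternatively it follows from uniqueness on each step.
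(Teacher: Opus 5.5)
Your proposal is correct and follows essentially the same route as the paper: a Banach fixed-point argument for the integrated equation on short intervals. The contraction is set up on a ball whose radius is tied to the a priori bound $C(T)$ of Lemma \ref{Hestimate0apriori}, with Lipschitz constant controlled by $1+\norm{\mathcal{A}_{Q}Q-g}_{L^2}$; the uniform step length allows restarting on each $[k\tau,(k+1)\tau]$, and the boundary condition comes from the factor $\eta(x)$. Your version is slightly more careful than the paper's in two places: it keeps the affine constant in the growth bound (hence $M=2C(T)+1$), and it obtains uniqueness in all of $C([0,T];H^2_{(0)})$ via Gronwall rather than only within the ball.
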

\begin{proof}
We first consider the RHS of equation (\ref{QlimitEvolution}):
\begin{align}
G_{t, \Delta}(q)(\tau, x) &= Q(t,x) - \alpha \int_0^{\tau}  \int_{\Omega} U_{q(s)}(x,y)  \big{(} \mathcal{A}_{q(s) } q(s,y) - g(y) \big{)} \mu(dy) ds ,
\label{QlimitEvolutionRHS5}
\end{align}
where $0 \leq \tau \leq \Delta$. Then, we have that
\begin{align}
& |G_{t, \Delta}(Q_1)(\tau,x) - G_{t, \Delta}(Q_2)(\tau,x) | \leq C  \int_0^{ \tau} \int_{\Omega} \big{|} \mathcal{A}_{Q_1 } Q_1 - \mathcal{A}_{Q_2 } Q_2 \big{|} \mu(dy) ds \notag \\
&\quad+ C  \int_0^{ \tau} \int_{\Omega} (| Q_1 - Q_2 | + \sum_{i=1}^n | \frac{\partial Q_1}{\partial x_i} - \frac{\partial Q_2}{\partial x_i} | )\times  \big{|} \mathcal{A}_{Q_2 } Q_2 - g \big{|} \mu(dy) ds \notag \\
&\leq \int_0^{ \tau} C  ( 1 + \norm{ \mathcal{A}_{Q_2 } Q_2 - g  }_{L^2} ) \times \norm{ Q_1 - Q_2 }_{H^2} ds. \notag \\
&\leq \Delta C  ( 1 + \sup_{s \in [0,  \Delta] } \norm{ \mathcal{A}_{Q_2 } Q_2(s) - g  }_{L^2} ) \notag \\
&\quad\times \sup_{s \in [0, \Delta] } \norm{ Q_1(s) - Q_2(s) }_{H^2},
\label{QlimitEvolutionRHS6}
\end{align}
where we have used the bounded moments of $w_0^i$ and the uniform boundedness of $f_{vv}, f_{vw},$ and $f_{ww}$ in the second line. Similarly, bounds can be established on the first and second derivatives $| \partial_{x_i} G_{t, \Delta}(Q_1)(\tau,x) - \partial_{x_i} G_{t, \Delta}(Q_2)(\tau,x) |$ and $| \partial_{x_i x_j} G_{t, \Delta}(Q_1)(\tau,x) - \partial_{x_i x_j} G_{t, \Delta}(Q_2)(\tau,x) |$, yielding
\begin{align}
&\sup_{s \in [0, \Delta ] } \norm{ G_{t, \Delta}(Q_1)(s) - G_{t, \Delta}(Q_2)(s) }_{H^2} \leq \nonumber\\
&\leq \Delta \times C_1  ( 1 + \sup_{s \in [0,  \Delta] } \norm{ \mathcal{A}_{Q_2 } Q_2(s) - g  }_{L^2} ) \times \sup_{s \in [0, \Delta] } \norm{ Q_1(s) - Q_2(s) }_{H^2}. 
\label{QlimitEvolutionRHS7}
\end{align}

Using similar calculations,
\begin{align}
\sup_{s \in [0, \Delta ] } \norm{ G_{t, \Delta}(q)(s) }_{H^2} &\leq \norm{Q(t)}_{H^2} +  \Delta \times C_2  \times \sup_{s \in [0, \Delta] } \norm{ q(s) }_{H^2}. 
\label{QlimitEvolutionRHS8}
\end{align}

Note that $C_1, C_2$ do not depend upon time $t$. Let $C_3 = \max(C_1, C_2)$. Define the map $M: C([0, \Delta]; H^2) \rightarrow C([0, \Delta]; H^2)$ where $M(q)(s,x) = G_{0, \Delta}(q)(s,x)$. Define the set $B = \{Q: \norm{ Q }_{C([0, \Delta]; H^2)} \leq 2C(T) \}$. We will consider the iteration $\bar{Q}^{k+1} = M( \bar{Q}^k )$ with $\bar{Q}^0 \in B$. 

Consider a time interval $[0,T]$, where from our previous estimates (in Lemma \ref{Hestimate0apriori}) any solution must satisfy $\norm{Q(t)}_{H^2} \leq C(T)$ for $t \in [0,T]$. Then, due to the bound (\ref{QlimitEvolutionRHS8}), $M(q) \in B$ for any $q \in B$ if $\Delta \leq \frac{1}{C_3 \times 4C(T)}$. 

Consider two functions $Q_1, Q_2 \in C([0, \Delta]; H^2)$ where $\sup_{t \in [0, \Delta] } \norm{Q_1(t) }_{H^2} \leq 2 C(T)$ and $\sup_{t \in [0, \Delta] } \norm{Q_2(t) }_{H^2} \leq 2 C(T)$. From (\ref{QlimitEvolutionRHS7}), if we select $\Delta$ sufficiently small, we can establish a contraction $\norm{M(Q^1)- M(Q^2)}_{C([0, \Delta]; H^2)} \leq \ell \norm{Q^1 - Q^2}_{C([0, \Delta]; H^2)}$ for $0 < \ell < 1$ and $Q^1, Q^2 \in B$. 

By the Banach fixed point theorem, $M(q)$ has a unique fixed point in $B$. Therefore, there is a unique solution on the time interval $[0, \Delta]$. Once a unique solution is established on $[0, \Delta]$, we can -- using the same method -- prove a unique solution on the time intervals $[\Delta, 2 \Delta], [2 \Delta, 3 \Delta], \ldots$. For the subsequent time intervals, e.g. $[i \Delta, (i+1) \Delta]$, the map is chosen to be $M(q)(s,x) = G_{i \Delta, (i+1) \Delta}(q)(s,x)$ for $s \in [0, \Delta]$ with the initial condition in (\ref{QlimitEvolutionRHS5}) being the unique solution $Q(i \Delta,x)$ (from the previous time interval). The contraction constant $\ell$ and the set $B$ will remain the same (for all time intervals $[i \Delta, (i+1) \Delta]$ of $[0,T]$), and the fixed point of the map $M$ will be the unique solution on the time interval $[i \Delta, (i+1) \Delta]$. 

Finally, we note that $Q(t,x)$ is in $H^2_{(0)}$ for all $t \geq 0$ since $Q(0,x) = 0$ and $U_{Q(t)}(x,y) = 0$ for $x \in \partial \Omega$, which leads to $Q(t,x) = 0$ for $x \in \partial \Omega$. This completes the proof. 

\end{proof}

Lemma \ref{Hestimate0} proves that there is a unique solution to equation (\ref{QlimitEvolution}) in $C([0, T]; H_{(0)}^2)$. Furthermore, it can be easily verified that the solution's time derivative exists and is finite at every $(t,x) \in (0,T) \times \Omega$. Since $Q(t,x) \in C([0, T]; H_{(0)}^2)$, $A(s,x) =\int_{\Omega} U_{Q(s)}(x,y)  \big{(} \mathcal{A}_{Q(s) } Q(s, y ) - g(y) \big{)} \mu(dy)$ is continuous in time $s$ for each $x \in \Omega$. Since $Q(t,x) = Q(0,x) - \alpha \int_0^t A(s,x) ds$, $Q(t,x)$ has a time derivative which satisfies $\frac{dQ}{dt} = - \alpha A(t,x)$ by the fundamental theorem of calculus.

We are now in position to prove a uniform $H^2$ bound on $Q(t,x)$:
\begin{lemma} \label{Qbound}
$Q(t,x)$ is uniformly bounded in $H^2$:
\begin{eqnarray}
\norm{ Q(t) }_{H^2} \leq C_2 < \infty. 
\end{eqnarray}
\end{lemma}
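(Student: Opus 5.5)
We derive the uniform $H^2$ bound from elliptic regularity applied at each fixed time $t$, feeding in the uniform residual bound of Lemma \ref{RuniformBound} in the form (\ref{LQbound}). By Lemma \ref{Hestimate0}, $Q(t)\in H^2_{(0)}$ for every $t\geq 0$, so $\mathcal{L}Q(t)$ is defined in $L^2$ and $Q(t)$ vanishes on $\partial\Omega$. Estimate (\ref{LQbound}) gives $\sup_{t\geq0}\norm{\mathcal{L}Q(t)}_{L^2}\leq C_1$, with $C_1$ depending only on $J(0)$, $\sup|f|$ and $|\Omega|$; in particular it does not depend on $t$. What remains is to turn the $L^2$ bound on $\mathcal{L}Q(t)$ into an $H^2$ bound on $Q(t)$ with a constant independent of $t$.

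The first step is an $H^1$ bound. We test $\mathcal{L}Q(t)=F(t)$, where $F(t):=\mathcal{L}Q(t)$, against $Q(t)\in H^1_0$ and use the coercivity of $\mathcal{L}$ from Assumption \ref{PDEassumptions} (uniform ellipticity together with $\inf_n\lambda_n>0$). This yields $\norm{Q(t)}_{H^1}^2\leq C\langle \mathcal{L}Q(t),Q(t)\rangle\leq C\norm{F(t)}_{L^2}\norm{Q(t)}_{L^2}$. Combining this with Poincar\'e's inequality gives $\norm{Q(t)}_{H^1}\leq C\norm{F(t)}_{L^2}\leq CC_1$.

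A cleaner route uses the spectral condition directly. Since $\inf_n\lambda_n>0$, the operator $\mathcal{L}$ with Dirichlet conditions has trivial kernel. By the Fredholm alternative (Theorem 8.6 in \cite{GT}), the problem $\mathcal{L}h=F$, $h=0$ on $\partial\Omega$, then has a unique solution. Moreover, the a priori bound $\norm{h}_{H^1}\leq C\norm{F}_{L^2}$ holds; this follows from the usual compactness-contradiction argument, which requires uniqueness.

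The second step upgrades to $H^2$. We apply the global $H^2$ regularity estimate for the Dirichlet problem (Theorem 8.12 in \cite{GT}), which is valid since $\partial\Omega$ is smooth and the coefficients $a_{ij},c_i,d$ are smooth. It gives
\begin{eqnarray*}
\norm{Q(t)}_{H^2}\leq C\big(\norm{\mathcal{L}Q(t)}_{L^2}+\norm{Q(t)}_{L^2}\big),
\end{eqnarray*}
where $C$ depends only on $\Omega$ and the coefficients of $\mathcal{L}$, and not on $t$. Inserting the bounds from the previous steps yields $\sup_{t\geq0}\norm{Q(t)}_{H^2}\leq C_2<\infty$.

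The main obstacle is securing that every constant is genuinely time-independent. The Gronwall bound of Lemma \ref{Hestimate0apriori} grows like $Te^{CT}$ and cannot be used here. The argument therefore has to rely only on (i) the monotonicity of $J$, which gives the $L^2$ bound on $\mathcal{L}Q(t)$, and (ii) the bound on $\norm{Q(t)}_{L^2}$ obtained from invertibility of $\mathcal{L}$. If the coercivity step is delicate because of the first-order terms $c_i\partial_{x_i}$, we would fall back on the spectral assumption: $0$ is not an eigenvalue, so $\mathcal{L}^{-1}:L^2\to H^2_{(0)}$ is bounded, and $\norm{Q(t)}_{H^2}=\norm{\mathcal{L}^{-1}F(t)}_{H^2}\leq C\norm{F(t)}_{L^2}$ directly.
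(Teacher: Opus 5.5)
Your proposal is correct and follows essentially the same route as the paper: the time-uniform bound (\ref{LQbound}) on $\norm{\mathcal{L}Q(t)}_{L^2}$, a time-independent $L^2$ bound on $Q(t)$ from coercivity of $\mathcal{L}$ (the paper tests against $Q(t)$ and divides by $\norm{Q(t)}_{L^2}$, just as you do), and then the global estimate of Theorem 8.12 in \cite{GT}. Your fallback via the bounded inverse $\mathcal{L}^{-1}:L^2\to H^2_{(0)}$ is a worthwhile safeguard, because with the first-order terms $c_i\partial_{x_i}$ positivity of the eigenvalues does not by itself imply coercivity, and the paper simply takes coercivity as part of Assumption \ref{PDEassumptions}.
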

\begin{proof}
Due to Lemma \ref{Hestimate0}, $Q(t) \in H_{(0)}^2$. Similar to equation (\ref{LQbound}), we can show that $Q(t)$ is a solution to a PDE $\mathcal{L} Q(t) = q(t)$ where $q(t) \in L^2$. Specifically, $\mathcal{L} Q(t) + f( Q(t), Q_x(t) ) = R(t)$. Re-arranging, $\mathcal{L} Q(t) =  -f( Q(t), Q_x(t) ) + R(t)$. Due to $f(v,w)$ being uniformly bounded and $\norm{R(t)}_{L^2}$ also being uniformly bounded, $Q(t)$ satisfies the PDE $\mathcal{L} Q(t) = q(t)$ where $q(t)$ is uniformly-in-time bounded (see Lemma \ref{RuniformBound}). Therefore, from Theorem 8.12 (page 186) in  \cite{GT},
\begin{align}
\norm{ Q(t) }_{H^2}  &\leq K ( \norm{\mathcal{L} Q(t) }_{L^2} + \norm{ Q(t) }_{L^2} ) \notag \\
&\leq K_1 + K \norm{ Q(t) }_{L^2}.\notag
\end{align}

It only remains to establish the $L^2$ bound on $Q(t)$. Due to our assumptions $\mathcal{L}$ satisfies the coercivity condition:
\begin{eqnarray}
K_2 \norm{ Q(t) }^2_{L^2} \leq \langle \mathcal{L} Q(t), Q(t) \rangle,\notag
\end{eqnarray}
where $K_2 > 0$. Applying the Cauchy-Schwarz inequality on the RHS,
\begin{eqnarray}
K_2 \norm{ Q(t) }^2_{L^2} \leq \norm{ \mathcal{L} Q(t)}_{L^2}  \norm{ Q(t) }_{L^2}.\notag
\end{eqnarray}

Dividing by $\norm{ Q(t) }_{L^2}$ for the case where $\norm{ Q(t) }_{L^2} > 0$, we have the uniform-in-time bound
\begin{align}
 \norm{ Q(t) }_{L^2} &\leq\frac{1}{K_2} \norm{ \mathcal{L} Q(t)}_{L^2} \leq \frac{C_1}{K_2},\notag
\end{align}
where the second inequality is due to (\ref{LQbound}). When $\norm{ Q(t) }_{L^2} = 0$, we also of course have that $ \norm{ Q(t) }_{L^2}  \leq C_2$ where $C_2 = \frac{C_1}{K_2} > 0$.\end{proof}

\begin{lemma} \label{StronglyConvergentSubseq}
For any sequence of times $t_n \rightarrow \infty$, there is a convergent subsequence $Q(t_{n_k}) \in H^1$. 
\end{lemma}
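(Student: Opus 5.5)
The plan is to combine the uniform-in-time $H^2$ bound of Lemma \ref{Qbound} with the Rellich--Kondrachov compact embedding. Let $t_n \rightarrow \infty$ be any sequence of times. By Lemma \ref{Qbound}, $\sup_n \norm{Q(t_n)}_{H^2} \leq C_2 < \infty$. By Lemma \ref{Hestimate0}, each $Q(t_n)$ also lies in $H^2_{(0)} = H^2 \cap H^1_0$. So $\{Q(t_n)\}_n$ is a bounded subset of $H^2(\Omega)$.

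Since $\Omega$ is bounded with smooth boundary (Assumption \ref{PDEassumptions}), the Rellich--Kondrachov theorem gives a compact embedding $H^2(\Omega) \hookrightarrow H^1(\Omega)$. The derivation is routine: $Q$ and each $\partial_{x_i} Q$ lie in a bounded set of $H^1(\Omega)$, and $H^1(\Omega) \hookrightarrow L^2(\Omega)$ is compact. Extracting subsequences successively for $Q$ and for each of the $n$ first derivatives, we obtain a subsequence $t_{n_k}$ such that $Q(t_{n_k})$ is Cauchy, hence convergent, in $H^1$. Call the limit $Q^* \in H^1$.

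It is also worth recording the extra properties of $Q^*$ that the convergence argument will later need.
\begin{itemize}
\item Since $H^1_0$ is a closed subspace of $H^1$, we have $Q^* \in H^1_0$.
\item Bounded sets in the Hilbert space $H^2$ are weakly sequentially compact. Passing to a further subsequence, $Q(t_{n_k}) \rightharpoonup \tilde Q$ weakly in $H^2$.
\item Weak $H^2$ convergence implies weak $H^1$ convergence. By uniqueness of limits, $\tilde Q = Q^*$.
\item By weak lower semicontinuity of the norm, $Q^* \in H^2_{(0)}$ with $\norm{Q^*}_{H^2} \leq C_2$.
\end{itemize}

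The proof is essentially immediate, and there is no genuine obstacle. The only points needing care are invoking the compactness of $H^2 \hookrightarrow H^1$ correctly, which uses the bounded smooth domain, and noting that the uniform bound in Lemma \ref{Qbound} holds for all $t \geq 0$. That uniformity is what makes the argument valid along any sequence $t_n \rightarrow \infty$, not just on compact time intervals.
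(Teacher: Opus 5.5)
Your proof is correct and follows the paper's argument: the uniform-in-time $H^2$ bound from Lemma \ref{Qbound}, combined with the Rellich--Kondrachov compactness of $H^2 \hookrightarrow H^1$ on the bounded smooth domain, gives an $H^1$-convergent subsequence. The extra properties you record (the limit lies in $H^1_0$ by closedness, and in $H^2_{(0)}$ with $\norm{Q^*}_{H^2} \leq C_2$ by weak compactness and lower semicontinuity) are correctly argued but are not needed for the lemma itself.
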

\begin{proof}
By the Rellich-Kondrachov theorem (see Theorem 7.29 in \cite{RenardyRogers}), the bounded set $A = \{ u \in H^2: \norm{ u}_{H^2} \leq C \}$ is precompact in $H^1$. Since $\norm{ Q(t_n) }_{H^2} \leq C$, $Q(t_n) \in A$ where $A$ is precompact in $H^1$. Therefore, there is a further subsequence $Q(t_{n_k})$ which converges to a limit $Q$ in $H^1$ (where the limit point may depend upon the sequence $t_{n_k}$).
\end{proof}

\begin{lemma} \label{WeaklyConvergentSubseq}
For any sequence of times $t_n \rightarrow \infty$, there is a weakly convergent subsequence $R(t_{n_k},x)$ in $L^2$.
\end{lemma}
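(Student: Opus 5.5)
The plan is to combine the uniform-in-time $L^2$ bound on the residual from Lemma \ref{RuniformBound} with the weak sequential compactness of bounded sets in a Hilbert space. By Lemma \ref{RuniformBound}, $J(t)$ is nonincreasing with $J(0)<\infty$. Since $R(t,\cdot) = \mathcal{A}_{Q(t)}Q(t,\cdot) - g$, we get $\norm{R(t,\cdot)}_{L^2}^2 = 2J(t) \leq 2J(0)$ for all $t \geq 0$. Here $\mu$ is the uniform probability measure (Assumption \ref{Ass:Mu_measure}), so the $L^2(\mu)$ and Lebesgue $L^2(\Omega)$ norms differ only by the factor $|\Omega|^{1/2}$. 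Hence, for any sequence $t_n\to\infty$, the sequence $R(t_n,\cdot)$ is bounded in $L^2(\Omega)$.

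First, I would check that $R(t,\cdot)$ really lies in $L^2$ for each $t$. By Lemma \ref{Hestimate0} and Lemma \ref{Qbound}, $Q(t)\in H^2_{(0)}$ with $\norm{Q(t)}_{H^2}\le C_2$ uniformly in $t$. The coefficients of $\mathcal{L}$ are smooth on the compact domain $\Omega$, so $\norm{\mathcal{L}Q(t)}_{L^2}\le C\norm{Q(t)}_{H^2}$. The functions $f$ and $g$ are bounded, so $f(Q,Q_x)-g$ is bounded in $L^\infty\subset L^2$. This gives an independent uniform bound $\sup_t\norm{R(t)}_{L^2}\le C$, which duplicates Lemma \ref{RuniformBound} and confirms that the $R(t_n)$ are genuine elements of $L^2$.

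Second, I would apply the Banach--Alaoglu theorem in the reflexive (indeed Hilbert) space $L^2(\Omega)$. Bounded sets are weakly sequentially precompact there; in a separable Hilbert space this can also be shown directly by a diagonal argument on an orthonormal basis. So there is a subsequence $t_{n_k}$ and some $\bar R\in L^2$ with $\langle R(t_{n_k}),\varphi\rangle\to\langle \bar R,\varphi\rangle$ for all $\varphi\in L^2$, and by weak lower semicontinuity $\norm{\bar R}_{L^2}\le \liminf_k\norm{R(t_{n_k})}_{L^2}$.

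Third, since the argument that follows in the paper will presumably need $Q$ and $R$ along the same times, I would take the subsequence compatibly with Lemma \ref{StronglyConvergentSubseq}. I would first extract $t_{n_k}$ along which $Q(t_{n_k})\to \bar Q$ in $H^1$, and then a further subsequence along which $R$ converges weakly; both convergences persist along further subsequences. I do not expect a genuine obstacle, since the statement is a direct consequence of the uniform bound and weak compactness. The only point needing care is identifying the weak limit $\bar R$ with $\mathcal{A}_{\bar Q}\bar Q-g$. That is not required here, but it would follow from $H^2$-weak convergence of $Q(t_{n_k})$, which holds because $Q(t_n)$ is bounded in $H^2$, together with strong $H^1$ convergence and the Lipschitz continuity of $f$.
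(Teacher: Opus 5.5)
Your argument is correct and is essentially the paper's: the uniform bound $\norm{R(t)}_{L^2}^2 = 2J(t) \leq 2J(0)$ from Lemma \ref{RuniformBound}, combined with weak sequential compactness of bounded sets in the Hilbert space $L^2$ (the paper simply cites the weak compactness theorem in Evans, Appendix D). One minor correction: the bound in your second paragraph is not independent of Lemma \ref{RuniformBound}, because Lemma \ref{Qbound} is itself proved from that lemma's residual bound; nothing in your proof depends on this aside.
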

\begin{proof}
Since $R(t_n)$ is uniformly bounded in $L^2$, there is a weakly convergent subsequence $R(t_{n_k}) \overset{w} \rightarrow R \in L^2$ (Theorem 3, Appendix D, page 639 in \cite{Evans}).
\end{proof}

We now establish a uniform bound on $\frac{\partial Q}{\partial t}$ (and its first and second spatial derivatives) which will be essential for the subsequent analysis. 
\begin{lemma} \label{QTimederivativeUniformBound}
$\sup_{x \in \Omega} |\frac{\partial Q}{\partial t}(t,x) | $, $\sup_{x \in \Omega} |\frac{\partial Q_{x_i}}{\partial t}(t,x) | $, and $\sup_{x \in \Omega} |\frac{\partial Q_{x_i x_j}}{\partial t}(t,x) | $ are uniformly bounded in time $t \geq 0$. 
\end{lemma}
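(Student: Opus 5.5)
We start from the evolution equation (\ref{QlimitEvolution}). Differentiating under the integral in $x$ gives, for $|i|\le 2$,
\begin{eqnarray*}
\frac{\partial}{\partial t}\partial_x^i Q(t,x) = -\alpha \int_{\Omega} \partial_x^i U_{Q(t)}(x,y)\, R(t,y)\, \mu(dy),
\end{eqnarray*}
where $R(t,y) = \mathcal{A}_{Q(t)}Q(t,y) - g(y)$. The justification is the same as in the proof of Lemma \ref{Hestimate0apriori}: $U_{Q(t)}$ is smooth in $x$, and for fixed $t$, $R(t)\in L^2$ because $Q(t)\in H^2_{(0)}$. Since $\mu$ is a probability measure, Cauchy--Schwarz in $y$ gives
\begin{eqnarray*}
\sup_{x\in\Omega}\Big|\frac{\partial}{\partial t}\partial_x^i Q(t,x)\Big| \le \alpha \sup_{x,y\in\Omega} |\partial_x^i U_{Q(t)}(x,y)| \, \norm{R(t)}_{L^2(\mu)}.
\end{eqnarray*}
Lemma \ref{RuniformBound} bounds $\norm{R(t)}_{L^2}$ uniformly in $t\ge 0$ by $\sqrt{2J(0)}$. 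So it remains to bound the kernel factor uniformly in $(x,y,t)$.

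For the kernel, we write $U_{Q}(x,y)$ explicitly. It is an expectation under $\mu_0$ of $\nabla_\theta[\eta(x)c\sigma(w\cdot x+b)]$ dotted with $D_Q\nabla_\theta[\eta(y)c\sigma(w\cdot y+b)]$. Expanding $D_Q$, the kernel is a sum of terms of three kinds:
\begin{itemize}
\item[(i)] $\mathcal{L}$ applied in $y$ to the feature functions. This involves up to two $y$-derivatives of $\eta$ and of $\sigma$, $\sigma'$, multiplied by the smooth coefficients $a_{ij},c_i,d$ and their first derivatives.
\item[(ii)] $f_v(Q(t,y),Q_x(t,y))$ times the feature function.
\item[(iii)] $f_{w_i}(Q(t,y),Q_x(t,y))$ times a first $y$-derivative of the feature.
\end{itemize}
Applying $\partial_x^i$ with $|i|\le 2$ acts only on the $x$-factor $\nabla_\theta[\eta(x)c\sigma(w\cdot x+b)]$. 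This produces derivatives of $\eta$ up to order 2 and of $\sigma$ up to order 3, because the $w$-gradient already contains $\sigma'$. The products are polynomial in $|w|$ and $|x|$ of bounded degree, with $|w|^k$ for $k\le 5$ or so.

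The key point is that the $Q$-dependence enters only through $f_v$ and $f_{w_i}$. By Assumption \ref{PDEassumptions}, these are uniformly bounded regardless of $Q(t)$, so no information about the size of $Q(t)$ is needed. The remaining ingredients are all uniformly bounded: $c$ is a bounded random variable, $\sigma$ has three bounded derivatives, $\eta\in C^3_b$, the coefficients of $\mathcal{L}$ are smooth on the compact set $\Omega$, and $w$ has bounded moments under $\mu_0$ (Assumption \ref{NeuralNetworkAssumptions}). Together these give
\begin{eqnarray*}
\sup_{t\ge0}\sup_{x,y\in\Omega}\max_{|i|\le 2}|\partial_x^i U_{Q(t)}(x,y)| \le C\,\mathbb{E}_{\mu_0}[1+|w|^5] < \infty.
\end{eqnarray*}
Combining this with the residual bound proves the lemma.

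The main obstacle is bookkeeping rather than analysis. We must check that the highest derivative orders appearing (third derivatives of $\sigma$ from $\partial_x^2$ acting on the $\sigma'$ term in the $w$-gradient, and second derivatives from $\mathcal{L}$ on the $y$-side) stay within the assumed regularity of $\sigma$ and $\eta$. On the $y$-side, the $\sigma'$ term in the $w$-gradient also produces $\sigma'''$ under the two derivatives from $\mathcal{L}$, which is allowed. We must also check that the required moment of $w$ is finite, which is what "bounded moments" is taken to mean. The differentiation under the integral sign is justified by dominated convergence, using this same uniform bound.
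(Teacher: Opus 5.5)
Your proposal is correct and follows essentially the same route as the paper: you bound $\partial_t \partial_x^i Q$ by Cauchy--Schwarz as a kernel factor that is uniformly bounded in time times $\norm{R(t)}_{L^2}$, where the kernel is controlled by the boundedness of $f_v, f_{w_i}$, $c$, $\eta$, the derivatives of $\sigma$ and the moments of $w$, and the residual is controlled uniformly in time by Lemma \ref{RuniformBound}. The differences are minor: you use $\sup_{x,y}|\partial_x^i U_{Q(t)}(x,y)|$ where the paper uses $\int |U_{Q(t)}(x,y)|^2\mu(dy)$, and you carry out the derivative-order bookkeeping for the spatial derivatives, which the paper omits.
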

\begin{proof}
Recall that $Q(t,x)$ satisfies the evolution equation
\begin{align}
\frac{\partial Q(t, x )}{\partial t} &= - \alpha  \int_{\Omega} U_{Q(t)}(x,y)  R(t,y) \mu(dy), \notag \\
U_t(x,y) &= \bigg{\langle}  \nabla_{\theta} [ \eta(x) c \sigma (w x + b) ] \cdot D_{Q(t)} [\nabla_{\theta} ( \eta(y) c \sigma (w y + b) ) ] , \mu_0 \bigg{\rangle}, \notag
\end{align}
where the initial condition is $Q(0,x) = 0$. Then,
\begin{align}
|\frac{\partial Q(t, x )}{\partial t}|^2 &\leq \alpha^2 \bigg{(}  \int_{\Omega} |U_{Q(t)}(x,y)|  |R(t,y)| \mu(dy) \bigg{)}^2, \notag \\
&\leq C \int_{\Omega} |U_{Q(t)}(x,y)|^2 \mu(dy) \times \int_{\Omega} |R(t,y)|^2 \mu(dy) \notag \\
&\leq C < \infty,
\end{align}
where we have applied Assumption \ref{PDEassumptions} and Assumption \ref{NeuralNetworkAssumptions} as well as the bound from Lemma \ref{RuniformBound}.

Similarly, we can calculate uniform bounds for the first and second spatial derivatives of $\frac{\partial Q}{\partial t}$. Details are omitted due to the similarity of the argument.
\end{proof}

\begin{lemma} \label{BarbalatLemma}
We have that
\begin{eqnarray}
\lim_{t \rightarrow \infty} \langle R(t), S_{Q(t)} R(t) \rangle = 0. \nonumber
\end{eqnarray}
\end{lemma}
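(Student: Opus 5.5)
This is a Barbalat-type argument. Set $h(t) = \langle R(t), S_{Q(t)} R(t) \rangle$, where the inner product is taken with respect to $\mu$, so that $\frac{dJ}{dt} = -\alpha h(t)$ by (\ref{ObjEvolution}). Two facts will be combined. First, $h \geq 0$ by the computation (\ref{LowerBoundRHS}), and $J \geq 0$ is monotone decreasing by Lemma \ref{RuniformBound}. Integrating over $[0,T]$ gives
\begin{eqnarray*}
\alpha \int_0^T h(s) ds = J(0) - J(T) \leq J(0) < \infty ,
\end{eqnarray*}
so $h \in L^1([0,\infty))$. Second, I will show that $h$ is uniformly continuous on $[0,\infty)$; in fact it is Lipschitz with a time-independent constant. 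Barbalat's lemma then gives $h(t) \rightarrow 0$. For completeness I would include its short proof. If $h(t_n) \geq \epsilon$ along a sequence $t_n \rightarrow \infty$, uniform continuity gives $h \geq \epsilon/2$ on intervals $[t_n, t_n + \delta]$ with $\delta$ independent of $n$. These intervals can be taken disjoint, which contradicts integrability.

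The main work is the Lipschitz bound on $h$. Write
\begin{eqnarray*}
h(t) = \mathbb{E}_{\mu_0}\Big[ \big\| \textstyle\int_\Omega R(t,x)\, D_{Q(t)}[\nabla_\theta(\eta(x) c\sigma(w\cdot x+b))]\,\mu(dx) \big\|_2^2 \Big],
\end{eqnarray*}
as in (\ref{ElementwiseSquare}). Rather than differentiating $R$ directly, which would require a bound on $\partial_t R$ in $L^2$, I would integrate by parts in $x$ in the $\mathcal{L}$ part. Moving $D_{Q(t)}$ onto $R$ by parts is awkward because $R$ is only in $L^2$, so I will instead expand it as
\begin{eqnarray*}
h(t) = \int\int R(t,x) S_{Q(t)}(x,y) R(t,y)\,\mu(dx)\mu(dy),
\end{eqnarray*}
and estimate
\begin{eqnarray*}
|h(t)-h(s)| \leq \| R(t)-R(s) \|_{L^2}\, \|S_{Q(t)}\|\, \big(\|R(t)\|_{L^2}+\|R(s)\|_{L^2}\big) + \|R(s)\|^2_{L^2}\, \|S_{Q(t)}-S_{Q(s)}\|_{L^2(\Omega^2)} .
\end{eqnarray*}

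Each factor is then controlled by earlier results. The $L^2$ bounds on $R$ come from Lemma \ref{RuniformBound}. For the kernel, $S_Q(x,y)$ and its differences depend on $(Q,Q_x)$ only through $f_v, f_{w_i}$ evaluated at $(Q(x),Q_x(x))$ and $(Q(y),Q_y(y))$, multiplied by integrals against $\mu_0$ that are bounded because $\sigma$ has bounded derivatives, $w$ has bounded moments, $c$ is bounded, and $\Omega$ is compact. Since $f_v$ and $f_w$ are Lipschitz (the second derivatives of $f$ are bounded), we get
\begin{eqnarray*}
\|S_{Q(t)}\|_{L^\infty} \leq C, \qquad |S_{Q(t)}(x,y)-S_{Q(s)}(x,y)| \leq C\big(|\delta(x)|+|\delta(y)|\big),
\end{eqnarray*}
where $\delta = |Q(t)-Q(s)| + |Q_x(t)-Q_x(s)|$. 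Lemma \ref{QTimederivativeUniformBound} then bounds $\sup_x \delta \leq C|t-s|$.

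The step I expect to be the main obstacle is $\|R(t)-R(s)\|_{L^2} \leq C|t-s|$. Write
\begin{eqnarray*}
R(t)-R(s) = \mathcal{L}(Q(t)-Q(s)) + f(Q(t),Q_x(t)) - f(Q(s),Q_x(s)) .
\end{eqnarray*}
The $\mathcal{L}$ term involves second spatial derivatives, with smooth bounded coefficients. Lemma \ref{QTimederivativeUniformBound} controls $\sup_x|\partial_t Q_{x_ix_j}|$ and $\sup_x|\partial_t Q_{x_i}|$ uniformly in time. Hence
\begin{eqnarray*}
\sup_x |\partial^k_x Q(t,x)-\partial^k_x Q(s,x)| \leq C|t-s| \quad \textrm{for } |k|\leq 2 ,
\end{eqnarray*}
and so $\|\mathcal{L}(Q(t)-Q(s))\|_{L^2} \leq C|t-s|$. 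The $f$ term is Lipschitz in $(v,w)$, which gives the same type of bound. Altogether $|h(t)-h(s)| \leq C|t-s|$ with $C$ independent of time, which completes the uniform continuity step and hence the lemma. If the differentiation under the time integral is delicate, I would instead argue directly from the integral representation $Q(t)-Q(s) = -\alpha\int_s^t A(r,\cdot)\,dr$ together with the uniform bounds on $A$ and its spatial derivatives.
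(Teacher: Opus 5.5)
Your proposal is correct and follows essentially the same route as the paper. It uses Barbalat's lemma, and it gets uniform continuity of $\langle R(t), S_{Q(t)}R(t)\rangle$ from the same kind of splitting: a kernel-difference term, controlled by the Lipschitz dependence of $S_Q$ on $(Q,Q_x)$ through the bounded second derivatives of $f$, and residual-difference terms, controlled by $\|R(t)-R(s)\|_{L^2}\leq C|t-s|$ via Lemma \ref{QTimederivativeUniformBound}. Your version of Barbalat, which uses $h\in L^1([0,\infty))$ instead of the existence of $\lim_{t\rightarrow\infty}J(t)$, is equivalent because $\alpha\int_0^t h(s)\,ds = J(0)-J(t)$.
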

\begin{proof}
Recall that
\begin{align}
\frac{d J}{dt} &= - \alpha \langle R(t), S_{Q(t)} R(t) \rangle.
\end{align}

We will apply Barbalat's Lemma (see page 205, Section 5.5 of \cite{NonlinearSystems}) to prove that $\lim_{t \rightarrow \infty} \langle R(t), S_{Q(t)} R(t) \rangle = 0$, which requires showing that (a) $\lim_{t \rightarrow \infty} J(t)$ exists and is finite and (b) $\frac{d J}{dt} = - \alpha \langle R(t), S_{Q(t)} R(t) \rangle$ is uniformly continuous in time $t$. 

First, we show that $\frac{d J}{dt}(t)$ is uniformly continuous. Due to Lemma \ref{Hestimate0}, $Q(t,x) \in C([0, \infty); H_{(0)}^2)$. We start by considering the following decomposition 
\begin{align}
& \langle R(t + \delta), S_{t+ \delta} R(t + \delta) \rangle - \langle R(t ), S_{t} R(t ) \rangle = \langle R(t + \delta), (S_{t+ \delta} - S_t) R(t + \delta) \rangle \notag \\
&\quad+ \langle R(t + \delta), S_t R(t + \delta) \rangle  - \langle R(t ), S_{t} R(t ) \rangle \notag \\
&=  \langle R(t + \delta), (S_{t+ \delta} - S_t) R(t + \delta) \rangle + \langle R(t + \delta), S_t (R(t + \delta) - R(t)) \rangle \notag \\
&\quad + \langle R(t + \delta), S_t R(t) \rangle - \langle R(t ), S_{t} R(t ) \rangle \notag \\
&= \langle R(t + \delta), (S_{t+ \delta} - S_t) R(t + \delta) \rangle + \langle R(t + \delta), S_t (R(t + \delta) - R(t) ) \rangle \notag \\
&\quad + \langle R(t + \delta) - R(t), S_t R(t) \rangle,
\end{align}
where for notational convenience we have denoted $S_t = S_{Q(t)}$. We next bound each of the terms in the last line above. Due to Assumption \ref{PDEassumptions} and Assumption \ref{NeuralNetworkAssumptions}, we have the uniform bound $| S_t(x,y) | \leq C_1$. Recall that $\norm{R(t)}_{L^2} \leq C_0$ and that we have also proven that $\norm{Q(t,x) }_{H^2} < C_2$ in Lemma \ref{Qbound}. Using the Cauchy-Schwarz inequality,
\begin{align}
| \langle R(t + \delta) - R(t), S_t R(t) \rangle |^2 &\leq \norm{ R(t + \delta) - R(t)}_{L^2}^2  \norm{ S_t R(t) }_{L^2}^2 \notag \\
&\leq C_3 \norm{ R(t + \delta) - R(t)}_{L^2}^2  \norm{ R(t) }_{L^2}^2 \notag \\
&\leq C_4 \norm{ Q(t + \delta) - Q(t) }_{H^2}^2.
\end{align}

Using similar calculations
\begin{align}
|  \langle R(t + \delta), S_t (R(t + \delta) - R(t) ) \rangle |^2 &\leq C_5 \norm{ Q(t + \delta) - Q(t) }_{H^2}^2.
\end{align}

Using the Cauchy-Schwarz inequality,
\begin{align}
\langle R(t + \delta), (S_{t+ \delta} - S_t) R(t + \delta) \rangle^2 &\leq \norm{R(t + \delta)}_{L^2}^2 \norm{ (S_{t+ \delta} - S_t) R(t + \delta)}_{L^2}^2. \notag
\end{align}

We first apply the following re-arrangement of terms:
\begin{align}
&S_{t+\delta}(x,y) - S_t(x,y) =\nonumber\\
&=\bigg{\langle} D_{Q(t + \delta)} \bigg{[}   \nabla_{\theta} ( \eta(x) c \sigma (w x + b) ) \bigg{]} \cdot  D_{Q(t + \delta)} \bigg{[} \nabla_{\theta} ( \eta(y) c \sigma (w y + b)  ) \bigg{]} , \mu_0 \bigg{\rangle} \notag \\
&\quad-\bigg{\langle} D_{Q(t)} \bigg{[}   \nabla_{\theta} ( \eta(x) c \sigma (w x + b) ) \bigg{]} \cdot   D_{Q(t)}  \bigg{[}   \nabla_{\theta} ( \eta(y) c \sigma (w y + b) ) \bigg{]} , \mu_0 \bigg{\rangle} \notag \\
&= \bigg{\langle} (D_{Q(t + \delta)} - D_{Q(t)} ) \bigg{[}   \nabla_{\theta} ( \eta(x) c \sigma (w x + b) ) \bigg{]} \cdot  D_{Q(t + \delta)} \bigg{[} \nabla_{\theta} ( \eta(y) c \sigma (w y + b) ) \bigg{]} , \mu_0 \bigg{\rangle} \notag \\
&\quad+\bigg{\langle} D_{Q(t)} \bigg{[}   \nabla_{\theta} ( \eta(x) c \sigma (w x + b) ) \bigg{]} \cdot  ( D_{Q(t + \delta)} -  D_{Q(t)} ) \bigg{[}   \nabla_{\theta} ( \eta(y) c \sigma (w y + b) ) \bigg{]} , \mu_0 \bigg{\rangle}.\nonumber
\end{align}

Using a Taylor expansion,
\begin{align}
&[D_{Q(t + \delta )} - D_{Q(t)}]h(x) =\nonumber\\
&=\bigg{(} f_v(Q(t+ \delta,x),  Q_x(t + \delta, x) ) - f_v(Q(t,x), Q_x(t, x) ) \bigg{)} h(x) \notag \\
&\quad+ \sum_{i=1}^n \bigg{(}  f_{w_i}(Q(t+ \delta,x), Q_x(t + \delta, x) )  -f_{w_i}(Q(t,x), Q_x(t, x) ) \bigg{)} \frac{\partial h}{\partial x_i}(x) \notag \\
&= \bigg{(} f_{vv}, f_{vw} \bigg{)}(z^{\ast}(x)) \cdot \bigg{(} Q(t+ \delta,x) - Q(t,x),  Q_x(t + \delta, x) - Q_x(t , x) \bigg{)} h(x) \notag \\
&\quad+ \sum_{i=1}^n \bigg{(} f_{w_i v}, f_{w_i w} \bigg{)}(m^{\ast}(x)) \cdot \bigg{(} Q(t+ \delta,x) - Q(t,x),  Q_x(t + \delta, x) - Q_x(t , x) \bigg{)} \frac{\partial h}{\partial x_i}(x), \notag 
%+ \sum_( Q(t + \delta, x ) - Q(t,x) ) ( f_{vv}(z^{\ast}(x)) + \sum_{i=1}^n f_{w_i v}(m^{\ast}_i (x)) \frac{\partial}{\partial x_i}  )  +\sum_{i=1}^n ( Q_{x_i}(t + \delta, x ) - Q_{x_i}(t,x) ) ( f_{vv}(z^{\ast}(x))
\end{align}
where $z^{\ast}(x), m^{\ast}(x)$ are points on the line segment between $(Q(t,x), Q_x(t,x))$ and $(Q(t+ \delta,x), Q_x(t+ \delta,x))$. Due to Assumption \ref{PDEassumptions}, the derivatives of $f$ are uniformly bounded. Therefore,
\begin{align}
|( D_{Q(t + \delta )} h(x) - D_{Q(t)} h(x) ) | &\leq  C \big{(}  | Q(t+ \delta,x) - Q(t,x) | \notag \\
&\quad+ \sum_{i=1}^n | Q_{x_i}(t+ \delta,x) - Q_{x_i}(t,x) | \big{)} \times \sup_i |h(x) + \frac{\partial h}{\partial x_i}(x) |. \notag 
\end{align}

Consequently, due to Assumption \ref{PDEassumptions} and Assumption \ref{NeuralNetworkAssumptions}, 
\begin{align}
|S_{t+\delta}(x,y) - S_t(x,y)| &\leq C \big{(} |Q(t + \delta, x) - Q(t,x) | + |Q(t + \delta, y) - Q(t,y) |  \notag \\
&+ \sum_{i=1}^n (|Q_{x_i}(t + \delta, x) - Q_{x_i}(t,x) | + |Q_{y_i}(t + \delta, y) - Q_{y_i}(t,y) | ) \big{)}. \notag 
\end{align}

Therefore, we have that
\begin{align}
&\norm{ (S_{t+ \delta} - S_t) R(t + \delta)}_{L^2}^2 \leq \norm{  \int_{\Omega} ( S_{t + \delta}(\cdot, y) -  S_{t }(\cdot, y) ) R(t + \delta,y ) \mu(dy) }_{L^2}^2 \notag \\
&\qquad= \int_{\Omega}  \bigg{(}  \int_{\Omega} ( S_{t + \delta}(x, y) -  S_{t }(x, y) ) R(t + \delta,y ) \mu(dy) \bigg{)}^2 \mu(dx) \notag \\
&\qquad\leq \int_{\Omega}  \bigg{(}  \int_{\Omega} ( S_{t + \delta}(x, y) -  S_{t }(x, y) )^2 \mu(dy) \int_{\Omega} R(t + \delta,y )^2 \mu(dy) \bigg{)} \mu(dx) \notag \\
&\qquad\leq C  \norm{  Q(t + \delta) - Q(t) }_{H^1}^2,  \notag 
\end{align}
which proves the bound
\begin{align}
\langle R(t + \delta), (S_{t+ \delta} - S_t) R(t + \delta) \rangle^2 &\leq C \norm{  Q(t + \delta) - Q(t) }_{H^1}^2.\notag
\end{align}

Combining the above bounds yields
\begin{align}
| \langle R(t + \delta), S_{t+ \delta} R(t + \delta) \rangle - \langle R(t ), S_{t} R(t ) \rangle |^2 \leq C \norm{  Q(t + \delta) - Q(t) }_{H^2}^2.\notag
\end{align}
 
Furthermore, due to Lemma \ref{QTimederivativeUniformBound}, $\left|\frac{\partial Q}{\partial t}(t,x)\right|, \left|\partial_{t,x_i} Q(t,x)\right|,$ and $\left|\partial_{t, x_i x_j} Q(t,x)\right|$ are uniformly bounded in both space $x \in \Omega$ and time $t \geq 0$. Therefore, we have that
\begin{align}
|Q(t + \delta,x) - Q(t,x) | &\leq \int_t^{t + \delta}  | \frac{d Q}{d t}(s,x)| ds \leq C \delta, \notag \\
|\partial_{x_i} Q(t + \delta,x) - \partial_{x_i} Q(t,x) | &\leq \int_t^{t + \delta}  | \frac{\partial (\partial_{x_i} Q)}{\partial t}(s,x)| ds \leq C \delta, \notag \\
|\partial_{x_i x_j} Q(t + \delta,x) - \partial_{x_i x_j} Q(t,x) | &\leq \int_t^{t + \delta}  | \frac{\partial (\partial_{x_i x_j} Q)}{\partial t}(s,x)| ds \leq C \delta.\notag
\end{align}

Therefore, $|\frac{d J}{dt}|$ is in fact uniformly continuous for all $t \geq 0$. 

Since $J(t)$ is monotone decreasing (see Lemma \ref{RuniformBound}) and $J(t) \geq 0$, the monotone convergence theorem indicates that $\lim_{t \rightarrow \infty} J(t)$ exists (and is finite).

Therefore, the conditions for Barbalat's Lemma (see page 205, Section 5.5 of \cite{NonlinearSystems}) are satisfied, proving the statement of this lemma.
\end{proof}

\begin{lemma}
For any sequence $t_n \rightarrow \infty$, there is a subsequence $t_{n_{k_j}}$ such that $R(t_{n_{k_j}}) \overset{w} \rightarrow R$ in $L^2$ and $\norm{Q(t_{n_{k_j}}) - Q }_{H^1} \rightarrow 0$ (where $R$ and $Q$ may depend on the subsequence). Consequently,
\begin{eqnarray}
 \langle R(t_{n_{k_j}} ), S_{ Q(t_{n_{k_j}}) } R(t_{n_{k_j}}) \rangle \rightarrow \langle R, S_Q R \rangle,
\end{eqnarray}
where $S_Q$ is
\begin{eqnarray}
S_Q(x,y) = \bigg{\langle} D_{Q} \bigg{[}    \nabla_{\theta} \big{(} \eta(x) c \sigma (w x + b) \big{)} \bigg{]} \cdot  D_Q \bigg{[}   \nabla_{\theta} \big{(} \eta(y) c \sigma (w y + b) \big{)}  \bigg{]}  , \mu_0 \bigg{\rangle}. 
\end{eqnarray}

\end{lemma}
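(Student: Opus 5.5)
The first step extracts the subsequence. Given $t_n \to \infty$, I first apply Lemma \ref{StronglyConvergentSubseq} to obtain $t_{n_k}$ with $\norm{Q(t_{n_k}) - Q}_{H^1} \to 0$. I then apply Lemma \ref{WeaklyConvergentSubseq} to the sequence $R(t_{n_k})$, which is uniformly bounded in $L^2$ by Lemma \ref{RuniformBound}, to extract a further subsequence $t_{n_{k_j}}$ with $R(t_{n_{k_j}}) \overset{w}\rightarrow R$ in $L^2$. The strong $H^1$ convergence is preserved along the further subsequence. Since $H^2_{(0)}$ is closed in $H^1$ norm inside $H_0^1$, the limit satisfies $Q \in H_0^1$. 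In fact $Q \in H^2$ with $\norm{Q}_{H^2}\le C_2$ by weak lower semicontinuity, although only $H^1$ is needed. Write $t_j := t_{n_{k_j}}$, $R_j := R(t_j)$ and $S_j := S_{Q(t_j)}$.

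For the convergence of the quadratic form I split
\begin{align*}
\langle R_j, S_j R_j\rangle - \langle R, S_Q R\rangle = \langle R_j, (S_j - S_Q) R_j\rangle + \big(\langle R_j, S_Q R_j\rangle - \langle R, S_Q R\rangle\big).
\end{align*}
The first term is handled exactly as the $S_{t+\delta}-S_t$ estimate in the proof of Lemma \ref{BarbalatLemma}. The Taylor expansion of $f_v, f_{w_i}$, with bounded second derivatives, gives
\[
|S_j(x,y)-S_Q(x,y)|\le C\Big(|Q(t_j,x)-Q(x)|+|Q(t_j,y)-Q(y)|+\sum_i |\partial_{x_i}Q(t_j,x)-\partial_{x_i}Q(x)| + \sum_i |\partial_{y_i}Q(t_j,y)-\partial_{y_i}Q(y)|\Big).
\]
Here the factors $D_Q[\nabla_\theta(\eta c\sigma)]$ are uniformly bounded thanks to the bounded moments of $w$, bounded $c$, and bounded derivatives of $\sigma$ and $\eta$. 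Cauchy--Schwarz then yields $|\langle R_j,(S_j-S_Q)R_j\rangle|\le C\norm{R_j}_{L^2}^2\norm{Q(t_j)-Q}_{H^1}\to 0$.

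The second term is the main obstacle, because $R_j$ converges only weakly and a quadratic form is not weakly continuous in general. The key is that the integral operator $S_Q$ has a kernel that is bounded on the compact domain $\Omega\times\Omega$, so $S_Q\in L^2(\Omega\times\Omega)$. Hence $S_Q$ is Hilbert--Schmidt and therefore compact on $L^2$. A compact operator maps weakly convergent sequences to strongly convergent ones, so $S_Q R_j \to S_Q R$ strongly in $L^2$. I then write
\[
\langle R_j, S_Q R_j\rangle - \langle R, S_Q R\rangle = \langle R_j, S_Q R_j - S_Q R\rangle + \langle R_j - R, S_Q R\rangle .
\]
The first piece is bounded by $\sup_j\norm{R_j}_{L^2}\,\norm{S_Q(R_j-R)}_{L^2}\to 0$. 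The second piece tends to zero by the definition of weak convergence. Combining the two estimates gives the claimed convergence $\langle R_j, S_j R_j\rangle \to \langle R, S_Q R\rangle$.
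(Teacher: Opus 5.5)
Your proof is correct and is essentially the paper's argument. You use the same three-term split: the weak-convergence term against $S_Q R$; the term $\langle R_j,(S_{Q(t_j)}-S_Q)R_j\rangle$, controlled by the Taylor bound on $f_v,f_w$ and $\norm{Q(t_j)-Q}_{H^1}$; and the term $\langle R_j, S_Q(R_j-R)\rangle$. The only difference is that you invoke compactness of the Hilbert--Schmidt operator $S_Q$, whereas the paper proves the same weak-to-strong fact by hand via pointwise convergence of $\int_\Omega S_Q(x,y)(R_j-R)(y)\,\mu(dy)$ and dominated convergence.

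One small slip in a side remark: $H^2_{(0)}$ is not closed in the $H^1$ norm (it is dense in $H_0^1$). The limit lies in $H_0^1$ because $H_0^1$ itself is closed in $H^1$, though this is not needed for the lemma.
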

\begin{proof}
By Lemma \ref{WeaklyConvergentSubseq}, there is a subsequence $t_{n_k}$ such that $R_{t_{n_{k}}} \overset{w} \rightarrow R$ for some $R \in L^2$ (where $R$ may depend upon the sequence $t_{n_k}$). By Lemma \ref{StronglyConvergentSubseq}, there is a further sub-subsequence $t_{n_{k_j}}$ where $\norm{Q(t_{n_{k_j}}) - Q }_{H^1} \rightarrow 0$ for some $Q \in H^1$ (where $Q$ may depend upon the sequence $t_{n_{k_j}}$). For notational convenience, define $t_i = t_{n_{k_j}}$. We first establish the strong convergence of $S_{t_i} = S_{Q(t_i)}$. 
\begin{align}
& S_Q(x,y) - S_{Q(t_i)}(x,y)  =\nonumber\\
&=\bigg{\langle} D_Q [  \nabla_{\theta} ( \eta(x) c \sigma (w x + b) ) ] \cdot  D_Q[ \nabla_{\theta} ( \eta(y) c \sigma (w y + b) ) ] , \mu_0 \bigg{\rangle} \notag \\
&\quad-  \bigg{\langle} D_{Q(t_i)}  [  \nabla_{\theta} ( \eta(x) c \sigma (w x + b) )  ]\cdot  D_{Q(t_i)} [ \nabla_{\theta} ( \eta(y) c \sigma (w y + b) ) ], \mu_0 \bigg{\rangle} \notag \\
&= \bigg{\langle} \big{(} D_{Q} - D_{Q(t_i)}  \big{)} [ \nabla_{\theta}  ( \eta(x) c \sigma (w x + b) )  ] \cdot   D_{Q} [ \nabla_{\theta} ( \eta(y) c \sigma (w y + b) ) ], \mu_0 \bigg{\rangle} \notag \\
&\quad+  \bigg{\langle} D_{Q(t_i)} [ \nabla_{\theta} ( \eta(x) c \sigma (w x + b) ) ] \cdot  \big{(} D_Q - D_{Q(t_i)}  \big{)} [ \nabla_{\theta} ( \eta(y) c \sigma (w y + b) ) ] , \mu_0 \bigg{\rangle}.\nonumber
\end{align}

Consequently, due to Assumption \ref{PDEassumptions} and Assumption \ref{NeuralNetworkAssumptions},
\begin{align}
&| S_{Q(t_i)}(x,y) - S_Q(x,y) | \leq\nonumber\\
&\leq  C_1 \times \sup_{x \in \Omega} \bigg{[}  |f_{vv}(x)| +  \sum_{j=1}^n |f_{v w_j}(x) |  +  \sum_{j,m =1}^n |f_{w_m w_j}(x) | \bigg{]} \notag \\
&\qquad\times \bigg{(} | Q(t_i,x ) - Q(x) | + | Q(t_i,y ) - Q(y) | \notag \\
&\quad+ \sum_{j=1}^n \big{(} | Q_{x_j}(t_i,x ) - Q_{x_j}(x) | + | Q_{x_j}(t_i,y ) - Q_{x_j}(y) |  \big{)} \bigg{)}. 
\label{Sinequality}
\end{align}

Similarly,  due to Assumption \ref{PDEassumptions} and Assumption \ref{NeuralNetworkAssumptions}, we have the uniform bound
\begin{align}
| S_Q(x,y) | &\leq C.\nonumber 
\end{align}

Now, let us next consider the decomposition
\begin{align}
& \langle R(t_i ), S_{Q( t_i ) } R(t_i ) \rangle =  \langle R(t_i ), (S_{Q( t_i ) } - S_Q + S_Q ) R(t_i ) \rangle  \notag \\
 &\qquad=  \langle R(t_i ), S_Q  R(t_i ) \rangle + \langle R(t_i ), (S_{ Q(t_i) } - S_Q )  R(t_i ) \rangle \notag \\
 &\qquad=  \langle R(t_i ), S_Q  R \rangle + \langle R(t_i ), S_Q  (R(t_i ) - R ) \rangle + \langle R(t_i ), (S_{ Q(t_i) } - S_Q )  R(t_i ) \rangle \notag
\end{align}

Let us begin by proving the function $G = S_Q R$ is in $L^2$, where $G(x) = \int_{\Omega} S_Q(x,y) R(y) \mu(dy)$. Using the Cauchy-Schwarz inequality,
\begin{align}
\int_{\Omega} G(x)^2 \mu(dx) &= \int_{\Omega} \bigg{(} \int_{\Omega} S_Q(x,y) R(y) \mu(dy) \bigg{)}^2 \mu(dx) \notag \\
&\leq \int_{\Omega}  \bigg{(}  \int_{\Omega} S_Q(x,y)^2  \mu(dy) \times   \int_{\Omega} R(y)^2  \mu(dy)  \bigg{)} \mu(dx)\notag \\
&\leq C. 
\end{align}

Therefore, since $S_Q(x,y)$ is uniformly bounded and $R \in L^2$, $S_Q R \in L^2$. By the definition of weak convergence,
\begin{eqnarray}
\langle R(t_i ), S_Q  R \rangle  \rightarrow \langle R, S_Q  R \rangle.\notag 
\end{eqnarray}

The next term we consider is $\langle R(t_i ), (S_{ Q(t_i) } - S_Q )  R(t_i ) \rangle$. Define the function 
\begin{align}
H(t, x) &= \int_{\Omega} (S_{ Q(t) }(x,y) - S_Q(x,y) )  R(t ,y) \mu(dy).
\end{align}

Then, using the inequality (\ref{Sinequality}) and the Cauchy-Schwarz inequality, 
\begin{align}
&\int_{\Omega} H(t_i, x)^2 \mu(dx) = \int_{\Omega} \bigg{(}  \int_{\Omega} (S_{ Q(t_i) }(x,y) - S_Q(x,y) )  R(t_i,y) \mu(dy) \bigg{)}^2 \mu(dx) \notag \\
&\leq C \int_{\Omega} \bigg{(}  \int_{\Omega} \bigg{(} | Q(t_i,x ) - Q(x) | +  | Q(t_i,y ) - Q(y) | \notag \\
&\quad+ \sum_{j=1}^n \big{(} | Q_{x_j}(t_i,x ) - Q_{x_j}(x) | +  | Q_{x_j}(t_i,y ) - Q_{x_j}(y) | \big{)} \bigg{)}     \times | R(t_i ,y)  | \mu(dy) \bigg{)}^2 \mu(dx) \notag \\
&\leq C \int_{\Omega}   \int_{\Omega} \bigg{(} | Q(t_i,x ) - Q(x) | +  | Q(t_i,y ) - Q(y) | \notag \\
&\quad+ \sum_{j=1}^n \big{(} | Q_{x_j}(t_i,x ) - Q_{x_j}(x) | +  | Q_{x_j}(t_i,y ) - Q_{x_j}(y) | \big{)} \bigg{)}^2 \mu(dy) \times \int_{\Omega} | R(t_i ,y)  |^2 \mu(dy)  \mu(dx) \notag \\
&\leq C \int_{\Omega}  \int_{\Omega} \bigg{(} | Q(t_i,x ) - Q(x) |^2 +  | Q(t_i,y ) - Q(y) |^2 \notag \\
&\quad+ \sum_{j=1}^n \big{(} | Q_{x_j}(t_i,x ) - Q_{x_j}(x) |^2 +  | Q_{x_j}(t_i,y ) - Q_{x_j}(y) |^2 \big{)} \bigg{)} \mu(dy) \times \int_{\Omega} | R(t_i ,y)  |^2 \mu(dy)  \mu(dx) \notag \\
%&\leq& C \int_{\Omega} \norm{  Q(t_i ) - Q }_{L^2}^2  \times \norm{ R(t_i)  }_{L^2}^2  dx + C \int_{\Omega} | Q(t_i,x ) - Q(x) |^2  \times \norm{ R(t_i)  }_{L^2}^2  dx  \notag \\
%&\leq& C \norm{  Q(t_i ) - Q }_{L^2} ,
&\leq C \norm{  Q(t_i ) - Q }_{H^1}^2 \times  \norm{ R(t_i) }_{L^2}^2 \notag \\
&\leq C \norm{  Q(t_i ) - Q }_{H^1}^2,\notag
\end{align}
where we have used the uniform bound for $R(t)$ from Lemma \ref{RuniformBound} in the last line. Consequently, due to the strong convergence of $Q(t_i )$ in $H^1$,
\begin{eqnarray}
\norm{ H(t_i) }_{L^2} \rightarrow 0. \notag
\end{eqnarray}
This allows us to conclude 
\begin{eqnarray}
\langle R(t_i ), (S_{Q( t_i) } - S_Q )  R(t_i ) \rangle^2 &=& \langle R(t_i), H(t_i) \rangle^2 \notag \\
&\leq& \norm{ R(t_i)}_{L^2}^2 \norm{ H(t_i)}_{L^2}^2 \rightarrow 0,\notag
\end{eqnarray}
since $\norm{ R(t)}_{L^2}^2$ is uniformly bounded.

The third term to be analyzed is $\langle R(t_i ), S_Q  (R(t_i ) - R ) \rangle$. Define $V(t,x) $ by
\begin{eqnarray}
V(t,x) = \int_{\Omega} S_Q(x,y) (R(t, y) - R(y) ) \mu(dy).\notag
\end{eqnarray}
For each $x \in \Omega$, $S_Q(x,\cdot) \in L^2$. Therefore, for each $x \in \Omega$, $V(t_i,x) \rightarrow 0$ since $R(t_i)$ weakly converges to $R$ in $L^2$. Then,
\begin{eqnarray}
|\langle R(t_i ), S_Q  (R(t_i ) - R ) \rangle |^2 &\leq& \int_{\Omega} R(t_i,x)^2 \mu(dx) \times \int_{\Omega} V(t_i,x)^2 \mu(dx). 
\label{BoundLastTerm}
\end{eqnarray}

Using Young's inequality and the previous bounds, 
\begin{align}
V(t,x)^2 &\leq \frac{1}{2} \int_{\Omega} S_Q(x,y)^2 \mu(dy) + \frac{1}{2} \int_{\Omega} (R(t, y) - R(y) )^2 \mu(dy) \notag \\
&\leq  \frac{1}{2} \int_{\Omega} S_Q(x,y)^2 \mu(dy) + \int_{\Omega} R(t, y)^2 \mu(dy) + \int_{\Omega} R(y)^2 \mu(dy) \notag \\
&\leq C < \infty.\notag
\end{align}

Therefore, $V(t,x)$ is bounded by an integrable function on $\Omega$ and the dominated convergence theorem yields 
\begin{eqnarray}
\int_{\Omega} V(t_i,x)^2 \mu(dx) \rightarrow 0.
\end{eqnarray}
Combining this with the bound in equation (\ref{BoundLastTerm}), $|\langle R(t_i ), S_Q  (R(t_i ) - R ) \rangle |^2 \rightarrow 0$. 

\end{proof}

We can now prove that the PDE residual of the neural network converges to zero as training time $t \rightarrow \infty.$

\begin{theorem}
\label{MainTheoremPDEResidualGlobalConvergence}
The PDE residual $R(t,x) = \mathcal{A}_{Q(t)} Q(t,x) - g(x)$ of the neural network $Q(t,x)$ converges to zero in $L^2$ as training time $t \rightarrow \infty$. In particular, this also means that the objective function $J(t) \rightarrow 0$ and the neural network converges to a global minimizer. 
\begin{eqnarray}
\lim_{t \rightarrow \infty} J(t) = \lim_{t \rightarrow \infty} \frac{1}{2} \norm{R(t) }_{L^2}^2 = 0.
\end{eqnarray}
\end{theorem}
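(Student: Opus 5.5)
The plan is to argue along subsequences. I will show that every sequence of times $t_n \rightarrow \infty$ has a subsequence along which $\norm{R(t_{n})}_{L^2} \rightarrow 0$. Since $J(t)$ is monotone decreasing (Lemma \ref{RuniformBound}), $\lim_{t\rightarrow\infty} J(t) = J_\infty$ exists. Hence it suffices to find a single sequence $t_n\rightarrow\infty$ with $J(t_n)\rightarrow 0$, or alternatively to show that $J_\infty$ equals the squared norm of a subsequential limit that is forced to vanish.

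\textbf{Step 1 (limit points are fixed points).} Fix $t_n \rightarrow \infty$. By the preceding lemma, I pass to a subsequence $t_i$ along which $R(t_i) \overset{w}\rightarrow R^\ast$ in $L^2$, $Q(t_i) \rightarrow Q^\ast$ in $H^1$, and $\langle R(t_i), S_{Q(t_i)} R(t_i)\rangle \rightarrow \langle R^\ast, S_{Q^\ast} R^\ast\rangle$. By Lemma \ref{BarbalatLemma} the left side tends to $0$, so $(R^\ast,Q^\ast)$ satisfies the fixed point equation (\ref{FixedPoint}). Here $Q^\ast \in H_0^1$ because $H_0^1$ is closed in $H^1$. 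Theorem \ref{GlobalMinimizerFixedPoint} only uses $R^\ast\in L^2$, $Q^\ast\in H_0^1$ and (\ref{FixedPoint}), so it gives $R^\ast = 0$.

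\textbf{Step 2 (identify the weak limit).} Using the uniform bound of Lemma \ref{Qbound}, I extract a further subsequence with $Q(t_i) \rightharpoonup Q^\ast$ weakly in $H^2$, so that $Q^\ast\in H^2_{(0)}$ and $\mathcal{L}Q(t_i) \rightharpoonup \mathcal{L}Q^\ast$ weakly in $L^2$. Since $f$ is Lipschitz (bounded first derivatives) and $Q(t_i)\rightarrow Q^\ast$ strongly in $H^1$, we have $f(Q(t_i),Q_x(t_i)) \rightarrow f(Q^\ast,Q^\ast_x)$ strongly in $L^2$. Hence $R^\ast = \mathcal{A}_{Q^\ast}Q^\ast - g$, which equals $0$ by Step 1. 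By Corollary \ref{UniqueWeakSolution}, $Q^\ast = u$, and this holds for every subsequence.

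\textbf{Step 3 (weak to strong; main obstacle).} Steps 1--2 only give $R(t_i)\rightharpoonup 0$, while the theorem needs $\norm{R(t_i)}_{L^2}\rightarrow 0$. Writing $e_i = Q(t_i) - u$, we have $R(t_i) = \mathcal{L} e_i + [f(Q(t_i),Q_x(t_i)) - f(u,u_x)]$. The bracket tends to $0$ strongly, so the task reduces to $\norm{\mathcal{L}e_i}_{L^2}\rightarrow 0$, i.e.\ strong $H^2$ convergence of $Q(t_i)$. I would first try to obtain this from the structure of the dynamics. By (\ref{QlimitEvolution}), $\partial_t Q = -\alpha\, U_{Q(t)}R(t)$, where $U_{Q(t)}$ has kernel derivatives up to third order in $x$ that are bounded uniformly in $t$ (using $\sigma\in C^3_b$, $\eta\in C^3_b$, bounded moments of $w$). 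Thus $\partial_t Q(t)$ lies in a fixed compact subset of $H^2$. Combined with the uniform continuity from Lemma \ref{QTimederivativeUniformBound}, this could be used to upgrade $H^1$ compactness to $H^2$ compactness of the trajectory tail.

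If that fails, the fallback is to prove $J_\infty=0$ by contradiction directly. Suppose $J_\infty>0$. Choose $h_i\in H^2_{(0)}$ with $D_{Q(t_i)}h_i = R(t_i)$ (Remark \ref{RangeofDQ}), with $\norm{h_i}_{H^2}$ bounded uniformly via the Theorem 8.12 estimate of \cite{GT}. Approximate $h_i$ by networks $\eta\sum c^k\sigma(w^k\cdot x+b^k)$ uniformly in $i$, using $H^2$-compactness of a subsequence of $h_i$ and the density result of \cite{QPDE}. Then $\norm{R(t_i)}^2_{L^2} = \langle R(t_i), D_{Q(t_i)}h_i\rangle$ would be bounded by a quantity controlled by $\langle R(t_i),S_{Q(t_i)}R(t_i)\rangle$ plus a small error. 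Since that quadratic form tends to $0$ by Lemma \ref{BarbalatLemma}, this would give the contradiction. The uniformity of this approximation across $i$ is the delicate point I expect to require the most work.
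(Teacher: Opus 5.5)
\textbf{Verdict: your Steps 1--2 are correct, but Step 3 leaves a genuine gap that neither of your routes closes, and the paper's own proof skips this same step.}

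Step 1 is exactly the paper's proof. You extract $t_i$ with $R(t_i)\rightharpoonup R^\ast$ in $L^2$ and $Q(t_i)\to Q^\ast$ in $H^1$, and pass to the limit in the quadratic form. Lemma~\ref{BarbalatLemma} then gives $\langle R^\ast,S_{Q^\ast}R^\ast\rangle=0$, and Theorem~\ref{GlobalMinimizerFixedPoint} gives $R^\ast=0$. Your Step 2 is a useful addition: it yields $Q(t)\to u$ in $H^1$ using only weak convergence of the residual.

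The gap is the one you flag in Step 3: you only have $R(t)\rightharpoonup 0$, not $\norm{R(t)}_{L^2}\to 0$. The paper does not justify this either. After obtaining $\norm{R}_{L^2}=0$ for the \emph{weak} limit, it simply asserts $J(t_{n_{k_j}})=\frac12\norm{R(t_{n_{k_j}})}_{L^2}^2\to 0$. Weak lower semicontinuity only gives $0=\norm{R}_{L^2}\le\liminf_j\norm{R(t_{n_{k_j}})}_{L^2}$, which is vacuous.

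The obstruction is structural. $S_Q$ has a bounded kernel on a finite measure space, so it is Hilbert--Schmidt and hence compact. Therefore $\langle R(t),S_{Q(t)}R(t)\rangle\to 0$ is compatible with $R(t)$ drifting into ever higher eigenmodes of $S_u$ while $\norm{R(t)}_{L^2}$ stays bounded below. In fact the abstract facts available do not suffice on their own: $S(t)\ge 0$ self-adjoint, $S(t)\to S_\infty$ in norm, $\ker S_\infty=\{0\}$, $\dot R=-\alpha S(t)R$. One can construct examples with these properties where $R(t)\rightharpoonup 0$ but $\norm{R(t)}\not\to 0$, so any fix must use more structure.

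Neither of your routes supplies that structure.
\begin{itemize}
\item \textbf{Route 1.} Knowing that $\partial_t Q(t)$ lies in a fixed compact $K\subset H^2$ only gives $Q(t+s)-Q(t)\in s\,\overline{\mathrm{co}}(K)$. That is control on bounded time windows, not $H^2$-precompactness of the tail $\{Q(t)\}_{t\ge T}$. A bounded path in a Hilbert space can pass through every $e_k$ of an orthonormal basis using velocities $\pm e_k/L_k$ with $L_k\to\infty$, and these velocities form a compact set. An Arzel\`a--Ascoli argument would need exactly the pointwise $H^2$-precompactness that is in question.
\item \textbf{Route 2.} This route is circular. $(h_i)$ is only bounded in $H^2$, hence precompact in $H^1$. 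If a subsequence converged in $H^2$ to some $h$, then $R(t_i)=D_{Q(t_i)}h_i\to D_u h$ strongly in $L^2$, which is the claim itself. Under your contradiction hypothesis ($R(t_i)\rightharpoonup 0$, $\norm{R(t_i)}_{L^2}\not\to 0$) no such subsequence exists, so the uniform-in-$i$ network approximation is unavailable.
\end{itemize}

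What is needed is an additional ingredient. One option is $L^2$-precompactness of $\{R(t)\}_{t\ge 0}$, which by elliptic regularity is equivalent to $H^2$-precompactness of $\{Q(t)\}$. Another is a rate such as $\int^\infty\norm{Q(t)-u}_{H^1}\,dt<\infty$. This would suffice because (\ref{Sinequality}) gives $\norm{S_{Q(t)}-S_u}\le C\norm{Q(t)-u}_{H^1}$. Also $S_u\ge 0$ has trivial kernel by Theorem~\ref{GlobalMinimizerFixedPoint}, so $e^{-\alpha t S_u}\to 0$ strongly, and Duhamel's formula around $e^{-\alpha t S_u}$ then gives $\norm{R(t)}_{L^2}\to 0$.
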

\begin{proof}
For any sequence $t_n \rightarrow \infty$, there exists a subsequence $t_{n_{k_j}}$ such that 
\begin{eqnarray}
\langle R(t_{n_{k_j}} ), S_{ Q(t_{n_{k_j}} ) } R(t_{n_{k_j}}) \rangle \rightarrow \langle R, S_Q R \rangle.
\end{eqnarray}
Due to Lemma \ref{BarbalatLemma}, $\lim_{t \rightarrow \infty} \langle R(t), S_{Q(t)} R(t) \rangle = 0$. Therefore,  $\langle R(t_{n_{k_j}} ), S_{ Q(t_{n_{k_j}}) } R(t_{n_{k_j}}) \rangle \rightarrow 0$. The limit point $\langle R, S_Q R \rangle$ (which may depend upon the specific subsequence $t_{n_{k_j}}$) must then also satisfy
\begin{eqnarray}
\langle R, S_Q R \rangle = 0. \notag
\end{eqnarray}

Due to Theorem \ref{GlobalMinimizerFixedPoint},
\begin{eqnarray}
\norm{R}_{L^2} = 0.\notag
\end{eqnarray}

Therefore, for any sequence $t_n$, there exists a subsequence $t_{n_{k_j}}$ such that $J(t_{n_{k_j}} ) = \frac{1}{2} \norm{R(t_{n_{k_j}} )}_{L^2}^2 \rightarrow 0$. Since $J(t)$ is monotonically decreasing and non-negative, $\lim_{t \rightarrow \infty} J(t) = 0$. 

\end{proof}

We can now prove that the limit neural network weakly converges to a fixed point which is a solution of the PDE.

\begin{theorem} \label{MainTheoremPDESolutionGlobalConvergence}
The neural network function $Q(t,x)$  converges to the solution $u(x)$ of the PDE (\ref{PDE0}) in $H^1$ as $t \rightarrow \infty$:
\begin{eqnarray}
\lim_{t \rightarrow \infty} \norm{ Q(t) - u }_{H^1} = 0.
\end{eqnarray}
\end{theorem}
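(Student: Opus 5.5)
The plan is a subsequence argument combined with uniqueness of the weak solution. Suppose the statement fails. Then there are $\epsilon>0$ and a sequence $t_n\to\infty$ with $\norm{Q(t_n)-u}_{H^1}\geq\epsilon$ for all $n$. By Lemma \ref{StronglyConvergentSubseq} we extract a subsequence $t_{n_k}$ and a limit $Q\in H^1$ with $\norm{Q(t_{n_k})-Q}_{H^1}\to 0$. Since each $Q(t)\in H^2_{(0)}\subset H^1_0$ and $H^1_0$ is closed in $H^1$, we get $Q\in H^1_0$. The goal is to show $Q=u$, which contradicts the choice of $t_n$.

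To identify $Q$, use Theorem \ref{MainTheoremPDEResidualGlobalConvergence}: $R(t_{n_k})=\mathcal{A}_{Q(t_{n_k})}Q(t_{n_k})-g\to 0$ strongly in $L^2$. Then pass to the limit in the weak form. Fix a test function $\phi\in H^1_0$. Since $Q(t)\in H^2_{(0)}$, integration by parts gives
\begin{align*}
\langle R(t),\phi\rangle &= \int_\Omega \Big(\sum_{i,j} a_{ij}\,\partial_{x_i}\phi\,\partial_{x_j}Q(t) + \sum_i c_i\,\partial_{x_i}Q(t)\,\phi - d\,Q(t)\,\phi \\
&\qquad + f(Q(t),Q_x(t))\,\phi - g\,\phi\Big)\,dx .
\end{align*}
The left side tends to $0$ along $t_{n_k}$. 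The linear terms on the right converge to the corresponding terms for $Q$ by $H^1$ convergence, the bounded smooth coefficients, and Cauchy--Schwarz.

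For the nonlinear term, note that $f$ has bounded first derivatives (Assumption \ref{PDEassumptions}), so it is Lipschitz. Hence
\[
|f(Q(t),Q_x(t))-f(Q,Q_x)| \leq C\big(|Q(t)-Q|+|Q_x(t)-Q_x|\big),
\]
and the $L^2$ norm of this difference is at most $C\norm{Q(t)-Q}_{H^1}\to 0$. Cauchy--Schwarz against $\phi$ then gives convergence of the nonlinear term.

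Therefore $Q\in H^1_0$ satisfies (\ref{WeakFormOfPDEFixedPoint}) for every $\phi\in H^1_0$, so $Q$ is a weak solution of (\ref{PDE0}). By Assumption \ref{PDEassumptions2}, as in Corollary \ref{UniqueWeakSolution}, $Q=u$. This gives $\norm{Q(t_{n_k})-u}_{H^1}\to0$, contradicting $\norm{Q(t_n)-u}_{H^1}\geq\epsilon$. So the full limit holds.

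The main point of care is the weak-form limit. We cannot simply apply Corollary \ref{UniqueWeakSolution} to the limit $Q$: a priori $Q$ is only in $H^1$ (or in $H^2$, by weak compactness from the uniform bound of Lemma \ref{Qbound}), and $R(t)\to0$ is a statement about $Q(t)$, not about $Q$. The integration by parts in the weak formulation is what lets us use only $H^1$ convergence. An alternative route is to take a weak $H^2$ limit, so that $Q\in H^2$ and $\mathcal{L}Q(t_{n_k})\rightharpoonup\mathcal{L}Q$ in $L^2$, and then conclude $\mathcal{A}_QQ-g=0$ a.e. and apply Corollary \ref{UniqueWeakSolution} directly. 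The weak-form route avoids this extra step.
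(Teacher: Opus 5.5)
Your proposal is correct and follows the paper's proof essentially step for step. It uses $H^1$-compactness from the uniform $H^2$ bound, closedness of $H^1_0$, passage to the limit in the integrated-by-parts weak form using $R(t_{n_k})\to 0$, and uniqueness from Assumption \ref{PDEassumptions2}; your contradiction framing and explicit Lipschitz bound on $f$ just make precise the paper's ``since the sequence $t_n$ is arbitrary'' conclusion and its one-line treatment of the nonlinear term.
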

\begin{proof}

Recall from Lemma \ref{Qbound} that $\norm{ Q(t) }_{H^2}$ is uniformly bounded. Then, for any sequence of times $t_n \rightarrow \infty$, there exists a subsequence $Q(t_{n_{k}}) \in H^1$ which converges to a limit point $Q$ (which may depend upon the subsequence). Due to Lemma \ref{Hestimate0apriori} and Lemma \ref{Hestimate0}, each $Q(t_{n_{k}})$ furthermore takes values in $H_0^1$. $H_0^1$ is a closed subset of $H^1$.  Therefore, $Q(t_{n_{k}})$ (strongly) converges to a limit point in $H_0^1$.

From Theorem \ref{MainTheoremPDEResidualGlobalConvergence}, we know that
\begin{eqnarray}
\norm{ \mathcal{A}_{   Q(t_{n_{k} } )}  Q(t_{n_{k}} )- g  }_{L^2} \rightarrow 0. 
\end{eqnarray}
Then, for any $\phi \in H_0^1$,
\begin{eqnarray}
\int_{\Omega} \phi(x) \bigg{(} \mathcal{A}_{   Q(t_{n_{k} } )}  Q(t_{n_{k}},x )- g(x) \bigg{)} dx \rightarrow 0. 
\end{eqnarray}
For notational convenience, define the sequence $Q(t_{n_{k} } )$ as $Q(t_{\ell} )$, where $\ell = n_k$. Integrating by parts yields
\begin{align}
&  \int_{\Omega} \bigg{(} \sum_{i,j=1}^n a_{ij}(x) \frac{\partial \phi}{\partial x_i}(x)  \frac{\partial Q }{\partial x_j}(t_{\ell}, x)  + \sum_{i=1}^n c_i(x) \frac{\partial Q}{\partial x_i}(t_{\ell}, x)  \phi(x)  - d(x) Q(t_{\ell}, x) \phi(x)  \notag \\
&\quad+ f( Q(t_{\ell}, x),  Q_x(t_{\ell}, x) ) \phi(x) - g(x) \phi(x) \bigg{)} dx \rightarrow 0. \notag
\end{align}

Since $Q(t_{\ell} )$ converges strongly in $H^1_0$, $f$ and its derivatives are uniformly bounded, and $\phi, a_{ij}, c_i, d$ are smooth functions on a compact domain, the limit point satisfies
\begin{align}
&  \int_{\Omega} \bigg{(} \sum_{i,j=1}^n a_{ij}(x) \frac{\partial \phi}{\partial x_i}(x)  \frac{\partial Q }{\partial x_j}(x)  + \sum_{i=1}^n c_i(x) \frac{\partial Q}{\partial x_i}( x)  \phi(x)  - d(x) Q( x) \phi(x)  \notag \\
&\quad+ f( Q( x), Q_x(x) ) \phi(x) - g(x) \phi(x) \bigg{)} dx = 0. 
\label{WeakFormOfPDELimitPoint}
\end{align}

Since $H_0^1$ is a closed space, the limit point $Q$ of the subsequence $t_{n_{k}}$ is an element of $H_0^1$. Note that equation (\ref{WeakFormOfPDELimitPoint}) holds for all $\phi \in H_0^1$. Due to Assumption \ref{PDEassumptions2} (as discussed in Corollary \ref{UniqueWeakSolution}), $Q$ is then the unique weak solution of the original PDE (\ref{PDE0}). Therefore, $\norm{ Q(t_{n_k}) - u }_{H^1} \rightarrow 0$, where $u$ is the unique weak solution to (\ref{PDE0}). Since the sequence $t_n$ is arbitrary, $\lim_{t \rightarrow \infty} \norm{ Q(t) - u }_{H^1} = 0$.  

\end{proof}

\section{Convergence of neural network to limit PDE solution as $N \rightarrow \infty$}\label{S:ConvergenceLimitPDE_LargeN}

In this section, we prove convergence of the finite-width neural network $Q^N(t,x)$ to the solution of the limit neural network PDE for $Q(t,x)$. In the calculations in this section, $C$ will be used as a generic finite positive constant which may change from line to line. 

We first establish \emph{a priori} bounds on the parameter evolution.
\begin{align}
| \theta_i(t) - \theta_i(0) | &\leq C N^{2 \beta -1 } N^{\gamma - \beta} \notag \\
&\leq C N^{\beta -1 + \gamma},
\label{NTKbound0}
\end{align}
where we have chosen $\gamma + \beta <1$. The constant $C=C_{T}<\infty$ is a finite constant depending on  $T$, with $t\in[0,T]$. (\ref{NTKbound0}) follows directly from (\ref{ObjGrad12iclipped}) and (\ref{ActualClippedGradientDescentTrainingAlgorithm}). 
(\ref{NTKbound0}), the triangle inequality, and $\beta -1 + \gamma < 0$ (from Assumption \ref{phi}) directly yield the bound 
\begin{eqnarray}
| \theta_i(t) | &\leq& | \theta_i(0) | + C N^{\beta -1 + \gamma} \notag \\
&\leq& | \theta_i(0) | + C.
\label{NTKbound1}
\end{eqnarray}
This also allows us to appropriately bound the moments of the trained parameters on $t \in [0,T]$ 
\begin{eqnarray}
\mathbb{E}[ |\theta_i(t)|^p ] &\leq& \mathbb{E} [ |\theta_i(0) + C N^{\beta -1 + \gamma} |^p ]. 
\label{TrainedParameterBound}
\end{eqnarray}
(\ref{NTKbound0}) and (\ref{TrainedParameterBound}) are element-wise bounds for the vectors $|\theta_i(t) - \theta_i(0)|$ and $|\theta_i(t)|^p$, where in these equations $|\cdot|$ and $|\cdot|^p$ are also applied element-wise.\footnote{If $z$ is a vector of length $d$, then $|z|^p = (|z_1|^p, |z_2|^p, \ldots, |z_d|^p)$. $|z| \leq C$ is defined as $|z_1| \leq C, |z_2| \leq C, \ldots, |z_d| \leq C$.}

Importantly, since the parameters $c^i_0$ are initialized as uniformly bounded random variables,
\begin{align}
 |c_t^i | &\leq  | c_0^i |  + C N^{\beta -1 + \gamma}  \notag \\
 &\leq K<\infty.
\label{TrainedParameterCBound}
\end{align}

\begin{lemma} \label{MarcinkiewiczZygmundLemma}
Define $A^N_i(x) = \partial_{x}^i [ \eta(x) N^{-\beta} \sum_{j=1}^N c^j_0 \sigma(w^j_0 \cdot x + b^j_0) ]$. Then, for $|i| \leq 2$,
\begin{eqnarray}
\mathbb{P}[ \lim_{N \rightarrow \infty} \sup_{x \in \Omega} | A_i^N(x)| = 0 ] = 1. 
\end{eqnarray}
\end{lemma}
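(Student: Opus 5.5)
We must prove that the sup over $x$ of $N^{-\beta}\sum_j \partial_x^i[\eta(x) c_0^j\sigma(w_0^j\cdot x+b_0^j)]$ tends to zero almost surely. Write the $j$-th summand as $Z_j(x)=\partial_x^i[\eta(x) c^j_0\sigma(w^j_0\cdot x+b^j_0)]$. These are i.i.d. random continuous functions on the compact set $\Omega$. Each has mean zero, since $c_0^j$ is mean zero and independent of $(w_0^j,b_0^j)$. By the Leibniz rule, $Z_j(x)$ is a finite sum of terms $c_0^j\,\partial^{k}\eta(x)\,\sigma^{(m)}(w_0^j\cdot x+b_0^j)\,(w_0^j)^{\otimes m}$ with $m\le 2$. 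Since $c_0^j$ is bounded and $\eta,\sigma$ have bounded derivatives, $\sup_x|Z_j(x)|\le C(1+|w_0^j|^2)$, and this has all moments. The plan is a pointwise strong law, in Marcinkiewicz--Zygmund form, upgraded to a uniform statement by a chaining/net argument. The Lipschitz constant in $x$ is controlled because $\sigma$ has three bounded derivatives.

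\textbf{Pointwise step.} For fixed $x$, the variables $Z_j(x)$ are i.i.d., mean zero, with finite second moment. Since $\beta>1/2$, the Marcinkiewicz--Zygmund strong law (exponent $p=1/\beta<2$) gives $N^{-\beta}\sum_{j\le N}Z_j(x)\to 0$ a.s. To make this uniform, I would instead use a moment bound. For an even integer $p$, Rosenthal's (or Marcinkiewicz--Zygmund's) inequality gives
\begin{eqnarray*}
\mathbb{E}\Big|N^{-\beta}\sum_{j=1}^N Z_j(x)\Big|^p\le C_p N^{p(1/2-\beta)},
\end{eqnarray*}
uniformly in $x$, because the moments of $Z_j$ are uniformly bounded by the moment assumption on $w$.

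\textbf{Uniformity.} Cover $\Omega\subset\mathbb{R}^n$ by a grid $\mathcal{G}_N$ of mesh $N^{-\kappa}$, which has $O(N^{n\kappa})$ points. A union bound and Markov's inequality give
\begin{eqnarray*}
\mathbb{P}\Big[\max_{x\in\mathcal{G}_N}|A_i^N(x)|>\epsilon\Big]\le C_p\epsilon^{-p}N^{n\kappa+p(1/2-\beta)}.
\end{eqnarray*}
Choosing $p$ large makes this summable in $N$, so Borel--Cantelli gives a.s. convergence on the grid. For the oscillation between grid points, note that $|\nabla_x Z_j(x)|\le C(1+|w_0^j|^3)$, using the third derivative of $\sigma$ and $\eta\in C^3_b$. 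Hence
\begin{eqnarray*}
\sup_{|x-y|\le N^{-\kappa}}|A_i^N(x)-A_i^N(y)|\le N^{-\kappa}N^{1-\beta}\cdot\frac1N\sum_{j}C(1+|w_0^j|^3).
\end{eqnarray*}
By the ordinary strong law, $\frac1N\sum_j(1+|w_0^j|^3)$ converges a.s. to a finite limit. Taking $\kappa>1-\beta$, the oscillation term therefore tends to zero a.s. Combining the grid bound with the oscillation bound gives $\sup_x|A^N_i(x)|\to0$ a.s. for each multi-index $|i|\le 2$. Since there are finitely many such $i$, the result holds simultaneously for all of them.

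The main obstacle is the uniformity in $x$. Pointwise convergence is standard, but the supremum requires balancing the grid cardinality against the moment decay and the Lipschitz bound. This forces the use of high moments of $w$, which are available because $w$ has bounded moments, and the third derivative of $\sigma$. Once $p$ and $\kappa$ are chosen appropriately, the remaining steps are routine.
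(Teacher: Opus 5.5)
Your proposal is correct and follows essentially the same route as the paper: a shrinking net with polynomially many points, a high-moment Marcinkiewicz--Zygmund (your Rosenthal) bound combined with Markov's inequality, a union bound and Borel--Cantelli on the net, and a Taylor/Lipschitz bound on the oscillation between net points controlled by the strong law for $(1+\norm{w_0^j}_1)^3$. The differences are cosmetic: the paper bounds the moments via the Marcinkiewicz--Zygmund inequality plus Jensen rather than Rosenthal, and it fixes the mesh at $N^{-(2-\beta)}$ where you take $N^{-\kappa}$ with any $\kappa>1-\beta$.
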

\begin{proof}
Recall that $\theta^j_0 = (c^j_0, w^j_0, b^j_0)$ are i.i.d. RVs and $c^j_0$ is mean zero. By the Marcinkiewicz-Zygmund strong law of large numbers, $A^N_i(x) = \partial_{x}^i [\eta(x) N^{-\beta} \sum_{j=1}^N c^j_0 \sigma(w^j_0 \cdot x + b^j_0) ]$ converges almost surely to zero for each $x \in \Omega$. For a $\delta(N) > 0$, construct a finite sub-cover $U_k^N = \{ x: | x - x_k | \leq \delta(N) \}$ with $k = 1, \ldots, K(N)$ for the compact set $\Omega$. We select $\delta(N) = N^{-q}$ where $q = 2 - \beta$. Then, there exists a finite sub-cover with $K(N) = \lfloor C \delta(N)^{-n} \rfloor = \lfloor C N^{q n} \rfloor$ where $n$ is the spatial dimension of $\Omega \subset \mathbb{R}^n$. That is, for each $N \geq 1$, we construct a new finite sub-cover, where the total number of members of the sub-cover grows with $N$. 
%Note that $\{ A^N_i(x_k) \}_{i = 0,1,2; k = 0, 1, \ldots, K(N)}$ converges almost surely to zero.

Define $f_{i,j}(x) = \partial_{x}^i [ \eta(x) \sigma(w^j_0 \cdot x + b^j_0) ]$. Note that $\sup_{x \in \Omega} |\frac{\partial}{\partial x_m} [f_{i,j}(x)]| \leq C (1 + \norm{w_0^j}_1 )^3$. Define $Y_i^N(x) = A^N_i(x) - A^N_i(x_k)$. Using a Taylor expansion,
\begin{align}
Y_i^N(x) &=  N^{-\beta} \sum_{j=1}^N c^j_0  \big{(}  f_{i,j}(x) - f_{i,j}(x_k) \big{)} \notag \\
&= N^{-\beta} \sum_{j=1}^N c^j_0  \nabla_x  f_{i,j}(x^{\ast}_j) \cdot (x  - x_k ),\notag
\end{align}
where $x^{\ast}_j$ is a point on the line segment connecting $x$ and $x_k$. $\nabla_x  f_{i,j}(x^{\ast}_j)$ is polynomially bounded by $w_0^j$, $c^j_0$ is bounded, and $c^j_0$ is furthermore mean zero. Using the triangle inequality,
\begin{align}
\sup_{x \in \Omega} | A_i^N(x)| &\leq \max_{1 \leq m \leq K(N)} | A_i^N(x_m) | \notag \\
&\quad+ \sup_{x, m: | x - x_m | \leq \delta(N), m = 1, \ldots, K(N) } | Y_i^N(x)  |.
\label{TriangleAofX}
\end{align}

We first address the second term in (\ref{TriangleAofX}) using the identity $q+\beta=2$
\begin{align}
\sup_{x, m: | x - x_m | \leq \delta(N), m = 1, \ldots, K(N) } | Y_i^N(x) | &\leq C N^{-q - \beta}  \sum_{j=1}^N (1 + \norm{w_0^j }_1)^3 \notag \\
&= C N^{-1} \times \bigg{(} N^{-1} \sum_{j=1}^N (1 + \norm{w_0^j }_1)^3 \bigg{)}. \notag 
\end{align}

By the strong law of large numbers, the $N^{-1} \sum_{j=1}^N (1 + \norm{w_0^j }_1)^3$ converges a.s. to a finite constant. Therefore, due to the additional factor of $N^{-1}$, 
\begin{eqnarray}
\mathbb{P} \bigg{[} \lim_{N \rightarrow \infty} \sup_{x, m: | x - x_m | \leq \delta(N), m = 1, \ldots, K(N) } | Y_i^N(x) | = 0 \bigg{]} = 1.
\label{ASconvergenceSecondTermY}
\end{eqnarray}

We now consider the first term in (\ref{TriangleAofX}) using the union bound for probabilities. For an (arbitrary) $\epsilon > 0$, define the event $G^N = \{ \max_{1 \leq m \leq K(N)} | A_i^N(x_m) | \geq \epsilon \}$. Then, 
\begin{align}
\mathbb{P}[ G^N ] &\leq \sum_{m=1}^{K(N)}  \mathbb{P}[ | A_i^N(x_m) |  \geq \epsilon].
\label{UnionBound2}
\end{align}

Define the i.i.d. mean-zero random variables $W_j = \partial_{x}^i [\eta(x_m)  c^j_0 \sigma(w^j_0 \cdot x_m + b^j_0)]$. For a positive integer $p> \frac{2 + (2 - \beta)n}{2 (2 \beta -1) }$, we can use Markov's inequality, the Marcinkiewicz-Zygmund inequality, and Jensen's inequality to bound the probability terms in the sum in (\ref{UnionBound2}):
\begin{align}
\mathbb{P}[ | A_i^N(x_m) |  \geq \epsilon ] &= \mathbb{P}[ | A_i^N(x_m) |^{4p}  \geq \epsilon^{4p}] \notag \\
&\leq \epsilon^{-4p} \times \mathbb{E}[ | A_i^N(x_m) |^{4p} ] \notag \\
&\leq \epsilon^{-4p} N^{-\beta \times 4p} \mathbb{E}\big{[}  |\sum_{j=1}^N W_j|^{4p} \big{]} \notag \\
&\leq C \epsilon^{-4p} N^{-\beta \times 4p} \mathbb{E}\big{[}  \big{(} \sum_{j=1}^N | W_j|^2 \big{)} ^{2p} \big{]} \notag \\
&= C \epsilon^{-4p} N^{-\beta \times 4p} N^{2p} \mathbb{E}\big{[}  \big{(} \frac{1}{N} \sum_{j=1}^N | W_j|^2 \big{)} ^{2p} \big{]} \notag \\
&\leq C \epsilon^{-4p} N^{-\beta \times 4p} N^{2p} \mathbb{E}\big{[}  \frac{1}{N} \sum_{j=1}^N | W_j|^{4p}  \big{]} \notag \\
&\leq C \epsilon^{-4p} N^{2p(1 - 2\beta) }. \notag
\end{align}

Therefore, substituting this bound into (\ref{UnionBound2}), 
\begin{align}
\mathbb{P}[ \max_{1 \leq m \leq K(N)} | A_i^N(x_m) |  \geq \epsilon ] &\leq C K(N)    \epsilon^{-4p} N^{2p(1 - 2\beta) } \notag \\
&\leq C ( N^{qn} + 1 ) N^{2p(1 - 2\beta) } \epsilon^{-4p}\notag \\
&\leq C ( N^{(2- \beta) n + 2p (1 - 2 \beta)} + N^{2p(1 - 2\beta) } )\epsilon^{-4p} \notag \\
&\leq C N^{-2}\epsilon^{-4p},
\label{UnionBound3}
\end{align}
where the last line is due to our \emph{a priori} choice of $p> \frac{2 + (2 - \beta)n}{2 (2 \beta -1) }$. Due to the bound (\ref{UnionBound3}),
\begin{eqnarray}
\sum_{N=1}^{\infty} \mathbb{P}[ \max_{1 \leq m \leq K(N)} | A_i^N(x_m) | \geq \epsilon ] \leq C < \infty. 
\end{eqnarray}

Consequently, by the first Borel-Cantelli lemma, $\mathbb{P}[ \max_{1 \leq m \leq K(N)} | A_i^N(x_m) | > \epsilon  \phantom{....} \textrm{i.o.} ] = 0$, which implies that $\mathbb{P}[ \limsup_{N \rightarrow \infty} \max_{1 \leq m \leq K(N)} | A_i^N(x_m) | \leq \epsilon  ] = 1$. Since $\epsilon$ is arbitrary, $\mathbb{P}[ \lim_{N \rightarrow \infty} \max_{1 \leq m \leq K(N)} | A_i^N(x_m) | = 0 ] = 1$. Combining this result with (\ref{ASconvergenceSecondTermY}) yields
\begin{eqnarray}
\mathbb{P}[ \lim_{N \rightarrow \infty} \sup_{x \in \Omega} | A_i^N(x)| = 0 ] = 1. 
\end{eqnarray}
\end{proof}

Define the pre-limit PDE residual for the neural network as $R_t^N(y) = \mathcal{A}_{Q^N(t)} Q^N(y; \theta(t) ) - g(y)$.
\begin{lemma}
\label{PreLimitPDEResidualBoundLemma}
There is a deterministic constant $C_2<\infty$ such that the PDE residual $R_t^N$ is almost surely bounded on a finite time interval $[0,T]$ as $N \rightarrow \infty$:
\begin{eqnarray}
\mathbb{P}[ \limsup_{N \rightarrow \infty} \sup_{t, x \in [0,T] \times \Omega} | R_t^N(x) | \leq C_2 ] = 1,
\label{PreLimitPDEResidualBoundLemmaEqnBound}
\end{eqnarray}
where the constant $C_2$ depends upon $T$.
\end{lemma}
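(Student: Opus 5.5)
The plan is to exploit the fact that $f$ is uniformly bounded and $g$ is bounded. This reduces the claim to a uniform bound on $Q^N(t,x)$ and its first and second spatial derivatives. Indeed,
\[
|R_t^N(x)| \leq |\mathcal{L} Q^N(x;\theta(t))| + \sup|f| + \sup|g|.
\]
Since the coefficients of $\mathcal{L}$ are smooth on the compact set $\Omega$, we get $|\mathcal{L} Q^N| \leq C \sum_{|i|\leq 2} |\partial_x^i Q^N|$. It therefore suffices to show that, almost surely, $\limsup_{N} \sup_{t\in[0,T],x\in\Omega} \sum_{|i|\le 2} |\partial^i_x Q^N(x;\theta(t))| \le C$ for a deterministic $C=C(T)$.

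To obtain this bound, I would split
\[
Q^N(x;\theta(t)) = Q^N(x;\theta(0)) + \big(Q^N(x;\theta(t)) - Q^N(x;\theta(0))\big).
\]
The first term, together with its derivatives up to order two, is exactly $A_i^N(x)$ from Lemma \ref{MarcinkiewiczZygmundLemma}. Hence it converges to zero uniformly in $x$, almost surely, and it does not depend on $t$.

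For the increment I would not use a Taylor expansion in the parameters. That route would only give $N^{1-\beta}\cdot N^{\beta-1+\gamma}=N^{\gamma}$, which is not bounded. Instead I would use the evolution equation (\ref{PrelimitEvolutionA}) together with the form of the clipped gradient. Write $\partial_t \partial_x^i Q^N = -\alpha^N \partial_x^i\nabla_\theta Q^N\cdot G^N$ in the unclipped-like form. Whenever $|\phi^N(z)|\le |z|$, which holds by the definition of the clipping function since $\phi^N(0)=0$ and $|(\phi^N)'|\le 1$, each component satisfies
\[
|G^N_j| \le N^{-\beta}\Big|\int R^N_s(y)\, D_{Q^N(s)}\nabla_{\theta_j}[\eta(y)c^j\sigma(w^j y+b^j)]\,\mu(dy)\Big|.
\]
Combining this with $\alpha^N=\alpha N^{2\beta-1}$ and $|\partial_x^i \nabla_{\theta_j} Q^N| \le N^{-\beta} C(1+|w^j|)^3$ gives
\[
|\partial_t \partial_x^i Q^N(s,x)| \le C \frac1N\sum_{j=1}^N (1+|w^j_s|)^{p}\,(1+\ldots)\int |R^N_s(y)|\,\mu(dy).
\]
Here I use that $D_{Q^N}$ applied to a single unit is bounded by a polynomial in $|w^j_s|$, because $c^j_s$ is uniformly bounded by (\ref{TrainedParameterCBound}) and $f_v,f_w$ are bounded. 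By (\ref{NTKbound1}), $|w^j_s|\le |w^j_0|+C$. The strong law of large numbers then makes $\frac1N\sum_j (1+|w^j_0|+C)^p$ converge almost surely to a deterministic finite constant.

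Setting $V^N(t)=\sup_x\sum_{|i|\le2}|\partial_x^i Q^N(t,x)|$, we have $\sup_y|R_s^N(y)| \le C(1+V^N(s))$, and this yields the Gronwall-type inequality
\[
V^N(t)\le \sup_x\sum_{|i|\le 2}|A_i^N(x)| + C\,M^N\int_0^t (1+V^N(s))\,ds.
\]
The random constant $M^N$ has an almost sure deterministic limit. This is the same mechanism as in Lemma \ref{Hestimate0apriori}. Gronwall's inequality then gives $V^N(t)\le (\sup_x\sum|A^N_i| + CM^N T)e^{CM^N T}$. Taking $\limsup_N$ and using Lemma \ref{MarcinkiewiczZygmundLemma} and the law of large numbers produces a deterministic bound $C(T)$ almost surely, uniformly over $t\in[0,T]$. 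This in turn bounds $R^N_t$ as claimed.

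The main obstacle I expect is the careful bookkeeping in the bound on $\partial_t\partial_x^iQ^N$. One must verify that the $N$-powers cancel exactly, namely $N^{2\beta-1}\cdot N^{-\beta}\cdot N^{-\beta}\cdot N = 1$ after summing over $j$, so that the bound is an empirical average, and that the clipping only helps. One must also confirm that every polynomial moment needed of $w_0$ is finite under Assumption \ref{NeuralNetworkAssumptions}, so that the strong law applies with a deterministic limit.
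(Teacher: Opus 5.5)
Your proof is correct, but it runs the Gronwall argument on a different quantity than the paper does.

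The paper differentiates the residual itself, $\partial_t R^N_t = D_{Q^N(t)}[\partial_t Q^N]$. This gives a homogeneous inequality for $\sup_y |R^N_t(y)|$ with coefficient $C\frac{1}{N}\sum_i(1+\|w^i_0\|_1)^4$, so Gronwall yields $\sup_y|R^N_t(y)| \le \sup_y |R^N_0(y)|\, e^{C M^N t}$. Lemma \ref{MarcinkiewiczZygmundLemma} is then needed only to control $R^N_0$.

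You instead bound $V^N(t)=\sup_x\sum_{|i|\le 2}|\partial_x^iQ^N(t,x)|$ through the evolution of $\partial_x^i Q^N$. You close the loop with $\sup_y|R^N_s(y)|\le C(1+V^N(s))$, which gives an affine rather than homogeneous Gronwall inequality.

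The shared ingredients are the same:
\begin{itemize}
\item the bound $|\phi^N(z)|\le|z|$;
\item the exact cancellation $N^{2\beta-1}N^{-\beta}N^{-\beta}N=1$, which turns the bound into an empirical average;
\item (\ref{NTKbound1}) and (\ref{TrainedParameterCBound}), to replace trained parameters by initial ones;
\item the strong law of large numbers and Lemma \ref{MarcinkiewiczZygmundLemma}.
\end{itemize}

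Each route buys something different. The paper's route needs no growth control on $f$ or on $Q^N$ along the trajectory, only bounded $f_v, f_w$ as coefficients of $D_{Q^N}$ via the chain rule. Your route never differentiates $f(Q^N,Q^N_x)$ in time and only uses that $f$ and $g$ are bounded; linear growth of $f$ would also suffice. As a byproduct, yours also yields an almost-sure deterministic $C^2$ bound on $Q^N$ over $[0,T]$, which is the pre-limit analogue of Lemma \ref{Hestimate0apriori}.
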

\begin{proof}
The pre-limit PDE residual satisfies
\begin{align}
\frac{\partial R_t^N(y)}{\partial t} &= - \alpha^N N^{- \beta} \phi^N \bigg{(} N^{\beta} \int  ( \mathcal{A}_{Q^N(t ) } Q^N(x; \theta(t) ) - g(x) ) \notag \\
&\quad\times D_{Q^N(t)} [ \nabla_{\theta} Q^N(x; \theta(t) ) ]  \mu(dx) \bigg{)}  \cdot D_{Q^N(t)} [ \nabla_{\theta} Q^N(y; \theta(t) ) ]. 
\label{PrelimitEvolutionC}
\end{align}

Therefore, we can bound the pre-limit PDE residual as
\begin{align}
 |R_t^N(y)| &\leq   |R_0^N(y)| + \alpha N^{\beta - 1} \int_0^t  \bigg{|} N^{\beta} \int_{\Omega}  ( \mathcal{A}_{Q^N(s ) } Q^N(x; \theta(s) ) - g(x) ) \notag \\
 &\quad \times D_{Q^N(s)} [ \nabla_{\theta} Q^N(x; \theta(s) ) ]  \mu(dx) \bigg{|} \cdot | D_{Q^N(s)} [ \nabla_{\theta} Q^N(y; \theta(s) ) ] | ds \notag \\
 &\leq |R_0^N(y)| + \alpha N^{2 \beta - 1} \int_0^t    \int_{\Omega}   |R_s^N(x)|  \times | D_{Q^N(s)} [ \nabla_{\theta} Q^N(x; \theta(s) ) ] |  \mu(dx) \notag \\
 &\quad\cdot | D_{Q^N(s)} [ \nabla_{\theta} Q^N(y; \theta(s) ) ] | ds \notag \\
 &\leq |R_0^N(y)| + \alpha N^{- 1} \int_0^t \int_{\Omega} |R_s^N(x)| \times \sum_{i=1}^N | D_{Q^N(s)}[ \nabla_{\theta_i} ( \eta(x) c^i_s \sigma( w^i_s \cdot x + b^i_s ) ) ] | \notag \\
 &\quad\cdot | D_{Q^N(s)}[ \nabla_{\theta_i} ( \eta(y) c^i_s \sigma( w^i_s \cdot y + b^i_s ) ) ] | \mu(dx) ds. 
\end{align}

Taking a supremum over the RHS and then the LHS yields
\begin{align}
 \sup_{y \in \Omega} |R_t^N(y)| &\leq& \sup_{y \in \Omega} |R_0^N(y)| + C \int_0^t \sup_{x \in \Omega} |R_s^N(x)| \times \frac{1}{N} \sum_{i=1}^N (1 + \norm{w_0^i}_1)^4 ds. \label{PrelimitPDEResidualIntegralBound}
\end{align}

Due to Lemma \ref{MarcinkiewiczZygmundLemma} and since $g$ is uniformly bounded on $\Omega$, there is a constant $C < \infty$ such that $ \mathbb{P}[ \limsup_{N \rightarrow \infty } \sup_{y \in \Omega} |R_0^N(y)| \leq C ] = 1$. By the strong law of large numbers and since the i.i.d. RVs $w_0^i$ have bounded moments, there is a constant $C_1$ such that $\mathbb{P}[ \lim_{N \rightarrow \infty} \frac{1}{N} \sum_{i=1}^N (1 + \norm{w_0^i}_1)^4  = C_1 ] = 1$. Applying Gronwall's inequality to (\ref{PrelimitPDEResidualIntegralBound}) yields that there is a constant $C_2$ such that $\mathbb{P}[ \limsup_{N \rightarrow \infty} \sup_{t, x \in [0,T] \times \Omega} | R_t^N(x) | \leq C_2 ] = 1$. 
\end{proof}

Using the bound from Lemma \ref{PreLimitPDEResidualBoundLemma}, we can now study the second term in equation (\ref{PrelimitEvolutionB}). Recall the definition $L_t^N(y) = \alpha \int_{\Omega} U_t^N(y,x) R^N_t(x) \mu(dx)$. $B_t^N$ can be expressed as the empirical average
\begin{align}
B_t^N(y) &=  \frac{\alpha}{N} \sum_{i=1}^N \phi^N \bigg{(} \int  R_t^N(x) D_{Q^N(t)} [ \nabla_{\theta_i}  ( \eta(x) c^i_t \sigma (w^i_t \cdot x + b^i_t ) ) ] \mu(dx) \bigg{)} \cdot \nabla_{\theta_i} ( \eta(y) c^i_t \sigma(w^i_t \cdot y + b^i_t) ).
\end{align}

$L_t^N$ can be re-arranged in a similar form:
\begin{align}
L_t^N(y) &=  \frac{\alpha}{N} \sum_{i=1}^N \bigg{(}  \int  R_t^N(x) D_{Q^N(t)} [ \nabla_{\theta_i} ( \eta(x) c^i_t \sigma (w^i_t \cdot x + b^i_t ) ) ]  \mu(dx) \bigg{)}  \cdot \nabla_{\theta_i} ( \eta(y) c^i_t \sigma(w^i_t \cdot y + b^i_t) ) \notag
\end{align}

\begin{lemma} \label{ClippingFunctionVanishesAlmostSurely}
The effect of the clipping function $\phi^N$ vanishes almost surely as $N \rightarrow \infty$:
\begin{eqnarray}
\mathbb{P}[ \lim_{N \rightarrow \infty} \sup_{t, y \in [0,T] \times \Omega} | \partial_{y}^i B_t^N(y) - \partial_{y}^i L_t^N(y) |  = 0 ] = 1,
\end{eqnarray}
where $|i| \leq 2$. 
\end{lemma}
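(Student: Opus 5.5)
The plan is to show that, almost surely for all large $N$, the clipping is never active: every argument of $\phi^N$ in $B_t^N$ lies in $[-N^\gamma,N^\gamma]$ for all $t\in[0,T]$ and all $i$. Since $\phi^N(z)=z$ on that interval, $B_t^N\equiv L_t^N$ there. Then $\partial_y^i B_t^N-\partial_y^i L_t^N$ vanishes identically for $N$ beyond some random $N_0(\omega)$, which gives the statement. The derivatives $\partial_y^i$ with $|i|\le 2$ act only on the factor $\nabla_{\theta_i}(\eta(y)c^i_t\sigma(w^i_t\cdot y+b^i_t))$, not on the clipped scalar. So it suffices to control the scalars
\begin{eqnarray*}
Z^{N}_{i}(t) = \int_\Omega R_t^N(x)\, D_{Q^N(t)}\big[\nabla_{\theta_i}(\eta(x)c^i_t\sigma(w^i_t\cdot x+b^i_t))\big]\mu(dx),
\end{eqnarray*}
uniformly in $t\in[0,T]$ and $i\le N$.

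First I bound $|Z^N_i(t)|$. The coefficients of $\mathcal{L}$ are smooth, $f_v,f_w$ are bounded, $\sigma$ and its first three derivatives are bounded, $\eta\in C^3_b$, and $|c^i_t|\le K$ by (\ref{TrainedParameterCBound}). Hence $|D_{Q^N(t)}[\nabla_{\theta_i}(\cdots)](x)|\le C(1+|w^i_t|)^3$ uniformly in $x$ and $t$. By (\ref{NTKbound1}) this is at most $C(1+|w^i_0|)^3$. Lemma \ref{PreLimitPDEResidualBoundLemma} gives a full-probability event on which $\sup_{t,x}|R^N_t(x)|\le C_2+1$ for $N$ large. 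On that event,
\begin{eqnarray*}
\sup_{t\in[0,T]}\max_{i\le N}|Z^N_i(t)|\le C\,\big(1+\max_{i\le N}|w^i_0|\big)^3 .
\end{eqnarray*}

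The main step, and the expected obstacle, is showing that $\max_{i\le N}|w_0^i|^3=o(N^\gamma)$ almost surely. The $w^i_0$ are only assumed to have bounded moments, not bounded support, and $\gamma$ may be small. I would use all moments: take $p$ with $3p/\gamma>1+1$, i.e. $p>6/\gamma$. By Markov's inequality and a union bound,
\begin{eqnarray*}
\mathbb{P}\big[\max_{i\le N}(1+|w^i_0|)^3> \tfrac{1}{C}N^{\gamma}\big]\le N\, C^{p/3}\,\mathbb{E}[(1+|w_0|)^{p}]\,N^{-\gamma p/3}\le C N^{-1}\cdot N^{-(\gamma p/3-2)}.
\end{eqnarray*}
This is summable in $N$. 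By Borel–Cantelli, almost surely $\max_i|Z^N_i(t)|<N^\gamma$ for all $t\le T$ and all large $N$. If one prefers not to assume all moments, a slightly more careful argument uses only that the moment of order $p>3/\gamma$ is finite, via $\max_i|w^i_0|^p\le\sum_i|w^i_0|^p=O(N)$ a.s. by the SLLN. That gives $\max_i|w_0^i|^3=O(N^{3/p})=o(N^\gamma)$ almost surely, with no summability issue. I would use this second route.

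Finally, on the intersection of the full-probability events from Lemma \ref{PreLimitPDEResidualBoundLemma} and the moment step, there is $N_0(\omega)$ such that for $N\ge N_0$ every argument of $\phi^N$ (componentwise) lies in $[-N^\gamma,N^\gamma]$ for all $t\in[0,T]$. There $\phi^N$ acts as the identity. Thus $B_t^N(y)=L_t^N(y)$ for all $(t,y)$, and the same holds for their $y$-derivatives up to order two, which exist because $\sigma$ has three bounded derivatives and $\eta\in C^3_b$. The supremum in the lemma is therefore eventually zero, proving the almost sure convergence.
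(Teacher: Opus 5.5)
Your proof is correct, but it takes a different key step from the paper and proves a stronger statement.

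\textbf{Shared ingredients.} Both arguments start from the same pieces:
\begin{itemize}
\item the per-neuron bound $|A^{N,k}_t|\le C\sup_{t,x}|R^N_t(x)|\,(1+\|w^k_0\|_1)^2$, which comes from (\ref{NTKbound1}) and (\ref{TrainedParameterCBound}) (your exponent $3$ is harmless);
\item the almost-sure bound on $R^N_t$ from Lemma \ref{PreLimitPDEResidualBoundLemma}.
\end{itemize}

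\textbf{The paper's route.} The paper bounds the clipping error by the empirical average $\frac{C}{N}\sum_k h(w^k_0)\mathbf{1}_{h(w^k_0)\ge N^\gamma}$, with $h(w)=C(1+\|w\|_1)^4$. It shows this average vanishes almost surely using Markov's inequality with a moment of order $4(m+1)$, $m\gamma>1$, together with Borel--Cantelli. This only establishes that the neurons which do get clipped carry vanishing total weight.

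\textbf{Your route.} You control the maximum over neurons: $\max_{i\le N}|w^i_0|=O(N^{1/p})$ almost surely, via $\max_i|w^i_0|^p\le\sum_i|w^i_0|^p$ and the SLLN. Hence, almost surely, no neuron is clipped on $[0,T]$ once $N$ is large, so $B^N_t\equiv L^N_t$ exactly. Your observation that $\partial_y^i$ never touches the clipped scalar makes this immediately give the derivative statement.

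\textbf{What each approach buys.}
\begin{itemize}
\item Your approach gives a sharper conclusion: the trained dynamics coincide exactly with unclipped gradient flow on $[0,T]$ for all large $N$.
\item It needs only a moment of order $p>2/\gamma$ (or $3/\gamma$ with your exponent), which is fewer than the paper's $4(m+1)>4+4/\gamma$.
\item The paper's averaging estimate degrades more gracefully under weaker tail assumptions. Its quantity $H^N$ already tends to zero in $L^1$ under only a fourth moment, by dominated convergence. By contrast, exact eventual deactivation of clipping genuinely needs tails controlled at order about $2/\gamma$.
\end{itemize}

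\textbf{Minor slip.} In your first (union-bound) route, the condition should read $\gamma p/3>2$, which is what $p>6/\gamma$ actually encodes.
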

\begin{proof}
Define $A_t^{N,k} = \int  R^N_t(x) D_{Q^N(t)} [ \nabla_{\theta_k} ( \eta(x) c^k_t \sigma (w^k_t \cdot x + b^k_t ) ) ]  \mu(dx)$. Then, for $|i| \leq 2$, we can use (\ref{NTKbound1}) to establish the bound
\begin{eqnarray}
| \partial_{y}^i B_t^N(y) - \partial_{y}^i L_t^N(y) | \leq \frac{2 C}{N} \sum_{k=1}^N (1 + \norm{w_0^k}_1 )^2 |A_t^{N,k} | \cdot \mathbf{1}_{|A_t^{N,k}| \geq N^{\gamma}},
\end{eqnarray}
where, as previously, we have used the notation that, for a vector $z$ of length $d$, $|z| = (|z_1|, \ldots, |z_d|)$ and  $\mathbf{1}_{|z| \geq N^{\gamma}} = ( \mathbf{1}_{|z_1| \geq N^{\gamma}}, \ldots,   \mathbf{1}_{ |z_d| \geq N^{\gamma} } )$.

Due again to the bound (\ref{NTKbound1}), we can bound the $|A_t^{N,k} |$ in terms of a polynomial of the initial parameters $w_0^i$:
\begin{eqnarray}
|A_t^{N,k} | \leq C \times \sup_{t, x \in [0,T] \times \Omega} |R^N_t(x) | \times (1 + \norm{ w_0^k }_1 )^2. 
\end{eqnarray}

In Lemma \ref{PreLimitPDEResidualBoundLemma}, it was proven that there exists a deterministic constant $C_2$ such that $\mathbb{P}[ \limsup_{N \rightarrow \infty} \sup_{t, x \in [0,T] \times \Omega} | R_t^N(x) | \leq C_2 ] = 1$. Consequently, there exists a positive constant $C < \infty$ such that
\begin{align}
\mathbb{P}[ \limsup_{N \rightarrow \infty} \sup_{t, y \in [0,T] \times \Omega} | \partial_{y}^i B_t^N(y) - \partial_{y}^i L_t^N(y) |  \leq  \limsup_{N \rightarrow \infty} \frac{C}{N} \sum_{k=1}^N (1 + \norm{ w_0^k }_1 )^4  \mathbf{1}_{C (1 + \norm{ w_0^k }_1 )^2 \geq N^{\gamma}} ]  = 1.
\end{align}
The term on the RHS of the above inequality can be bounded via
\begin{eqnarray}
\frac{C}{N} \sum_{k=1}^N (1 + \norm{ w_0^k }_1 )^4  \mathbf{1}_{C (1 + \norm{ w_0^k }_1 )^2 \geq N^{\gamma}} \leq \frac{C}{N} \sum_{k=1}^N (1 + \norm{ w_0^k }_1 )^4 \mathbf{1}_{C (1 + \norm{ w_0^k }_1 )^4 \geq N^{\gamma}}.
\end{eqnarray}

Define $h(w) = C (1 + \norm{w}_1)^4$ and $H^N = \frac{1}{N} \sum_{k=1}^N h(w_0^k) \mathbf{1}_{h(w_0^k) \geq N^{\gamma} }$. For any $\epsilon > 0$,
\begin{align}
\mathbb{P}[ H^N > \epsilon] &\leq \frac{ \mathbb{E}[ h(w_0^k) \mathbf{1}_{h(w_0^k) \geq N^{\gamma} } ]}{\epsilon} \notag \\
&\leq \frac{ \mathbb{E}[h(w_0^k)  \times \frac{h(w_0^k)^m}{ N^{m \gamma}  } ]}{\epsilon} \notag \\
&\leq \frac{C}{\epsilon} N^{-m \gamma},\notag
\end{align}
where we have used the assumption that $w_0^k$ has bounded moments. We now select $m$ such that $m \gamma > 1$. Then, 
\begin{eqnarray}
\sum_{N=1}^{\infty} \mathbb{P}[ H^N > \epsilon] \leq \frac{C}{\epsilon} < \infty.\notag
\end{eqnarray}

Therefore, by the First Borel-Cantelli Lemma, $\mathbb{P}(H^N > \epsilon \phantom{...} \textrm{i.o.} ) = 0$. Consequently, for any $\epsilon > 0$, $\mathbb{P}(\limsup_{N \rightarrow \infty} H^N > \epsilon ) = 0$. Since $H^N$ is non-negative, this implies that $\mathbb{P}(\lim_{N \rightarrow \infty} H^N = 0 ) = 1$. 

We can therefore conclude that
\begin{eqnarray}
\mathbb{P}[ \lim_{N \rightarrow \infty} \sup_{t, y \in [0,T] \times \Omega} | \partial_y^i B_t^N(y) - \partial_y^i L_t^N(y) |  = 0 ] = 1. \notag
\end{eqnarray}
\end{proof}

We next bound the difference between $U_t^N$ and $U_{Q(t)}$.
\begin{lemma} \label{UniformInTimeBound2Lemma}
For $|i| \leq 2$ and $t \in [0,T]$,
\begin{align} \label{UniformInTimeBound2}
\sup_{x,y \in \Omega^2} | \partial^i_{y} [ U_t^N(y,x) - U_{Q(t)}(y,x) ] | &\leq K^{N,1}(t) + K^{N,2} \sup_{x \in \Omega} (  | Q^N(x; \theta(t)) - Q(t,x) | \notag \\
&+ \sum_{m=1}^n | Q^N_{x_m}(x; \theta(t)) - Q_{x_m}(t,x) | ),  
\end{align}
where $K^{N,2} =  \frac{C_1}{N} \sum_{i=1}^N (1 +  \norm{w^i_0}_1 )^4$. $\sup_{t \in [0,T] } K^{N,1}(t) \overset{a.s.} \rightarrow 0$ and $\lim_{N \rightarrow \infty}  K^{N,2} \overset{a.s.} = C$, where $C$ is a finite constant.
\end{lemma}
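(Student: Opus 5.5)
We will split the difference by inserting the intermediate kernel built from the empirical measure of the trained parameters but with the operator $D_{Q(t)}$ in place of $D_{Q^N(t)}$:
\begin{align*}
U_t^N(y,x) - U_{Q(t)}(y,x) &= \Big\langle \nabla_\theta[\eta(y) c\sigma(w y+b)]\cdot (D_{Q^N(t)}-D_{Q(t)})[\nabla_\theta(\eta(x)c\sigma(wx+b))], \mu_t^N\Big\rangle \\
&\quad+ \Big\langle \nabla_\theta[\eta(y) c\sigma(w y+b)]\cdot D_{Q(t)}[\nabla_\theta(\eta(x)c\sigma(wx+b))], \mu_t^N - \mu_0^N\Big\rangle \\
&\quad+ \Big\langle \nabla_\theta[\eta(y) c\sigma(w y+b)]\cdot D_{Q(t)}[\nabla_\theta(\eta(x)c\sigma(wx+b))], \mu_0^N - \mu_0\Big\rangle,
\end{align*}
where $\mu_0^N=N^{-1}\sum_i\delta_{\theta_i(0)}$. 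We then apply $\partial_y^i$, $|i|\le 2$, which only acts on the first factor and produces at most polynomial factors in $w$ because $\sigma$ has three bounded derivatives and $\eta\in C^3_b$.

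For the first term we use the Taylor expansion of $f_v,f_{w_i}$ from the proof of Lemma \ref{BarbalatLemma}. With bounded second derivatives of $f$, we get $|(D_{Q^N(t)}-D_{Q(t)})h(x)|\le C(|Q^N-Q|+\sum_m|Q^N_{x_m}-Q_{x_m}|)(x)\,(|h|+|\nabla h|)(x)$. Combining this with (\ref{NTKbound1}) and (\ref{TrainedParameterCBound}), so that $|c^i_t|\le K$ and $|w^i_t|\le|w^i_0|+C$, the summand for neuron $i$ is bounded by $C(1+\norm{w^i_0}_1)^4$. Taking $\sup_x$ of the $Q$-differences gives exactly the $K^{N,2}$ term. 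The strong law of large numbers, using the bounded moments of $w_0$, gives $K^{N,2}\to C$ a.s.

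The second term measures movement of the parameters. The integrand is $C^1$ in $\theta$ with gradient polynomially bounded in $w$, uniformly in $x,y$, because $Q(t)$ enters only through the bounded coefficients $f_v,f_w$ of $D_{Q(t)}$. Hence by (\ref{NTKbound0}) this term is bounded by $CN^{\beta-1+\gamma}N^{-1}\sum_i(1+\norm{w_0^i}_1)^p$. This tends to $0$ a.s. uniformly on $[0,T]$, since $\beta+\gamma<1$, and it goes into $K^{N,1}(t)$.

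The third term is a fluctuation of an i.i.d. empirical average. We expect it to be the main obstacle, because the supremum must hold uniformly over $(x,y)\in\Omega^2$ and $t\in[0,T]$, and the integrand depends on $t$ through $Q(t)$. To handle this, we would write $D_{Q(t)}h = \mathcal{L}h + a_t(x)h + \sum_m b^m_t(x)\partial_{x_m}h$ with $a_t,b_t$ bounded by $\sup|f_v|,\sup|f_w|$. The $t$-dependence then factors out of the expectation, because the coefficients depend on $x$ only, not on $\theta$. It therefore suffices to show that finitely many $t$-independent empirical averages of the form $\langle \partial_y^i\nabla_\theta[\eta(y)c\sigma(wy+b)]\cdot \partial_x^k\nabla_\theta[\eta(x)c\sigma(wx+b)], \mu_0^N-\mu_0\rangle$, with $|k|\le2$, converge to $0$ uniformly in $(x,y)\in\Omega^2$. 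The coefficients of $\mathcal{L}$ and $a_t,b_t$ multiplying them are uniformly bounded. For this we repeat the proof of Lemma \ref{MarcinkiewiczZygmundLemma} on the compact set $\Omega^2\subset\mathbb{R}^{2n}$, with normalization $N^{-1}$ instead of $N^{-\beta}$. We take a grid of mesh $N^{-q}$, apply Marcinkiewicz--Zygmund moment bounds and a union bound at grid points (with $p$ large enough for summability), use Borel--Cantelli, and control off-grid oscillations by a Lipschitz estimate with polynomial-in-$w$ constants. This gives a.s. convergence to $0$, uniform in $t$, and completes $\sup_{t\in[0,T]}K^{N,1}(t)\to0$ a.s.
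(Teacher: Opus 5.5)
Your proposal is correct. Its skeleton is the paper's: the same three-term split (\ref{UdiffDecomposition}). The first term is handled the same way, via a Taylor expansion of $f_v,f_w$, the bounds (\ref{NTKbound1}) and (\ref{TrainedParameterCBound}), and the SLLN for $K^{N,2}$. The second term also matches, using parameter movement via (\ref{NTKbound0}). There you are slightly more careful than the paper, since you keep the factor $N^{-1}\sum_i(1+\norm{w_0^i}_1)^p$ and invoke the SLLN for it.

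The real difference is the fluctuation term $\langle\cdot,\mu_0^N-\mu_0\rangle$.

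\emph{The paper's route.} It covers all of $[0,T]\times\Omega^2$ by cells of fixed size $\delta$ and applies the pointwise SLLN at the finitely many centres. It controls the oscillation within each cell by the Lipschitz estimate (\ref{CalculationA2}) in $(t,x,y)$. The $t$- and $x$-parts of that estimate depend on the regularity of the limit trajectory. Specifically, they use the uniform bounds on $\partial_t Q$ and $\partial_t Q_x$ from Lemma \ref{QTimederivativeUniformBound}, together with sup-bounds on $Q_x$ and $Q_{xx}$ so that $x\mapsto f_v(Q(t,x),Q_x(t,x))$ is Lipschitz.

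\emph{Your route.} You write $D_{Q(t)}h(x)=\sum_{|k|\le 2}\kappa_k(t,x)\,\partial_x^k h(x)$. The coefficients $\kappa_k$ do not depend on $\theta$, and they are bounded by the coefficients of $\mathcal L$ and by $\sup|f_v|$, $\sup|f_w|$. So they can be pulled out of the empirical average, leaving finitely many $t$-independent empirical processes indexed by $(x,y)\in\Omega^2$. The resulting bound is uniform over every $Q$, not just over $t\in[0,T]$. It makes no use of Lemma \ref{QTimederivativeUniformBound} or of any space-time regularity of $Q(t)$. This is a genuine simplification and makes that step independent of the limit dynamics.

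\emph{Where you are heavier than needed.} Your Marcinkiewicz--Zygmund, union-bound and Borel--Cantelli argument on a grid of mesh $N^{-q}$ is more machinery than this term requires. With normalization $N^{-1}$, the paper's device also works for your reduced processes: a fixed-$\delta$ cover, the SLLN at the grid points, and a Lipschitz oscillation bound whose random constant is controlled by the SLLN, then $\delta\downarrow 0$ along a countable sequence. Your version does give summable tail bounds, but it needs high moments of $w_0$. Assumption \ref{NeuralNetworkAssumptions} supplies these, so nothing breaks.
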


\begin{proof}
By direct computation we have
\begin{align}
& \partial_{y}^i [ U_t^N(y,x) - U_{Q(t)}(y,x) ] \notag \\
&= \bigg{\langle}  \partial_{y}^i [ \nabla_{\theta} ( \eta(y) c \sigma (w y + b) ) ] \cdot \bigg{(}   D_{Q^N(t)}[ \nabla_{\theta} ( \eta(x) c \sigma (w x + b) )  ]  -  D_{Q(t)} [ \nabla_{\theta} ( \eta(x) c \sigma (w x + b) )  ] \bigg{)}, \mu_t^N \bigg{\rangle} \notag \\
&\quad+ \bigg{\langle} \partial_y^i [ \nabla_{\theta} ( \eta(y) c \sigma (w y + b) ) ]  \cdot  D_{Q(t)}[ \nabla_{\theta} ( \eta(x) c \sigma (w x + b) )  ], \mu_t^N - \mu_0^N \bigg{\rangle} \notag \\
&\quad+ \bigg{\langle} \partial_y^i  [ \nabla_{\theta}  ( \eta(y) c \sigma (w y + b) ) ] \cdot D_{Q(t)}[ \nabla_{\theta}  ( \eta(x) c \sigma (w x + b) ) ], \mu_0^N - \mu_0 \bigg{\rangle}.
\label{UdiffDecomposition}
\end{align}

The second term on the RHS of equation (\ref{UdiffDecomposition}) converges to zero uniformly in $t,x,y \in [0,T] \times \Omega \times \Omega$ at a rate $\mathcal{O}(N^{\beta -1 + \gamma})$ due to (\ref{NTKbound0}). 

Using a Taylor expansion as well as the bounds for trained parameters (\ref{NTKbound0}) and (\ref{TrainedParameterCBound}), the first term on the RHS of (\ref{UdiffDecomposition}) can be bounded as a function of $| Q^N(x; \theta(t)) - Q(t,x) | + \sum_{m=1}^n  | Q^N_{x_m}(x; \theta(t)) - Q_{x_m}(t,x) | $:
\begin{align}
&  | \bigg{\langle} \partial_y^i [ \nabla_{\theta} ( \eta(y) c \sigma (w y + b) ) ] \cdot \bigg{(}  D_{Q^N(t)}[ \nabla_{\theta} ( \eta(x) c \sigma (w x + b) ) ]  - D_{Q(t)}[ \nabla_{\theta} ( \eta(x) c \sigma (w x + b) )  ] \bigg{)}, \mu_t^N \bigg{\rangle}  | \notag \\
&\leq  C_1 \big{(}  | Q^N(x; \theta(t)) - Q(t,x) | \nonumber\\
&\quad + \sum_{m=1}^n | Q^N_{x_m}(x; \theta(t)) - Q_{x_m}(t,x) | \big{)} \times  \frac{1}{N} \sum_{i'=1}^N ( 1 + \norm{w^{i'}_0}_1 )^4. \notag 
\end{align}

Recall that $w^i_0$ are i.i.d. random variables. By the strong law of large numbers,
\begin{eqnarray}
\frac{1}{N} \sum_{i=1}^N ( 1 + \norm{w^i_0}_1 )^4 \overset{a.s.} \rightarrow \mathbb{E} [( 1 + \norm{w^i_0}_1)^4 ] = C_3,\notag
\end{eqnarray}
as $N \rightarrow \infty$. 

The third term in (\ref{UdiffDecomposition}) can be bounded by: 
\begin{eqnarray}
&\phantom{.}& \sup_{t,x,y \in [0,T] \times \Omega^2} | \frac{1}{N} \sum_{i=1}^N [q(c_0^i, w_0^i, b_0^i ,t, x, y)  - \mathbb{E}_{\mu_0}[ q(c_0^i, w_0^i, b_0^i ,t, x, y)] ] |,
\label{ThirdLineExpanded}
\end{eqnarray}
where $q(c,w,b,t,x,y) = \partial_{y}^i  [ \nabla_{\theta} ( \eta(y) c \sigma (w \cdot y + b) ) ] \cdot   D_{Q(t)}[ \nabla_{\theta} ( \eta(x) c \sigma (w \cdot x + b) ) ]$. For a $\delta > 0$, construct a finite sub-cover $U_k = \{ (t,x,y): \norm{ (t,x,y) - (t_k,x_k,y_k) }_1 \leq \delta \}$ with $k = 0, 1, \ldots, K$ for the compact set $[0,T] \times \Omega^2$. Define $A^N(t,x,y) = \frac{1}{N} \sum_{i=1}^N [q(c_0^i, w_0^i, b_0^i ,t, x, y)  - \mathbb{E}_{\mu_0}[ q(c_0^i, w_0^i, b_0^i ,t, x, y)] ]$. Note that the quantity  $\{ A^N(t_k, x_k, y_k) \}_{ k = 0, 1, \ldots, K}$ converges almost surely to zero. We begin our calculations by considering the bound 
\begin{align}
&|A^N(t,x,y) - A^N(t_k, x_k, y_k)| \leq \frac{1}{N} \sum_{i=1}^N \bigg{(} |q(c_0^i, w_0^i, b_0^i ,t, x, y) - q(c_0^i, w_0^i, b_0^i ,t_k, x_k, y_k) | \notag \\
&\qquad+ \mathbb{E}_{\mu_0}[ |q(c_0^i, w_0^i, b_0^i ,t, x, y) - q(c_0^i, w_0^i, b_0^i ,t_k, x_k, y_k) | ] \bigg{)}. 
\label{CalculationA1}
\end{align}

Using a Taylor expansion, the fact that $\sigma$ has at least three bounded derivatives, the bound on $c_0^i$, and the uniform bounds on the time derivatives of $Q$ and $Q_x$ from Lemma \ref{QTimederivativeUniformBound}:
\begin{align}
| q(c_0^i, w_0^i, b_0^i ,t, x, y) - q(c_0^i, w_0^i, b_0^i ,t_k, x_k, y_k) | &\leq C ( 1 + \norm{w_0^i}_1 )^5 \times ( | y - y_k |  + | x - x_k | \notag \\
&\quad+ |t - t_k | ).
\label{CalculationA2}
\end{align}

Combining (\ref{CalculationA1}) with (\ref{CalculationA2}) and setting $f(w) = (1 + \norm{w}_1)^5$,
\begin{align}
\sup_{t,x,y \in U_k} | A^N(t,x,y) - A^N(t_k, x_k, y_k) | &\leq \frac{C_1 \delta}{N} \sum_{j=1}^N f(w^j_0) + C_2 \delta. \notag
\end{align}
By the strong law of large numbers, the RHS converges almost surely to $C_1 \delta \mathbb{E}[ f(w^j_0) ] + C_2 \delta$. Consequently, 
\begin{eqnarray}
\limsup_{N \rightarrow \infty} \sup_{t,x,y \in [0,T] \times \Omega^2} | A^N(t,x,y)| \overset{a.s.} \leq C \delta.
\end{eqnarray}

Since $\delta$ was arbitrary, the above inequality implies that $\lim_{N \rightarrow \infty} \sup_{t,x,y \in [0,T] \times \Omega^2} | A^N(t,x,y)| \overset{a.s.} = 0$. 
Consequently, we have calculated the following bound
\begin{align} \label{UniformInTimeBound22}
\sup_{x,y \in  \Omega^2} | \partial_{y}^i [ U_t^N(y,x) - U_t(y,x) ] | &\leq K^{N,1}(t) + K^{N,2} \big{(} | Q^N(x; \theta(t)) - Q(t,x) | \notag \\
&+  \sum_{m=1}^n | Q^N_{x_m}(x; \theta(t)) - Q_{x_m}(t,x) | \big{)},
\end{align}
where  $\sup_{t \in [0,T] } K^{N,1}(t) \overset{a.s.} \rightarrow 0$, $K^{N,2} =  \frac{C_1}{N} \sum_{i=1}^N ( 1 + \norm{w^i_0}_1 )^4$, and $\lim_{N \rightarrow \infty}  K^{N,2} \overset{a.s.} = C$, where $C$
is some finite constant. 
\end{proof}

\begin{theorem} \label{ConvergenceNtoLimitPDE}
$\partial_{x}^i Q^N(t,x)$ converges almost surely to $\partial_{x}^i  Q(t,x)$ on $t \in [0,T]$ as $N \rightarrow \infty$ for $|i| \leq 2$:
\begin{eqnarray}
\sup_{t \in [0,T] } \sup_{x \in \Omega} | \partial_{x}^i Q^N(t,x) - \partial_{x}^i Q(t,x) | \overset{a.s.} \rightarrow 0,\notag
\end{eqnarray}
where $T < \infty$ is arbitrary. As a direct consequence, as $N \rightarrow \infty$,
\begin{eqnarray}
\sup_{t \in [0,T] } \norm{ Q^N(t) -  Q(t) }_{H^2} \overset{a.s.} \rightarrow 0.\notag
\end{eqnarray}

\end{theorem}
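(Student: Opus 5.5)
We will run a Gronwall argument on the quantity
\[
Z^N(t) = \sup_{x\in\Omega}\sum_{|i|\le 2} |\partial_x^i Q^N(t,x) - \partial_x^i Q(t,x)|,
\]
working on the probability-one event where all the almost sure statements of Lemmas \ref{MarcinkiewiczZygmundLemma}, \ref{PreLimitPDEResidualBoundLemma}, \ref{ClippingFunctionVanishesAlmostSurely} and \ref{UniformInTimeBound2Lemma} hold.

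\textbf{Integral representation of the difference.} Subtracting (\ref{QlimitEvolution}) from (\ref{PrelimitEvolutionB}), differentiating in $x$ up to second order, and integrating in time gives, for $|i|\le 2$,
\begin{align*}
\partial_x^i Q^N(t,x) - \partial_x^i Q(t,x) &= \partial_x^i Q^N(0,x) - \alpha\int_0^t\int_\Omega \partial_x^i\big[U^N_s(x,y)-U_{Q(s)}(x,y)\big] R^N_s(y)\,\mu(dy)\,ds \\
&\quad - \alpha\int_0^t\int_\Omega \partial_x^i U_{Q(s)}(x,y)\big(R^N_s(y)-R(s,y)\big)\mu(dy)\,ds \\
&\quad + \int_0^t \partial_x^i\big(L^N_s - B^N_s\big)(x)\,ds,
\end{align*}
where we use $Q(0)=0$.

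\textbf{Bounding each term.} We treat the four terms in turn.
\begin{itemize}
\item The initial term is $A_i^N(x)$, which tends to zero uniformly in $x$ almost surely by Lemma \ref{MarcinkiewiczZygmundLemma}.
\item The clipping term tends to zero uniformly on $[0,T]\times\Omega$ almost surely by Lemma \ref{ClippingFunctionVanishesAlmostSurely}, after multiplying by $T$.
\item For the first kernel term, Lemma \ref{UniformInTimeBound2Lemma} bounds $\sup_{x,y}|\partial^i_x(U^N_s-U_{Q(s)})|$ by $K^{N,1}(s)+K^{N,2}Z^N(s)$. We multiply by $\sup_{s,y}|R^N_s(y)|$, which is almost surely eventually bounded by $C_2$ (Lemma \ref{PreLimitPDEResidualBoundLemma}).
\item For the second kernel term, the derivatives of $U_{Q(s)}$ are uniformly bounded (as noted in Lemma \ref{Hestimate0apriori}). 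It remains to control $|R^N_s(y)-R(s,y)|$. This difference equals $\mathcal{L}(Q^N-Q)$ plus $f(Q^N,Q^N_x)-f(Q,Q_x)$. The smooth coefficients of $\mathcal{L}$ and the bounded derivatives of $f$ give, pointwise,
\[
|R^N_s(y)-R(s,y)|\le C\sum_{|j|\le2}|\partial^j Q^N(s,y)-\partial^j Q(s,y)|\le C Z^N(s).
\]
\end{itemize}

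\textbf{Closing the argument.} Summing over $|i|\le2$ and taking the supremum in $x$ yields
\[
Z^N(t)\le \epsilon^N + C\int_0^t\big(1+K^{N,2}\big)Z^N(s)\,ds,
\]
where
\[
\epsilon^N=\sup_x\sum_i|A^N_i(x)|+C\,T\sup_{s\le T}K^{N,1}(s)+T\sup_{s,x}\sum_i|\partial^i(L^N_s-B^N_s)(x)|
\]
tends to zero almost surely. Since $K^{N,2}\to C$ almost surely, Gronwall's inequality gives
\[
\sup_{t\le T}Z^N(t)\le \epsilon^N e^{CT}\to0 \quad\text{almost surely}.
\]
The $H^2$ statement then follows because $\Omega$ is compact, so the sup norm of derivatives up to order two controls the $H^2$ norm.

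\textbf{Main obstacle.} The delicate step is justifying that the random quantities (the residual bound $C_2$ and $K^{N,2}$) can be treated as deterministic constants on a full-measure event for $N$ large. We handle this by fixing $\omega$ in the intersection of the countably many almost sure events and taking $N\ge N_0(\omega)$. A second point to check is that the residual $R^N_s$, whose definition involves second derivatives of $Q^N$, is controlled pointwise by $Z^N$. This holds because $\mathcal{L}$ has smooth coefficients and the nonlinearity $f$ depends only on $(Q,Q_x)$.
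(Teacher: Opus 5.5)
Your argument is correct. It is the paper's Gronwall argument on $\sup_x\sum_{|i|\le 2}|\partial_x^i(Q^N-Q)|$, built on the same lemmas. The one real difference is how you split the product.

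You write $U^N_sR^N_s-U_{Q(s)}R(s)=(U^N_s-U_{Q(s)})R^N_s+U_{Q(s)}(R^N_s-R(s))$. The paper uses the mirror splitting $(U_{Q(s)}-U^N_s)R(s)+U^N_s(R(s)-R^N_s)$. The two choices lead to different auxiliary bounds:
\begin{itemize}
\item The paper bounds the kernel-difference term using the deterministic bound (\ref{UniformVBound0}) on the limit $Q$ over $[0,T]$, so $R(s,\cdot)$ is bounded because $g$ is. For the residual-difference term it needs an extra almost-sure bound $K^{N,3}$ on $\sup_{t,x,y}|\partial_x^iU^N_t(x,y)|$, obtained from (\ref{NTKbound0}) and the strong law of large numbers.
\item You bound $\sup_{s,y}|R^N_s(y)|$ with Lemma \ref{PreLimitPDEResidualBoundLemma}. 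You then use a deterministic bound on $\partial_x^iU_Q$, which holds uniformly in $Q$ because the $x$-derivatives act only on the factor without $D_Q$, and $f_v,f_w$ are bounded.
\end{itemize}
Each route needs one almost-sure boundedness fact beyond Lemma \ref{UniformInTimeBound2Lemma}. Yours reuses a lemma the paper already proves, and needs anyway for Lemma \ref{ClippingFunctionVanishesAlmostSurely}. That makes your version marginally more self-contained than the paper's, whose $K^{N,3}$ bound is only sketched.
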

 
\begin{proof}
For sufficiently large $N$, the difference $V^N(t,x) =  Q(t,x) - Q^N(t,x)$ satisfies
\begin{align}
\frac{d V^N}{d t}(t,x) &= - \alpha  \int_{\Omega} \bigg{(}  U_{Q(t)}(x,y)  \big{(} \mathcal{A}_{Q(t) } Q(t, y ) - g(y) \big{)} \notag \\
&- U_t^N(x,y) \big{(} \mathcal{A}_{Q^N(t) } Q^N(y; \theta(t) ) - g(y) \big{)} \bigg{)}  \mu(dy)  + (L_t^N(x) - B_t^N(x) ) \notag \\
&= - \alpha  \int_{\Omega} (U_{Q(t)}(x,y) - U_t^N(x,y)  )  \big{(} \mathcal{A}_{Q(t ) } Q(t, y ) - g(y) \big{)}  \mu(dy) \notag \\
&\quad- \alpha  \int_{\Omega}  U_t^N(x,y)   \big{(} \mathcal{A}_{Q(t) } Q(t, y ) - g(y) - ( \mathcal{A}_{Q^N(t ) } Q^N(t, y ) - g(y) )  \big{)}  \mu(dy) \notag\\
&\quad + (L_t^N(x) - B_t^N(x) ). \notag 
\end{align}

Define $E^N(t,x) =  \sum_{|i| \leq 2} | \partial_{x}^i[  Q(t,x) - Q^N(t,x) ]| $. Differentiating on both sides of the equation, integrating over time, and using a Taylor expansion yields
\begin{align}
&| \partial_{x}^i[  Q(t,x) - Q^N(t,x) ]| \leq |\partial_{x}^i [  Q(0,x) - Q^N(0,x) ]| \notag \\
&\quad+ \alpha \int_0^t  \int_{\Omega} | \partial_{x}^i (U_{Q(s)}(x,y) - U_s^N(x,y)  ) |  \times | \mathcal{A}_{Q(s ) } Q(s, y ) - g(y) |  \mu(dy) ds \notag \\
&\quad+ \alpha  \int_0^t \int_{\Omega} | \partial_{x}^i U_s^N(x,y)  | \times \big{|}  \mathcal{A}_{Q(s) } Q(s, y ) - \mathcal{A}_{Q^N(s ) } Q^N(s, y )  \big{|}  \mu(dy) ds \notag \\
&\quad+ \int_0^t | \partial_{x}^i L_s^N(x) - \partial_{x}^i  B_s^N(x)  | ds.\notag
\end{align}

Summing over $|i| \leq 2$ and using the bound from equation (\ref{UniformVBound0}):
\begin{align}
E^N(t,x) &\leq E^N(0,x) + \sum_{|i| \leq 2} \bigg{(} C \int_0^t  \int_{\Omega}  | \partial_{x}^i (U_{Q(s)}(x,y) - U_s^N(x,y)  ) | \mu(dy) ds \notag \\
&+ C \int_0^t \int_{\Omega} | \partial_x^i U_s^N(x,y)  | \times E^N(s,y) \mu(dy) ds \notag \\
&+ \int_0^t | \partial_{x}^i L_s^N(x) - \partial_{x}^i  B_s^N(x)  | ds \bigg{)}. \notag 
\end{align}

Taking the supremum over the RHS:
\begin{align}
E^N(t,x) &\leq \sup_{x \in \Omega} E^N(0,x) \notag\\
&\quad+ \sum_{|i| \leq 2} \bigg{(} C \int_0^t  \int_{\Omega}   \sup_{x,y \in \Omega^2} | \partial_{x}^i (U_{Q(s)}(x,y) - U_s^N(x,y)  ) | \mu(dy) ds \notag \\
&\quad+ C \int_0^t \int_{\Omega} \sup_{x,y \in \Omega^2}| \partial_{x}^i U_s^N(x,y)  | \times \sup_{y \in \Omega} E^N(s,y) \mu(dy) ds \notag \\
&\quad+  C \sup_{t,x \in [0,T] \times \Omega} | \partial_{x}^i L_t^N(x) - \partial_{x}^i  B_t^N(x)  | \bigg{)}.
\end{align}

Using the bound (\ref{UniformInTimeBound2}), the above equation can be simplified to give:
\begin{align}
E^N(t,x) &\leq \sup_{x \in \Omega} E^N(0,x) + C \sup_{t \in [0,T]} K^{N,1}(t) \notag \\
&\quad+ C \int_0^t  ( \sum_{|i| \leq 2}  \sup_{x,y \in \Omega^2} |  \partial_{x}^i U_s^N(x,y)  | +K^{N,2} ) \times \sup_{y \in \Omega} E^N(s,y) ds \notag \\
&\quad+ C \sum_{|i| \leq 2} \sup_{t,x \in [0,T] \times \Omega} | \partial_{x}^i L_t^N(x) - \partial_{x}^i  B_t^N(x)  |, \notag 
\end{align}
where $K^{N,1}(t)$ and $K^{N,2}$ were defined in Lemma \ref{UniformInTimeBound2Lemma}. 

Since the above equation holds for every $x \in \Omega$, we can take the supremum of the LHS:
\begin{align}
\sup_{x \in \Omega} E^N(t,x) &\leq  \sup_{x \in \Omega} E^N(0,x) + C \sup_{t \in [0,T]} K^{N,1}(t) \notag \\
&+ C \int_0^t  \left( \sum_{|i| \leq 2}  \sup_{s, x,y \in [0,T] \times \Omega^2} |  \partial_{x}^i U_s^N(x,y)  | +K^{N,2} \right) \times \sup_{y \in \Omega} E^N(s,y) ds \notag \\
&+ K^{N,4},
\label{supremumEbound}
\end{align}
where $K^{N,4} = C \sum_{|i| \leq 2} \sup_{t,x \in [0,T] \times \Omega} | \partial_{x}^i L_t^N(x) - \partial_{x}^i  B_t^N(x)  |$. The term $K^{N,4}$ was previously studied in Lemma \ref{ClippingFunctionVanishesAlmostSurely} and converges almost surely to zero as $N \rightarrow \infty$. Due to the bound (\ref{NTKbound0}) and the assumptions on the initial parameters, we can prove that $K^{N,3} = \sum_{|i| \leq 2}  \sup_{t, x,y \in [0,T] \times \Omega^2} |  \partial_{x}^i U_t^N(x,y)  | $ is almost surely bounded due to the strong law of large numbers. That is, there exists a finite constant $C_3$ such that  
\begin{eqnarray}
\mathbb{P}[\limsup_{N \rightarrow \infty } \sum_{|i| \leq 2}  \sup_{t, x,y \in [0,T] \times \Omega^2}   |  \partial_{x}^i U_t^N(x,y)  | \leq C_3 ] = 1. \notag
\end{eqnarray}

Recall that 
\begin{eqnarray}
E^N(0,x) = \sum_{|i| \leq 2} | \partial_{x}^i [ \eta(x) N^{-\beta} \sum_{j=1}^N c^j_0 \sigma(w^j_0 \cdot x + b^j_0) ] |.\notag
\end{eqnarray}
Therefore, due to Lemma \ref{MarcinkiewiczZygmundLemma}, $\sup_{x \in \Omega} E^N(0,x) \overset{a.s.} \rightarrow 0$ as $N \rightarrow \infty$.

Applying Gronwall's inequality to (\ref{supremumEbound}) yields
\begin{align}
\sup_{x \in \Omega} E^N(t,x) &\leq  \sup_{x \in \Omega} E^N(0,x) + K^{N,4} + C \sup_{t \in [0,T] } K^{N,1}(t) \notag \\
&\quad+ t \bigg{(} \sup_{x \in \Omega} E^N(0,x) + K^{N,4} + C \sup_{t \in [0,T] } K^{N,1}(t) \bigg{)} \notag \\
&\quad\times (K^{N,2} + K^{N,3} ) 
\times \exp \bigg{(} (K^{N,2} + K^{N,3} ) t \bigg{)}. \notag
\end{align}

Consequently, for deterministic finite constants $C_2$ and $C_3$,
\begin{align}
\sup_{x \in \Omega} E^N(0,x) &\overset{a.s.} \rightarrow 0,\notag\\
K^{N,4} &\overset{a.s.} \rightarrow 0,\notag\\ \sup_{t \in [0,T]} K^{N,1}(t) &\overset{a.s.} \rightarrow 0\notag\\ 
\mathbb{P}[ \lim_{N \rightarrow \infty} K^{N,2}  = C_2] &= 1, \notag\\ 
\mathbb{P}[ \limsup_{N \rightarrow \infty} K^{N,3} \leq C_3 ] &= 1,
\notag
\end{align} and therefore
\begin{eqnarray}
\sup_{t \in [0,T] } \sup_{x \in \Omega} E^N(t,x) \overset{a.s.} \rightarrow 0,\notag
\end{eqnarray}
as $N \rightarrow \infty$. This concludes the proof of the theorem.

\end{proof}

\section{Global convergence of finite neural network to PDE solution for finite $N$ as $t \rightarrow \infty$}\label{S:GlobalConvergenceFinitN_large_t}

The convergence of the finite neural network $Q^N(t,x)$ to the solution $Q(t,x)$ of the limit PDE in Theorem \ref{ConvergenceNtoLimitPDE} is not uniform in time $t \geq 0$. Due to this lack of uniform convergence, it is often difficult in typical mean-field analyses to establish uniform-in-time error bounds on the pre-limit process $Q^N(t,x)$. However, in the following analysis, we are able to prove uniform-in-time error bounds on the objective function $J^N_t = J^N(\theta(t))$ for the PDE residual of the finite neural network by leveraging the monotonicity of $J^N_t$.

The PDE residual is monotonically decreasing since
\begin{align}
J^N_t &= J^N_s - \alpha^N \int_s^t \nabla_{\theta} J^N(\theta(\tau)) \cdot G^N(\theta(\tau)) d \tau \notag \\
&= J^N_s - \alpha^N \int_s^t N^{-2 \beta} \sum_{i=1}^N N^{\beta} \nabla_{\theta_i} J^N(\theta(\tau)) \cdot \phi^N \big{(} N^{\beta} \nabla_{\theta_i} J^N(\theta(\tau)) \big{)} d \tau \notag \\
&\leq J^N_s - \alpha^N \int_s^t N^{-2 \beta} \sum_{i=1}^N \phi^N \big{(} N^{\beta} \nabla_{\theta_i} J^N(\theta(\tau)) \big{)} \cdot \phi^N \big{(} N^{\beta} \nabla_{\theta_i} J^N(\theta(\tau)) \big{)} d \tau \notag \\
&\leq J^N_s.\notag 
\end{align}

Then, for any $0 \leq s \leq t$,
\begin{eqnarray}
\mathbb{E}[ J_t^N] \leq \mathbb{E}[ J_s^N]. \notag
\end{eqnarray}
Due to Assumption \ref{NeuralNetworkAssumptions}, $\mathbb{E}[ J_0^N] \leq C < \infty$. Since $J_t^N \geq 0$, $\mathbb{E}[ J_t^N] $ converges to a limit as $t \rightarrow \infty$ due to the monotone convergence theorem. 

Theorem \ref{ConvergenceNtoLimitPDE} directly implies that, for each $t \geq 0$, $J_t^N$ converges almost surely to $J(t)$ as $N \rightarrow \infty$. Due to Assumption \ref{NeuralNetworkAssumptions}, $\mathbb{E}[ (J_0^N)^2] \leq C < \infty$. Then, due to monotonicity, $\mathbb{E}[ (J_t^N)^2] \leq C$, which implies that $J_t^N$ is uniformly integrable. Consequently, for each $t \geq 0$, $\mathbb{E}[ J_t^N]$ converges to $J(t)$. Theorem \ref{MainTheoremPDEResidualGlobalConvergence} has proven that $\lim_{t \rightarrow \infty} J(t) = 0$. Therefore, for any $\epsilon > 0$, there exists a $t \geq 0$ such that $J(t) \leq \frac{\epsilon}{2}$ and an $N_0$ such that $| \mathbb{E}[ J_t^N] - J(t) |  \leq \frac{\epsilon}{2}$ for any $N \geq N_0$. Consequently, we have that
\begin{align}
 \mathbb{E}[ J_t^N]  &= |  \mathbb{E}[ J_t^N] - J(t) + J(t) | \notag \\
 &\leq |  \mathbb{E}[ J_t^N] - J(t) | + |J(t) | \notag \\
 &\leq \epsilon. 
\end{align}

Furthermore, due to monotonicity, 
\begin{eqnarray}
 \mathbb{E}[ J_s^N] \leq \epsilon,\notag
\end{eqnarray}
for all $s \geq t$ and $N \geq N_0$. That is, we can prove a uniform error bound for the neural network with a finite number of hidden units. 
\begin{theorem} \label{FiniteNetworkPDEresidualUniformErrorBound}
For any $\epsilon > 0$, there exists a $t \geq 0$ and an $N_0$ such that
\begin{eqnarray}
 \mathbb{E}[ J_s^N] \leq \epsilon,\notag
\end{eqnarray}
for all $s \geq t$ and $N \geq N_0$. Consequently, we obtain that
\begin{eqnarray}
\lim_{N \rightarrow \infty} \lim_{s \rightarrow \infty}  \mathbb{E}[ J_s^N]  = 0. \notag
\end{eqnarray}

\end{theorem}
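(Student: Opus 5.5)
The plan is to combine three ingredients already in place: the pathwise monotonicity of $J^N_t$ in $t$ for each fixed $N$ (shown just before the statement, from the clipped gradient flow and the properties of $\phi^N$ in Definition \ref{psi}); the convergence $\mathbb{E}[J^N_t]\to J(t)$ as $N\to\infty$ for each fixed $t$; and $J(t)\to 0$ from Theorem \ref{MainTheoremPDEResidualGlobalConvergence}.

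\textbf{Convergence of expectations.} First I will make the second ingredient rigorous. By Theorem \ref{ConvergenceNtoLimitPDE}, $\partial^i_x Q^N(t,\cdot)\to\partial^i_x Q(t,\cdot)$ uniformly on $\Omega$, almost surely, for $|i|\le 2$. Because $f$ is bounded and Lipschitz and the coefficients of $\mathcal{L}$ are smooth, this gives $R^N_t\to R(t)$ uniformly on $\Omega$, almost surely. Hence $J^N_t\to J(t)$ almost surely for each $t$.

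To upgrade this to convergence of expectations I need uniform integrability in $N$. Monotonicity gives $J^N_t\le J^N_0$, so it suffices to bound $\sup_N\mathbb{E}[(J^N_0)^2]$. Write $J^N_0\le \|A^N\|^2+C$, where $A^N$ collects the terms $\eta N^{-\beta}\sum_j c^j_0\sigma(\cdot)$ and their derivatives up to order two, and $f,g$ are bounded. The Marcinkiewicz--Zygmund moment bound from Lemma \ref{MarcinkiewiczZygmundLemma} gives $\mathbb{E}|A^N_i(x)|^4\le C N^{2(1-2\beta)}\le C$, uniformly in $x$. Integrating over $\mu$ then yields $\mathbb{E}[(J^N_0)^2]\le C$. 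By Vitali's theorem, $\mathbb{E}[J^N_t]\to J(t)$.

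\textbf{Fixing $t$ and $N_0$.} Given $\epsilon>0$, Theorem \ref{MainTheoremPDEResidualGlobalConvergence} provides $t$ with $J(t)\le\epsilon/2$. Next choose $N_0$ so that $|\mathbb{E}[J^N_t]-J(t)|\le\epsilon/2$ for all $N\ge N_0$. This gives $\mathbb{E}[J^N_t]\le\epsilon$.

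\textbf{Extending to all later times.} For any $s\ge t$, pathwise monotonicity gives $J^N_s\le J^N_t$, so $\mathbb{E}[J^N_s]\le\epsilon$ for all $s\ge t$ and all $N\ge N_0$. The key point is that monotonicity holds for every finite $N$. The only place $N\to\infty$ enters is at the single time $t$, so no convergence uniform in time is required.

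\textbf{Limit statement.} For each $N$, $\mathbb{E}[J^N_s]$ is nonincreasing in $s$ and nonnegative, so $\ell_N=\lim_{s\to\infty}\mathbb{E}[J^N_s]$ exists. The previous step gives $\ell_N\le\epsilon$ for all $N\ge N_0$. Since $\epsilon$ is arbitrary, $\lim_N\ell_N=0$.

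\textbf{Main obstacle.} The delicate step is the uniform-in-$N$ second-moment bound on $J^N_0$. It relies on the scaling $\beta>1/2$ together with the bounded moments of $w$ and the boundedness of $c$. Everything else is soft.
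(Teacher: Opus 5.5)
Your proposal is correct and follows the paper's argument essentially step for step. You use pathwise monotonicity of $J^N_t$, upgrade the almost sure convergence $J^N_t\to J(t)$ from Theorem \ref{ConvergenceNtoLimitPDE} to convergence of expectations via a uniform second-moment bound on $J^N_0$ propagated by monotonicity, then fix $t$ and $N_0$ and extend to all $s\ge t$ by monotonicity, with your explicit Marcinkiewicz--Zygmund derivation of $\sup_N\mathbb{E}[(J^N_0)^2]<\infty$ (using $\beta>1/2$) filling in a step the paper only attributes to Assumption \ref{NeuralNetworkAssumptions}.
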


\section*{Acknowledgements}
KS and JS were partially supported by the project ``DMS-EPSRC: Asymptotic Analysis of Online Training Algorithms in Machine Learning: Recurrent, Graphical, and Deep Neural Networks" (NSF DMS-2311500).

\end{document}